\newcommand{\real}{\mathbb{R}}
\newcommand{\T}{\intercal}
\newcommand{\seq}[1]{[#1]}
\newcommand{\prob}{\mathbb{P}}
\newcommand{\E}{\mathbb{E}}
\newcommand{\calO}{\mathcal{O}}
\newcommand{\sym}{\mathbb{S}^d_+}
\newcommand{\bX}{\mathbf{X}}
\newcommand{\bZ}{\mathbf{Z}}
\newcommand{\bI}{\mathbf{I}}
\newcommand{\bM}{\mathbf{M}}
\newcommand{\by}{\mathbf{y}}
\newcommand{\bz}{\mathbf{z}}
\newcommand{\be}{\mathbf{e}}
\newcommand{\bg}{\mathbf{g}}
\newcommand{\bH}{\mathbf{H}}
\newcommand{\bHhat}{\hat{\mathbf{H}}}
\newcommand{\bzero}{\mathbf{0}}
\newcommand{\balpha}{\mathbf{\mu}}
\newcommand{\tw}{\tilde{w}}
\newcommand{\tZ}{\tilde{\bZ}}
\newcommand{\inner}[2]{\langle #1, #2 \rangle}
\newcommand{\diag}{{\rm{diag}}}
\newcommand{\eig}{{\rm{eig}}}
\newcommand{\diff}[2]{\frac{\partial{#1}}{\partial{#2}}}
\newcommand{\inertia}{{\rm{In}}}
\newcommand{\tr}{{\rm{trace}}}
\newtheorem{theorem}{Theorem}
\newtheorem{corollary}{Corollary}
\newtheorem{lemma}{Lemma}
\newtheorem{assumption}{Assumption}
\newtheorem{proposition}{Proposition}
\newtheorem{definition}{Definition}
\title{Fair Sparse Regression with Clustering: An Invex Relaxation for a Combinatorial Problem}
\date{}
\author{%
	Adarsh Barik \\
	Department of Computer Science\\
	Purdue University\\
	West Lafayette, Indiana, USA\\
	\texttt{abarik@purdue.edu} \\
	\and
	Jean Honorio \\
	Department of Computer Science \\
	Purdue University \\
	West Lafayette, Indiana, USA\\
	\texttt{jhonorio@purdue.edu} \\
}
\begin{document}

\maketitle

\begin{abstract}
	In this paper, we study the problem of fair sparse regression on a biased dataset where bias depends upon a hidden binary attribute. The presence of a hidden attribute adds an extra layer of complexity to the problem by combining sparse regression and clustering with unknown binary labels. The corresponding optimization problem is combinatorial, but we propose a novel relaxation of it as an \emph{invex} optimization problem. To the best of our knowledge, this is the first invex relaxation for a combinatorial problem. We show that the inclusion of the debiasing/fairness constraint in our model has no adverse effect on the performance. Rather, it enables the recovery of the hidden attribute. The support of our recovered regression parameter vector matches exactly with the true parameter vector. Moreover, we simultaneously solve the clustering problem by recovering the exact value of the hidden attribute for each sample. Our method uses carefully constructed primal dual witnesses to provide theoretical guarantees for the combinatorial problem. To that end, we show that the sample complexity of our method is logarithmic in terms of the dimension of the regression parameter vector.
\end{abstract}

\section{Introduction}
\label{sec:introduction}

In modern times, machine learning algorithms are used in a wide variety of applications, many of which are decision making processes such as hiring~\cite{hoffman2018discretion}, predicting human behavior~\cite{subrahmanian2017predicting}, COMPAS (Correctional Offender Management Profiling for Alternative Sanctions) risk assessment~\cite{brennan2009evaluating}, among others. These decisions have large impacts on society~\cite{kleinberg2018human}. Consequently, researchers have shown interest in developing methods that can mitigate unfair decisions and avoid bias amplification. Several fair algorithms have been proposed for machine learning problems such as regression~\cite{agarwal2019fair,berk2017convex,calders2013controlling},  classification~\cite{agarwal2018reductions,donini2018empirical,dwork2012fairness,feldman2015certifying,hardt2016equality,huang2019stable,pedreshi2008discrimination,zafar2019fairness,zemel2013learning} and clustering~\cite{backurs2019scalable,bera2019fair,chen2019proportionally,chierichetti2017fair,huang2019coresets}. A common thread in the above literature is that performance is only viewed in terms of risks, e.g., misclassification rate, false positive rate, false negative rate, mean squared error.

In the literature, fairness is discussed in the context of discrimination based on membership to a particular group (e.g. race, religion, gender) which is considered a sensitive attribute. Fairness is generally modeled explicitly by adding a fairness constraint or implicitly by incorporating it in the model itself. There have been several notions of fairness studied in linear regression. \cite{berk2017convex} proposed notions of individual fairness and group fairness, and modeled them as penalty functions. \cite{calders2013controlling} proposed the fairness notions of equal means and balanced residuals by modeling them as explicit constraints. \cite{agarwal2019fair}, \cite{fitzsimons2019general} and \cite{chzhen2020fair} studied demographic parity. While \cite{agarwal2019fair}, \cite{fitzsimons2019general} modeled it as an explicit constraint, \cite{chzhen2020fair} included it implicitly in their proposed model.

All the above work assume access to the sensitive attribute in the training samples and provide a framework which are inherently fair. Our work fundamentally differs from these work as we do not assume access to the sensitive attribute. Without knowing the sensitive attribute, it becomes difficult to ascertain bias, even for linear regression. In this work, we focus on identifying unfairly treated members/samples. This adds an extra layer of complexity to linear regression. We solve the linear regression problem while simultaneously solving a clustering problem where we identify two clusters -- one which is positively biased and the other which is negatively biased. Table~\ref{tab:fairness} shows a consolidated comparison of our work with the existing literature. 

Once one identifies bias (positive or negative) for each sample, one could perform debiasing which would lead to the fairness notion of equal means~\cite{calders2013controlling} among the two groups (See Figure \ref{fig:debias}). It should be noted that identifying groups with positive or negative bias may not be same as identifying the sensitive attribute. The reason is that there may be multiple attributes that are highly correlated with the sensitive attribute. In such a situation,
these correlated attributes can facilitate indirect discrimination even if the sensitive attribute is identified and removed. This is called the red-lining effect~\cite{calders2010three}. Our model avoids this by directly identifying biased groups. 	
\begin{table}
	\caption{\label{tab:fairness}Comparison to prior work. Notation: $s$ is the number of non-zero entries in the regression parameter vector and $d$ is its dimension. The terms independent of $s$ and $d$ are not shown in the order notation.}
	\begin{tabular}{p{4.5cm}lp{3cm}r}
		\toprule
		Paper & Hidden sensitive attribute & Modeling type & Sample complexity \\
		\midrule
\cite{calders2013controlling,agarwal2019fair,fitzsimons2019general} & No & Explicit constraint & Not provided \\
		\cite{berk2017convex} & No & Penalty function & Not provided \\
		\cite{chzhen2020fair} & No & Implicit & Not provided \\
		\textbf{Our paper} & \textbf{Yes} & \textbf{Implicit} & $\Omega(s^3 \log d)$
	\end{tabular}
\end{table}
\begin{figure*}[!ht]
	\centering
	\begin{subfigure}{.33\textwidth}
		\centering
		\includegraphics[width=\linewidth]{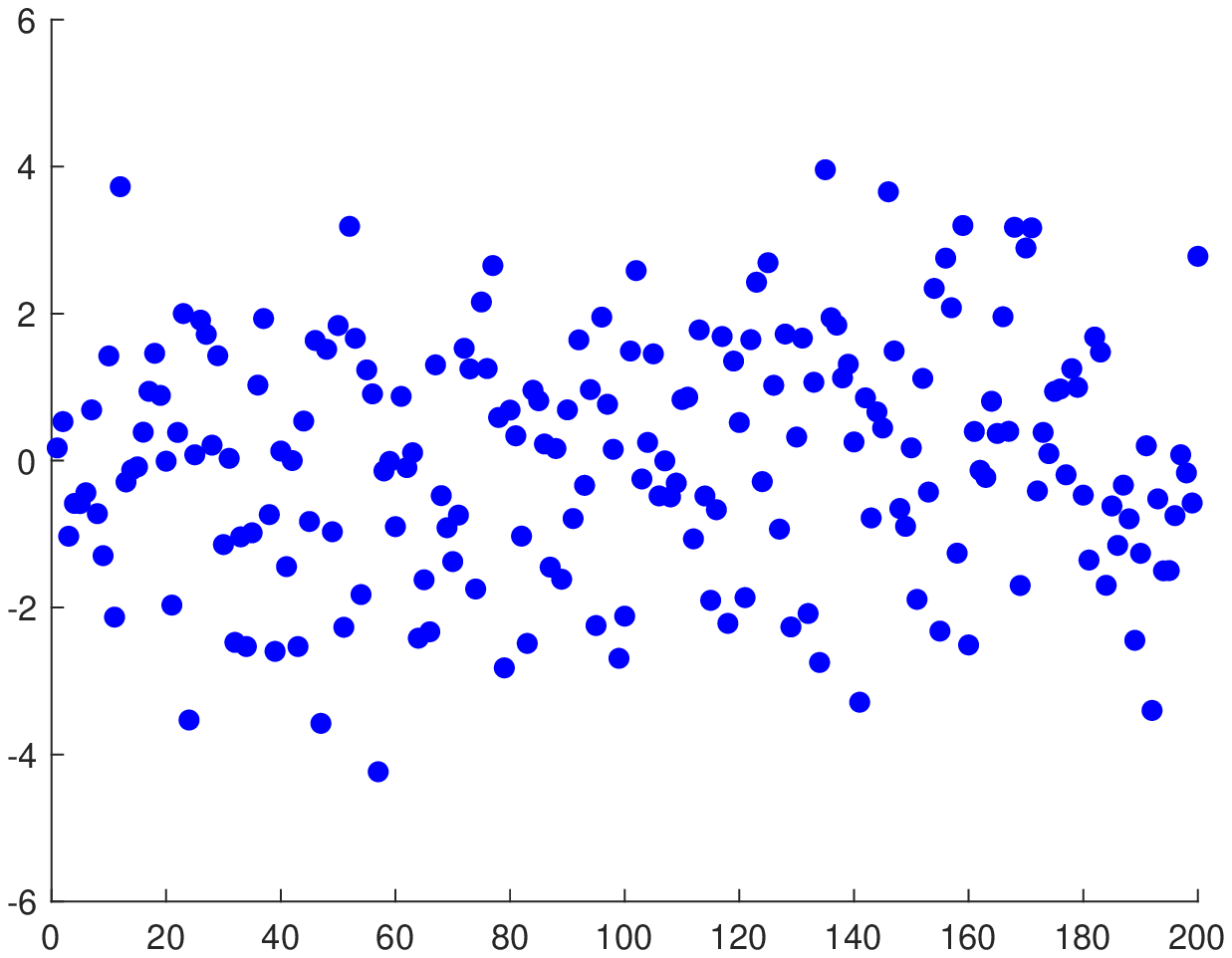}
		\caption{Biased data without hidden sensitive attribute}
	\end{subfigure}%
	\begin{subfigure}{.33\textwidth}
		\centering
		\includegraphics[width=\linewidth]{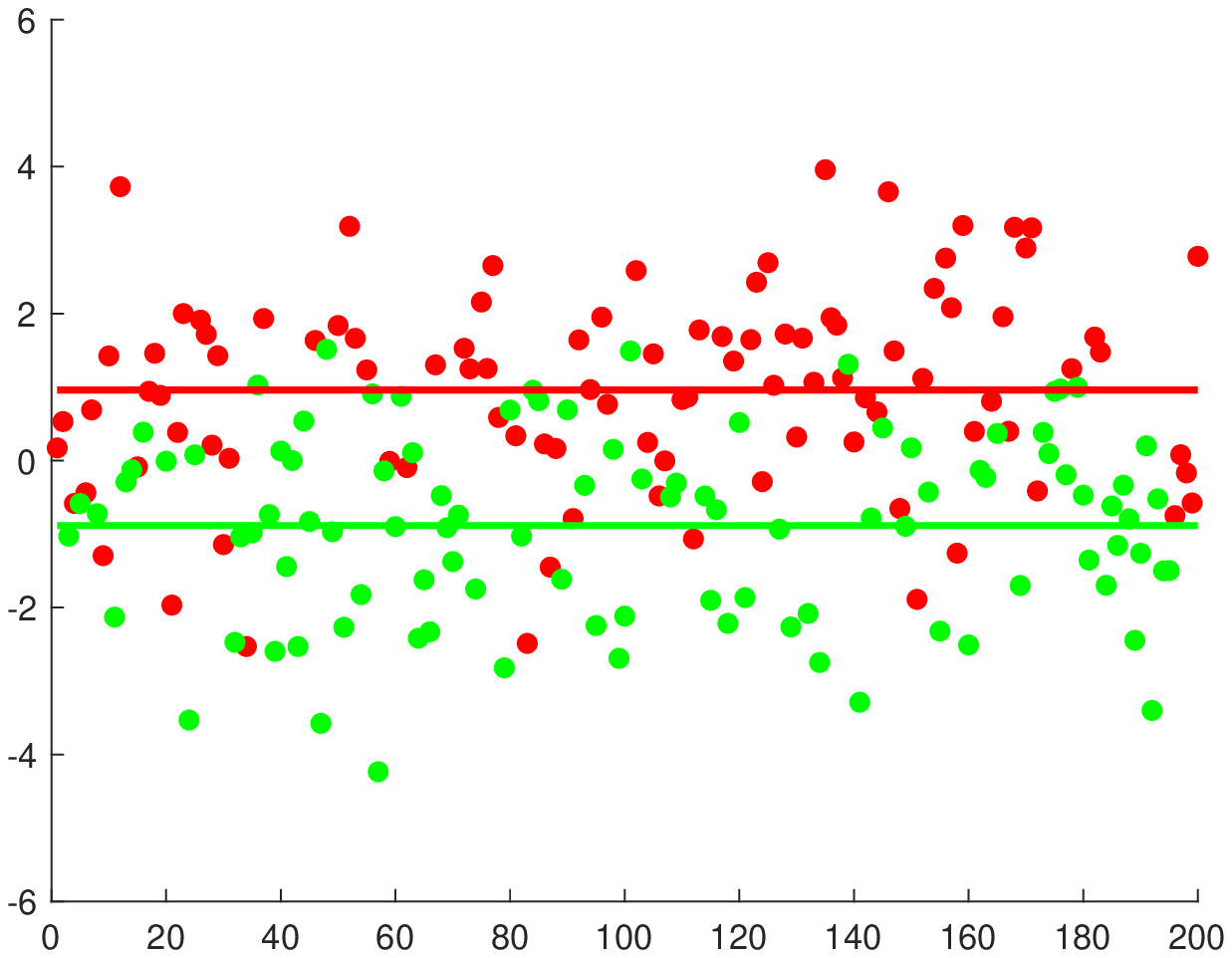}	
		\caption{Biased data after identifying hidden sensitive attribute}
	\end{subfigure}%
	\begin{subfigure}{.33\textwidth}
		\centering
		\includegraphics[width=\linewidth]{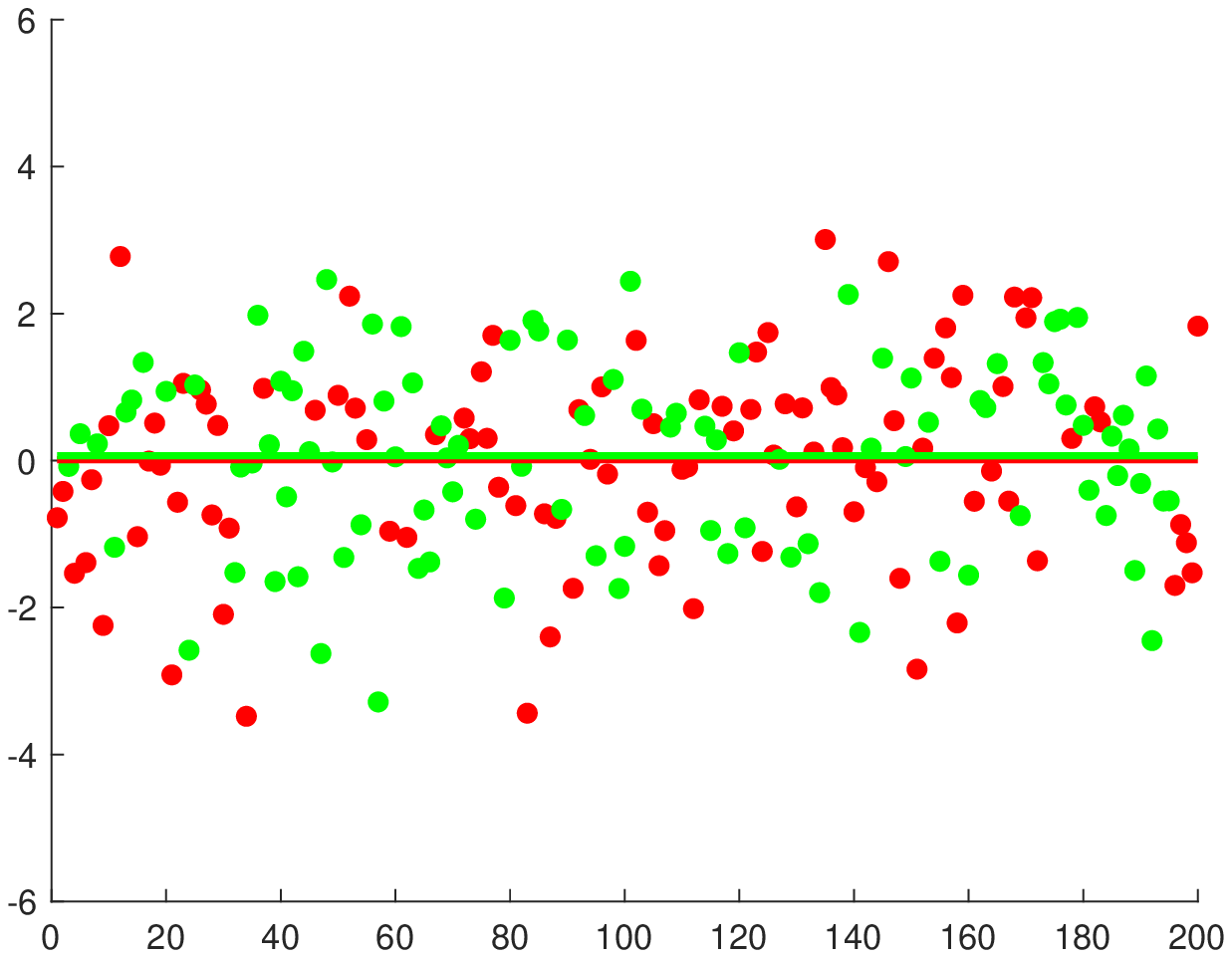}
		\caption{Data after debiasing}
	\end{subfigure}
	\caption{\label{fig:debias}Data before debiasing and after debiasing. Notice how means for two groups (shown as horizontal lines) become almost equal after debiasing.}
\end{figure*}

While the standard algorithms solving the sparse/LASSO problem in this setting do provide an estimate of the regression parameter vector, they do not fit the model accurately as they fail to consider any fairness criteria in their formulations. It is natural then to think about including the hidden attribute in LASSO itself. However, this breaks the convexity of the loss function which makes the problem intractable by the standard LASSO algorithms. The resulting problem is a combinatorial version of sparse linear regression with added clustering according to the hidden attribute. In this work, we propose a novel technique to tackle the combinatorial LASSO problem with a hidden attribute and provide theoretical guarantees about the quality of the solution given a sufficient number of samples. Our method provably detects unfairness in the system. It should be noted that observing unfairness does not always imply that the designer of the system intended for such inequalities to arise. In such cases, our method acts as a check to detect and remove such unintended discrimination. While the current belief is that there is a trade-off between fairness and performance~\cite{corbett2017algorithmic,kleinberg2017inherent,pleiss2017fairness,zliobaite2015relation,zhao2019inherent}, our theoretical and experimental results show evidence on the contrary. Our theoretical results allow for a new understanding of fairness, as an ``enabler'' instead of as a ``constraint''. 

\paragraph{Contribution.}

Broadly, we can categorize our contribution in the following points:
\begin{itemize}[noitemsep]
	\item \textbf{Defining the problem}: We formulate a novel combinatorial version of sparse linear regression which takes fairness/bias into the consideration. The addition of clustering comes at no extra cost in terms of the performance.
	\item \textbf{Invex relaxation}: Most of the current methods solve convex optimization problems as it makes the solution tractable. We propose a novel relaxation of the combinatorial problem and formally show that it is invex. To the best of our knowledge, this is the first invex relaxation for a combinatorial problem.  
	\item \textbf{Theoretical Guarantees}:  Our method can detect bias in the system. In particular, our method recovers the exact hidden attributes for each sample and thus provides an exact measure of bias between two different groups. Our method solves linear regression and clustering simultaneously with theoretical guarantees. To that end, we recover the true clusters (hidden attributes) and a regression parameter vector which is correct up to the sign of entries with respect to the true parameter vector. On a more technical side, we provide a primal-dual witness construction for our invex problem and provide theoretical guarantees for recovery. The sample complexity of our method varies logarithmically with respect to dimension of the regression parameter vector.     
\end{itemize}

\section{Notation and Problem Definition}
\label{sec:problem definition}

In this section, we collect all the notations used throughout the paper. We also formally introduce our novel problem. We consider a problem where we have a binary hidden attribute, and where fairness depends upon the hidden attribute. Let $y \in \real $ be the response variable and $X \in \real^d$ be the observed attributes. Let $z^* \in \{-1, 1\}$ be the \emph{hidden} attribute and $\gamma \in \real_{> 0}$ be the amount of bias due to the hidden attribute. The response $y$ is generated using the following mechanism:
\begin{align}
\label{eq:generative process}
\begin{split}
y = X^\T w^* + \gamma z^* + e  
\end{split}
\end{align}

where $e$ is an independent noise term. For example, $y$ could represent the market salary of a new candidate, $X$ could represent the candidate's
skills and $z$ could represent the population group the candidate belongs to (e.g., majority or minority). While the group of the candidate is not public knowledge, a bias associated with the candidate's group may be present in the underlying data. For our problem, we will assume that an estimate of the bias $\gamma \in \real_{>0}$ is available. In practice, even a rough estimate ($\pm 25\%$) of $\gamma$ also works well (See Appendix~\ref{appndx:recovery with gamma}).

Let $\seq{d}$ denote the set $\{1, 2, \cdots, d\}$. We assume $X \in \real^d$ to be a zero mean sub-Gaussian random vector~\cite{hsu2012tail} with covariance $\Sigma \in \sym$, i.e., there exists a $\rho > 0$, such that for all $\alpha \in \real^d$ the following holds: $	\E(\exp(\alpha^\T X)) \leq \exp(\frac{\| \alpha \|_2^2 \rho^2}{2}) $. By simply taking $\alpha_i = r$ and $\alpha_k = 0, \forall k \ne i$, it follows that each entry of $X$ is sub-Gaussian with parameter $\rho$. In particular, we will assume that $\forall i \in \seq{d}\, , \frac{X_i}{\sqrt{\Sigma_{ii}}}$ is a sub-Gaussian random variable with parameter $\sigma > 0$. It follows trivially that $\max_{i \in \seq{d}} \sqrt{\Sigma_{ii}} \sigma \leq \rho$.  We will further assume that $e$ is zero mean independent sub-Gaussian noise with variance $\sigma_e$. We assume that as the number of samples increases, the noise in the model gently decreases. We model this by taking $\sigma_e = \frac{k}{\sqrt{\log n}}$ for some $k > 0$. Our setting works with a variety of random variables as the class of sub-Gaussian random variable includes for instance Gaussian variables, any bounded random variable (e.g., Bernoulli, multinomial, uniform), any random variable with strictly log-concave density, and any finite mixture of sub-Gaussian variables. Notice that for the group with $z = +1$, $\E(y) = \gamma$ and for the group with $z = -1$, $\E(y) = -\gamma$. This means that after correctly identifying groups, one could perform debiasing by subtracting or adding $\gamma$ for $z=+1$ and $-1$ respectively. After debiasing, the expected value of both groups would match (and be equal to $0$). This complies with the notion of equal mean fairness proposed by \cite{calders2013controlling}. 

The parameter vector $w^* \in \real^d$ is $s$-sparse, i.e., at most $s$ entries of $w^*$ are non-zero. We receive $n$ i.i.d. samples of $X \in \real^d$ and $y \in \real$ and collect them in $\bX \in \real^{n \times d}$  and $\by \in \real^n$ respectively. Thus, in the finite-sample setting,
\begin{align}
\label{eq:sample generative process}
\begin{split}
\by = \bX w^* + \gamma \bz^* + \be \, ,
\end{split}
\end{align}    
where $\bz^* \in \{ -1, 1 \}^n$ and $\be \in \real^n$ both collect $n$ independent realizations of $z^* \in \{-1, 1\}$ and $e \in \real$. 
Our goal is to recover $w^*$ and $\bz^*$ using the samples $(\bX, \by)$. 

We denote a matrix $A \in \real^{p \times q}$ restricted to the columns and rows in $P \subseteq \seq{p}$ and $Q \subseteq \seq{q}$ respectively as $A_{PQ}$. Similarly, a vector $v \in \real^p$ restricted to entries in $P$ is denoted as $v_P$. We use $\eig_i(A)$ to denote the $i$-th eigenvalue ($1$st being the smallest) of matrix $A$. Similarly, $\eig_{\max}(A)$ denotes the maximum eigenvalue of matrix $A$. We use $\diag(A)$ to denote a vector containing the diagonal element of matrix $A$. By overriding the same notation, we use $\diag(v)$ to denote a diagonal matrix with its diagonal being the entries in vector $v$. We denote the inner product between two matrices $A$ and $B$ by $\inner{A}{B}$, i.e., $\inner{A}{B} = \tr(A^\T B)$, where $\tr$ denotes the trace of a matrix. The notation $A \succeq B$ denotes that $A - B$ is a positive semidefinite matrix. Similarly, $A \succ B$ denotes that $A-B$ is a positive definite matrix. For vectors, $\| v \|_p$ denotes the $\ell_p$-vector norm of vector $v \in \real^d$, i.e., $\| v \|_p = ( \sum_{i=1}^d |v_i|^p)^{\frac{1}{p}}$. If $p = \infty$, then we define $\| v \|_{\infty} = \max_{i=1}^d |v_i|$. For matrices, $\| A \|_p$ denotes the induced $\ell_p$-matrix norm for matrix $A \in \real^{p \times q}$. In particular, $\| A \|_2$ denotes the spectral norm of $A$ and $\| A \|_{\infty} \triangleq \max_{i \in \seq{p}} \sum_{j=1}^q |A_{ij}|$. A function $f(x)$ is of order $\Omega(g(x))$ and denoted by $ f(x) = \Omega(g(x))$, if there exists a constant $C > 0$ such that for big enough $x_0$, $f(x) \geq C g(x), \forall x \geq x_0$. Similarly, a function $f(x)$ is of order $\calO(g(x))$ and denoted by $ f(x) = \calO(g(x))$, if there exists a constant $C > 0$ such that for big enough $x_0$, $f(x) \leq C g(x), \forall x \geq x_0$. For brevity in our notations, we treat any quantity independent of $d, s$ and $n$ as constant. Detailed proofs for lemmas and theorems are available in the supplementary material.

\section{Our New Optimization Problem and Invexity}
\label{subsec:optimization problem and invexity}

In this section, we introduce our novel combinatorial problem and propose an invex relaxation. To the best of our knowledge, this is the first invex relaxation for a combinatorial problem. Without any information about the hidden attribute $\bz^*$ in Equation \eqref{eq:sample generative process}, the following LASSO formulation could be incorrectly and unsuccessfully used to estimate the parameter $w^*$. 
\begin{definition}[Standard LASSO]
	\label{def:standard lasso}
	\begin{align}
	\label{eq:opt prob standard lasso}
	\begin{split}
	\begin{matrix}
	\min_{w} & \frac{1}{n} (\bX w - \by)^\T (\bX w - \by) + \lambda_n \| w \|_1 
	\end{matrix} 
	\end{split}
	\end{align}
\end{definition}

However, without including $\bz^*$, standard LASSO does not provide accurate estimation of $w^*$ in Equation \eqref{eq:sample generative process}. We provide the following novel formulation of LASSO which fits our goals of estimating both $w^*$ and $z^*$:

\begin{definition}[Combinatorial Fair LASSO]
	\label{def:fair lasso}
	\begin{align}
	\label{eq:opt prob 1}
	\begin{split}
	\begin{matrix}
	\min_{w, \bz} & \frac{1}{n} (\bX w + \gamma \bz - \by)^\T (\bX w + \gamma \bz - \by) + \lambda_n \| w \|_1, &
	\text{\rm{such that }}  \bz_i \in \{ -1, 1 \}, \, \forall i \in \seq{n}, 
	\end{matrix} 
	\end{split}
	\end{align}
	where 
	$\lambda_n > 0$ is the regularization level which depends on $n$. 
\end{definition}

In its current form, optimization problem \eqref{eq:opt prob 1} is a non-convex mixed integer quadratic program (MIQP). Solving MIQP is NP-hard (See Appendix \ref{sec:miqpnphard}). Next, we will provide a continuous but still non-convex relaxation of \eqref{eq:opt prob 1}. For ease of notation, we define the following quantities:
\begin{align}
\label{eq:M and Z}
\begin{split}
&l(w) \triangleq \frac{1}{n} (\bX w - \by)^\T (\bX w - \by),\; \bZ \triangleq \begin{bmatrix} 1 & \bz^\T \\ \bz & \bz \bz^\T  \end{bmatrix},\;\bM(w) \triangleq \begin{bmatrix} l(w) & \frac{\gamma}{n} (\bX w - \by)^\T \\ \frac{\gamma}{n} (\bX w - \by) & \frac{\gamma^2}{n} \bI_{n \times n}  \end{bmatrix},
\end{split}
\end{align}
where $\bI$ is an $n \times n$ identity matrix. We provide the following invex relaxation to the optimization problem \eqref{eq:opt prob 1}.
\begin{definition}[Invex Fair LASSO]
	\label{def:invex fair lasso}
	\begin{align}
	\label{eq:opt prob 2}
	\begin{split}
	\begin{matrix}
	\min_{w, \bZ} & \inner{\bM(w)}{\bZ} + \lambda_n \| w \|_1, \quad \text{\rm{such that}} & \diag(\bZ) = \mathbf{1}, \; \bZ \succeq \mathbf{0}_{n+1 \times n+1}
	\end{matrix} \, .
	\end{split}
	\end{align}
\end{definition}

Note that optimization problem \eqref{eq:opt prob 2} is continuous and convex with respect to $w$ and $\bZ$ separately but it is not jointly convex (See Appendix \ref{appndx:opt non-convex} for details). Specifically, for a fixed $w$, the matrix $\bM(w)$ becomes a constant and problem \eqref{eq:opt prob 2} resembles a semidefinite program. For a fixed $\bZ$, problem \eqref{eq:opt prob 2} resembles a standard LASSO. Unfortunately, problem \eqref{eq:opt prob 2} is not jointly convex on $w$ and $\bZ$, and thus, it might still remain difficult to solve. Next, we will provide arguments that despite being non-convex, optimization problem \eqref{eq:opt prob 2} belongs to a particular class of non-convex functions namely ``invex'' functions. We define ``invexity'' of functions, as a generalization of convexity~\cite{hanson1981sufficiency}.

\begin{definition}[Invex function]
	\label{def:invex function}
	Let $\phi(t)$ be a function defined on a set $C$. Let $\eta$ be a vector valued function defined in $C \times C$ such that $\eta(t_1, t_2)^\T \nabla \phi(t_2)$, is well defined $\forall t_1, t_2 \in C$. Then, $\phi(t)$ is a $\eta$-invex function if $\phi(t_1) - \phi(t_2) \geq \eta(t_1, t_2)^\T \nabla \phi(t_2), \, \forall t_1, t_2 \in C$.
\end{definition}

Note that convex functions are $\eta$-invex for $\eta(t_1,t_2) = t_1 - t_2$. \cite{hanson1981sufficiency} showed that if the objective function and constraints are both $\eta$-invex with respect to same $\eta$ defined in $C \times C$, then Karush-Kuhn-Tucker (KKT) conditions are sufficient for optimality, while it is well-known that KKT conditions are necessary. \cite{ben1986invexity} showed a function is invex if and only if each of its stationarity point is a global minimum. 

In the next lemma, we show that the relaxed optimization problem \eqref{eq:opt prob 2} is indeed $\eta$-invex for a particular $\eta$ defined in $C \times C$ and a well defined set $C$. Before that, we will reformulate it into an equivalent optimization problem. Note that in the optimization problem \eqref{eq:opt prob 2}, $\diag(\bZ) = \mathbf{1}$. Thus, $\inner{\mathbf{I}}{\bZ}$ is a constant equal to $n+1$. Using this, we can rewrite the optimization problem as: 
\begin{align}
\label{eq:opt prob 3}
\begin{split}
\begin{matrix}
\min_{w, \bZ} & \inner{\bM(w)}{\bZ} + \lambda_n \| w \|_1 + \inner{\mathbf{I}}{\bZ}, \quad \text{such that} & \diag(\bZ) = \mathbf{1}, \;\bZ \succeq \mathbf{0}_{n+1 \times n+1}
\end{matrix} \, ,
\end{split}
\end{align}

Let $C = \{ (w, \bZ) \mid w \in \real^d, \diag(\bZ) = \mathbf{1}, \bZ \succeq \mathbf{0}_{n+1 \times n+1} \}$. We take $\bM'(w) = \bM(w) + \mathbf{I}$ and the corresponding optimization problem becomes: $\min_{(w, \bZ) \in C }  \inner{\bM'(w)}{\bZ} + \lambda_n \| w \|_1$.
We will show that $\forall (w, \bZ) \in C$, $\inner{\bM'(w)}{\bZ} + \lambda_n \| w \|_1$ is an invex function. Note that by definition of the $\ell_1$-norm, $\| w \|_1 = \sup_{\| a \|_{\infty} = 1} \inner{a}{w}$. Thus, it suffices to show that $\forall a \in \real^d$, $\inner{\bM'(w)}{\bZ}$ and $\inner{a}{w}$ are invex for the same $\eta(w, \bar{w}, \bZ, \bar{\bZ})$. 

\begin{lemma}
	\label{lem:invexity}
	For $(w, \bZ) \in C$, the functions $ f(w, \bZ) = \inner{\bM'(w)}{\bZ}$ and $ g(w, \bZ) = \inner{a}{w}$ are $\eta$-invex for $\eta(w, \bar{w}, \bZ, \bar{\bZ}) \triangleq \begin{bmatrix} w - \bar{w} \\ \bM'(\bar{w})^{-1} \bM'(w) (\bZ - \bar{\bZ}) \end{bmatrix}$, where we abuse the vector/matrix notation for clarity of presentation, and avoid the vectorization of matrices.
\end{lemma}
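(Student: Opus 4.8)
The plan is to verify Definition~\ref{def:invex function} directly for $f$ and $g$ with the stated $\eta$. First I would record that $\eta$ is well defined, i.e.\ that $\bM'(\bar{w}) = \bM(\bar{w}) + \mathbf{I}$ is invertible: the bottom-right block $\tfrac{\gamma^2}{n}\bI$ of $\bM(\bar{w})$ is positive definite, and its Schur complement in $\bM(\bar{w})$ equals $l(\bar{w}) - \tfrac{1}{n}\|\bX\bar{w} - \by\|_2^2 = 0$, so $\bM(\bar{w}) \succeq \mathbf{0}$ and hence $\bM'(\bar{w}) \succeq \mathbf{I} \succ \mathbf{0}$. The case of $g(w,\bZ) = \inner{a}{w}$ is immediate: $g$ is affine in $w$ and constant in $\bZ$, so $\nabla g = (a,\mathbf{0})$ and $\eta(w,\bar{w},\bZ,\bar{\bZ})^\T \nabla g(\bar{w},\bar{\bZ}) = (w-\bar{w})^\T a = g(w,\bZ) - g(\bar{w},\bar{\bZ})$, i.e.\ the invexity inequality holds with equality.

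For $f(w,\bZ) = \inner{\bM'(w)}{\bZ}$, observe that the $\bZ$-block of the gradient is $\nabla_{\bZ} f(w,\bZ) = \bM'(w)$ and that the $w$-block is the gradient of the map $h_{\bZ}(w) \triangleq \inner{\bM'(w)}{\bZ}$. Substituting into the right-hand side of the invexity inequality,
\[
\eta^\T \nabla f(\bar{w},\bar{\bZ}) \;=\; (w-\bar{w})^\T \nabla h_{\bar{\bZ}}(\bar{w}) \;+\; \inner{\,\bM'(\bar{w})^{-1}\bM'(w)(\bZ-\bar{\bZ})\,}{\,\bM'(\bar{w})\,}.
\]
The crux is to simplify the second term. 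Writing the inner product as a trace, using cyclicity, and using the symmetry of $\bZ-\bar{\bZ}$, $\bM'(w)$ and $\bM'(\bar{w})$ (so that $(\bM'(\bar{w})^{-1}\bM'(w))^\T = \bM'(w)\bM'(\bar{w})^{-1}$), the factor $\bM'(\bar{w})^{-1}\bM'(\bar{w})$ telescopes away and the term becomes exactly $\inner{\bM'(w)}{\bZ-\bar{\bZ}}$. This cancellation is precisely what the particular $\eta$ is engineered to produce, and it is the step where care with the matrix/trace bookkeeping is needed.

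With this, the target inequality $f(w,\bZ) - f(\bar{w},\bar{\bZ}) \ge \eta^\T \nabla f(\bar{w},\bar{\bZ})$ has $\inner{\bM'(w)}{\bZ}$ on both sides; cancelling it leaves
\[
\inner{\bM'(w)}{\bar{\bZ}} - \inner{\bM'(\bar{w})}{\bar{\bZ}} \;\ge\; (w-\bar{w})^\T \nabla h_{\bar{\bZ}}(\bar{w}),
\]
which is exactly the first-order convexity inequality for $h_{\bar{\bZ}}$ at $\bar{w}$. So it only remains to check that $h_{\bar{\bZ}}$ is convex. Partitioning $\bar{\bZ}$ conformally with $\bM'(w)$ into a scalar $(1,1)$-entry $\zeta$, an off-diagonal $n$-vector $\bar{\bz}$ and an $n\times n$ block, one gets $h_{\bar{\bZ}}(w) = \zeta\,(l(w)+1) + \tfrac{2\gamma}{n}(\bX w - \by)^\T\bar{\bz} + \mathrm{const}$; the only nonaffine term is $l(w) = \tfrac{1}{n}\|\bX w - \by\|_2^2$, which is convex, and it carries the nonnegative weight $\zeta$ (indeed $\zeta = 1$ by $\diag(\bar{\bZ}) = \mathbf{1}$, and $\zeta \ge 0$ already follows from $\bar{\bZ}\succeq\mathbf{0}$). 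Hence $h_{\bar{\bZ}}$ is convex, the inequality holds, and $f$ is $\eta$-invex. I expect the matrix identity in the second paragraph to be the main obstacle; the rest is a routine reduction to ordinary convexity together with the block structure of $\bM'$.
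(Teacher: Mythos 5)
Your proposal is correct and follows essentially the same route as the paper: the chosen $\eta$ collapses the $\bZ$-gradient term to $\inner{\bM'(w)}{\bZ-\bar{\bZ}}$, reducing the claim to first-order convexity of $w \mapsto \inner{\bM'(w)}{\bar{\bZ}}$, which holds because only the diagonal (in fact the $(1,1)$) entry of $\bM'(w)$ is nonlinear and it carries the nonnegative weight $\bar{\bZ}_{11}$ — exactly the paper's entrywise argument, just organized through the partial function $h_{\bar{\bZ}}$. Your Schur-complement check that $\bM'(\bar{w}) \succeq \mathbf{I} \succ \mathbf{0}$, so that $\eta$ is well defined, is a small addition the paper leaves implicit.
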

Now that we have established that optimization problem \eqref{eq:opt prob 2} is invex, we are ready to discuss our main results in the next section.

\section{Our Theoretical Analysis}
\label{sec:main results}

In this section, we show that our Invex Fair Lasso formulation correctly recovers the hidden attributes and the regression parameter vector. More formally, we want to achieve the two goals by solving optimization problem \eqref{eq:opt prob 2} efficiently. First, we want to correctly and uniquely determine the hidden sensitive attribute for each data point, i.e., $\bz^* \in \{-1,1\}^n$. Second, we want to recover regression parameter vector which is close to the true parameter vector $w^* \in \real^d$ in $\ell_2$-norm. Let $\tw$ and $\tZ$ be the solution to optimization problem \eqref{eq:opt prob 2}. Then, we will prove that $\tw$ and $w^*$ have the same support and $\tilde{\bz}$ constructed from $\tZ$ is exactly equal to $\bz^*$. We define $\Delta \triangleq (\tw - w^*)$.

\subsection{KKT conditions}
\label{subsec:kkt condtions}

We start by writing the KKT conditions for optimization problem \eqref{eq:opt prob 2}. Let $\balpha \in \real^{n+1}$ and $\Lambda \succeq \bzero_{n+1 \times n+1}$ be the dual variables for optimization problem \eqref{eq:opt prob 2}. For a fixed $\lambda_n$, the Lagrangian $L(w, \bZ; \balpha, \Lambda)$ can be written as $ L(w, \bZ; \balpha, \Lambda) = \inner{\bM(w)}{\bZ} + \lambda_n \| w \|_1 + \inner{\diag(\balpha)}{\bZ} - \mathbf{1}^\T \balpha - \inner{\Lambda}{\bZ}$. 
Using this Lagrangian, the KKT conditions at the optimum can be written as: 

\begin{enumerate}[noitemsep]
	\item Stationarity conditions:
	\begin{align}
	\label{eq:stationarity w}
	\begin{split}
	\diff{\inner{\bM(w)}{\bZ}}{w} + \lambda_n \bg= \mathbf{0}_{d\times 1}, 
	\end{split}
	\end{align}
	where $\bg$ is an element of the subgradient set of $\|w\|_1$, i.e., $\bg \in  \diff{\|w\|_1}{w}$ and $\| \bg \|_{\infty} \leq 1$.
	\begin{align}
	\label{eq:stationarity Z}
	\begin{split}
	\bM(w) + \diag(\balpha) - \Lambda = \mathbf{0}_{n+1 \times n+1}
	\end{split}
	\end{align}
	\item Complementary Slackness condition:
	\begin{align}
	\label{eq:complimentarity}
	\begin{split}
	\inner{\Lambda}{\bZ} = 0 
	\end{split}
	\end{align}
	\item Dual Feasibility condition:
	\begin{align}
	\label{eq:dual feasibility}
	\begin{split}
	\Lambda \succeq \mathbf{0}_{n+1 \times n+1}
	\end{split}
	\end{align}
	\item Primal Feasibility conditions:
	\begin{align}
	\label{eq:primal feasibility}
	\begin{split}
	w \in \real^d, \; \diag(\bZ) = \mathbf{1}, \; \bZ \succeq \mathbf{0}_{n+1 \times n+1}
	\end{split}
	\end{align}
\end{enumerate}

Next, we will provide a setting for primal and dual variables which satisfies all the KKT conditions. But before that, we will describe a set of technical assumptions which will help us in our analysis.

\subsection{Assumptions}
\label{subsec:asssumptions}

Let $S$ denote the support of $w^*$, i.e., $S = \{ i\, | \, w^*_i \ne 0, \, i \in \seq{d} \}$. Similarly, we define the complement of support $S$ as $S^c = \{ i\, | \, w^*_i = 0, \, i \in \seq{d} \}$. Let $|S| = s$ and $|S^c| = d - s$. For ease of notation, we define $\bH \triangleq \E(XX^\T)$ and $\bHhat \triangleq \frac{1}{n}\bX^\T \bX$. As the first assumption, we need the minimum eigenvalue of the population covariance matrix of $X$ restricted to rows and columns in $S$ to be greater than zero. Later, we will show that this assumption is needed to uniquely recover $w$ in the optimization problem \eqref{eq:opt prob 2}.    
\begin{assumption}[Positive Definiteness of Hessian]
	\label{assum:postive definite}
	$\bH_{SS} \succ \mathbf{0}_{s \times s}$ or equivalently $\eig_{\min}(\bH_{SS}) = C_{\min} > 0$. 
\end{assumption}

In practice, we only deal with finite samples and not populations. In the next lemma, we will show that with a sufficient number of samples, a condition similar to Assumption \ref{assum:postive definite} holds with high probability in the finite-sample setting.
\begin{lemma}
	\label{lem:sample positive definite}
	If Assumption \ref{assum:postive definite} holds and $n = \Omega(\frac{s + \log d}{C_{\min}^2})$, then $\eig_{\min}(\bHhat_{SS}) \geq \frac{C_{\min}}{2}$ with probability at least $1 - \calO(\frac{1}{d})$.
\end{lemma}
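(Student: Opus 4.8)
The plan is to control the deviation of the random matrix $\bHhat_{SS} = \frac{1}{n}\bX_S^\T \bX_S$ from its expectation $\bH_{SS}$ in spectral norm, and then invoke Weyl's inequality to transfer the lower bound on $\eig_{\min}(\bH_{SS})$ to $\bHhat_{SS}$. Since $\eig_{\min}(\bHhat_{SS}) \geq \eig_{\min}(\bH_{SS}) - \|\bHhat_{SS} - \bH_{SS}\|_2 = C_{\min} - \|\bHhat_{SS} - \bH_{SS}\|_2$, it suffices to show $\|\bHhat_{SS} - \bH_{SS}\|_2 \leq C_{\min}/2$ with probability at least $1 - \calO(1/d)$ under the stated sample size.

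The key step is a concentration bound for sample covariance matrices of sub-Gaussian vectors restricted to the $s$ coordinates in $S$. Writing $X_S \in \real^s$ for the subvector, each $X_S$ is a zero-mean sub-Gaussian vector (the sub-Gaussian parameter of the restricted vector is controlled by $\rho$, or equivalently by $\max_{i \in S}\sqrt{\Sigma_{ii}}\,\sigma$), so standard results (e.g.\ covariance estimation via an $\varepsilon$-net over the unit sphere in $\real^s$ together with Bernstein-type tail bounds for sub-exponential random variables $\langle u, X_S\rangle^2$) give, for any $\delta \in (0,1)$,
\begin{align}
\label{eq:cov-concentration}
\prob\!\left( \|\bHhat_{SS} - \bH_{SS}\|_2 \geq \delta \right) \leq 2 \cdot 9^s \exp\!\left( - c\, n\, \min\{\delta^2, \delta\} \right)
\end{align}
for an absolute constant $c > 0$ (absorbing the dependence on $\rho$ into constants, as the paper treats such quantities as constant). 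Taking $\delta = C_{\min}/2$ (so $\min\{\delta^2,\delta\} = \Theta(C_{\min}^2)$ for $C_{\min}$ bounded), the right-hand side is at most $\exp(s\log 9 - c' n C_{\min}^2)$. Choosing $n = \Omega\!\big((s + \log d)/C_{\min}^2\big)$ with a large enough constant makes this exponent at most $-\log d$, yielding failure probability $\calO(1/d)$, which is exactly the claim.

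I would carry this out in three steps: (i) verify that $X_S$ is sub-Gaussian with a constant parameter, so that $\langle u, X_S\rangle$ is sub-Gaussian and $\langle u, X_S\rangle^2 - \E\langle u, X_S\rangle^2$ is sub-exponential uniformly over unit $u$; (ii) apply a Bernstein inequality pointwise on a $1/4$-net of the unit sphere $\mathbb{S}^{s-1}$ (cardinality at most $9^s$) and a union bound, then pass from the net to all of $\mathbb{S}^{s-1}$ at the cost of a factor $2$ in the spectral norm; (iii) substitute $\delta = C_{\min}/2$ and the stated $n$ to conclude, then apply Weyl. The main obstacle — really the only nontrivial point — is keeping track of how the sub-Gaussian parameter of the full vector $X$ propagates to the restricted vector $X_S$ and into the sub-exponential norm of $\langle u, X_S\rangle^2$; this is where the hypothesis $\max_i \sqrt{\Sigma_{ii}}\,\sigma \le \rho$ and the definition of sub-Gaussianity from the setup are used, and it is the place where one must be careful not to hide an $s$-dependence inside what the paper calls a ``constant.'' Everything else is the textbook net-plus-Bernstein argument.
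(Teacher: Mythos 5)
Your proposal is correct and follows essentially the same route as the paper: an eigenvalue perturbation step ($\eig_{\min}(\bHhat_{SS}) \geq C_{\min} - \|\bHhat_{SS}-\bH_{SS}\|_2$, which the paper derives via Courant--Fischer and you via Weyl) combined with a sub-Gaussian covariance concentration bound of the form $\exp(-c\,n\,\delta^2 + \calO(s))$ evaluated at $\delta = C_{\min}/2$. The only difference is that the paper cites Proposition 2.1 of Vershynin (2012) for this concentration, while you reprove it with the standard net-plus-Bernstein argument that underlies that result.
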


As the second assumption, we will need to ensure that the variates outside the support of $w^*$ do not exert lot of influence on the variates in the support of $w^*$. This sort of technical condition, known as the mutual incoherence condition, has been previously used in many problems related to regularized regression such as compressed sensing~\cite{wainwright2009sharp}, Markov random fields~\cite{ravikumar2010high}, non-parametric regression~\cite{ravikumar2007spam}, diffusion networks~\cite{daneshmand2014estimating}, among others. We formally present this technical condition in what follows. 

\begin{assumption}[Mutual Incoherence]
	\label{assum:mutual incoherence condition}
	$\| \bH_{S^cS} \bH_{SS}^{-1} \|_{\infty} \leq 1 - \alpha$ for some $\alpha \in (0, 1]$.
\end{assumption}
Again, we will show that with a sufficient number of samples, a condition similar to Assumption \ref{assum:mutual incoherence condition} holds in the finite-sample setting with high probability.
\begin{lemma}
	\label{lem:sample mutual incoherence condition}
	If Assumption \ref{assum:mutual incoherence condition} holds and $n = \Omega(\frac{s^3 (\log s + \log d)}{\tau(C_{\min}, \alpha, \sigma, \Sigma)})$, then $\| \bHhat_{S^cS} \bHhat_{SS}^{-1} \|_{\infty} \leq 1 - \frac{\alpha}{2}$ with probability at least $1 - \calO(\frac{1}{d})$ where $\tau(C_{\min}, \alpha, \sigma, \Sigma)$ is a constant independent of $n, d$ and $s$.
\end{lemma}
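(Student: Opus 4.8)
The plan is to control the deviation of the empirical quantity $\bHhat_{S^cS} \bHhat_{SS}^{-1}$ from its population counterpart $\bH_{S^cS} \bH_{SS}^{-1}$ in $\ell_\infty$-matrix norm, and then invoke Assumption \ref{assum:mutual incoherence condition} together with the triangle inequality. Write the decomposition
\begin{align*}
\bHhat_{S^cS} \bHhat_{SS}^{-1} - \bH_{S^cS} \bH_{SS}^{-1} &= (\bHhat_{S^cS} - \bH_{S^cS}) \bH_{SS}^{-1} + \bH_{S^cS} (\bHhat_{SS}^{-1} - \bH_{SS}^{-1}) \\
&\quad + (\bHhat_{S^cS} - \bH_{S^cS})(\bHhat_{SS}^{-1} - \bH_{SS}^{-1}),
\end{align*}
and, using $\bHhat_{SS}^{-1} - \bH_{SS}^{-1} = -\bH_{SS}^{-1}(\bHhat_{SS} - \bH_{SS})\bHhat_{SS}^{-1}$, reduce everything to bounding the entrywise (or $\ell_\infty$-operator-norm) deviations of the blocks $\bHhat_{SS} - \bH_{SS}$ and $\bHhat_{S^cS} - \bH_{S^cS}$, while treating $\| \bH_{SS}^{-1} \|_\infty$ as a constant (it is bounded in terms of $C_{\min}$ and $s$, since $\| \bH_{SS}^{-1} \|_\infty \le \sqrt{s}\,\| \bH_{SS}^{-1} \|_2 \le \sqrt{s}/C_{\min}$) and $\| \bHhat_{SS}^{-1} \|_\infty \le 2\sqrt{s}/C_{\min}$ on the high-probability event of Lemma \ref{lem:sample positive definite}.

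The main technical step is the concentration of the empirical covariance entries. Each entry $\bHhat_{ij} - \bH_{ij} = \frac{1}{n}\sum_{k=1}^n (X_k)_i (X_k)_j - \E[(X)_i (X)_j]$ is an average of i.i.d. centered products of sub-Gaussian variables, hence sub-exponential; a Bernstein-type tail bound gives $\prob(|\bHhat_{ij} - \bH_{ij}| > t) \le 2\exp(-c\, n \min(t^2/v^2, t/v))$ for $t$ small, where $v$ depends on $\sigma$ and $\max_i \Sigma_{ii}$. Taking a union bound over the at most $sd$ relevant entries and choosing $t$ of order $\alpha C_{\min}/(s\sqrt{s})$ so that, after multiplying by the $\ell_\infty$ norms of the (inverse) blocks — which contribute factors of $s$ from the matrix dimensions and $1/C_{\min}$ from the eigenvalue bound — the total perturbation is at most $\alpha/2$. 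This forces $n = \Omega\!\big(\tfrac{s^3(\log s + \log d)}{\tau(C_{\min},\alpha,\sigma,\Sigma)}\big)$, with the $\log d$ coming from the union bound, one factor of $s$ from $\|\cdot\|_\infty$ row sums, and the remaining $s^2$ from squaring the $1/t$ scale inside the exponent. Combining these bounds on the complement of a failure event of probability $\calO(1/d)$ (from the union bound) together with the $\calO(1/d)$ event of Lemma \ref{lem:sample positive definite} yields the claim with probability at least $1 - \calO(1/d)$.

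The main obstacle I anticipate is bookkeeping the dependence on $s$ correctly: one must be careful that $\|AB\|_\infty \le \|A\|_\infty \|B\|_\infty$ does not lose extra $\sqrt{s}$ factors unnecessarily, and that converting the entrywise concentration bound into an $\ell_\infty$-operator-norm bound (row-sum over $s$ terms) is done sharply, since these choices are exactly what determines whether the sample complexity is $s^3$ or worse. A secondary subtlety is that $\bHhat_{SS}$ and $\bHhat_{S^cS}$ are not independent of each other, so the concentration arguments must be run on a single union-bound event covering all entries of $\bHhat$ restricted to columns in $S$ simultaneously, rather than chained conditionally.
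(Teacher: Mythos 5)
Your proposal is correct and follows essentially the same route as the paper: the same decomposition of $\bHhat_{S^cS}\bHhat_{SS}^{-1}$ into the population term plus three perturbation terms (with $\bHhat_{SS}^{-1}-\bH_{SS}^{-1} = -\bH_{SS}^{-1}(\bHhat_{SS}-\bH_{SS})\bHhat_{SS}^{-1}$), the same entrywise sub-exponential concentration with a union bound over the $\calO(sd)$ entries, and the same $\sqrt{s}/C_{\min}$-type bounds on the inverse blocks via Lemma~\ref{lem:sample positive definite}, yielding the $s^3(\log s+\log d)$ sample complexity by identical bookkeeping. The only difference is cosmetic: the paper splits the perturbation into $T_1,T_2,T_3$ and bounds each by $\alpha/6$, whereas you bound their sum against $\alpha/2$ directly.
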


In Appendix~\ref{appndx:assumptions hold in sample setting}, we experimentally show that Assumption~\ref{assum:postive definite} is easier to hold (i.e., $n \in \Omega(s + \log d)$) than Assumption~\ref{assum:mutual incoherence condition} (i.e., $n \in \Omega(s^3 \log d)$). Eventually, both assumptions hold as the number of samples increases.

\subsection{Construction of Primal and Dual Witnesses}
In this subsection, we will provide a construction of primal and dual variables which satisfies the KKT conditions for optimization problem \eqref{eq:opt prob 2}. To that end, we provide our first main result. 

\begin{theorem}[Primal Dual Witness Construction]
	\label{thm:primal dual witness construction}
	If Assumptions~\ref{assum:postive definite} and \ref{assum:mutual incoherence condition} hold, $\lambda_n \geq \frac{128 \rho k}{\alpha} \frac{\sqrt{\log d}}{n}$ and $n = \Omega( \frac{s^3 \log d}{\tau_0(C_{\min}, \alpha, \sigma, \Sigma, \rho, k, \gamma)} )$, then the following setting of primal and dual variables 
	\begin{align}
	\label{eq:primal dual variable setting}
	\begin{split}
	&\text{Primal Variables:} \quad \tw = (\tw_S, \bzero_{d-s \times 1}) \\
	& \text{where, } \tw_S = \arg\min_{w_S}  \frac{1}{n} (\bX_{.S} w_S + \gamma \bz^* - \by)^\T  (\bX_{.S} w_S + \gamma \bz^* - \by) + \lambda_n \| w_S \|_1 \\
	& \bZ = \bZ^*  \triangleq \begin{bmatrix} 1 & {\bz^*}^\T \\ {\bz^*} & {\bz^*} {\bz^*}^\T  \end{bmatrix}\\
	&\text{Dual Variables:} \quad \balpha = - \diag(M(\tw) \bZ^*), \quad \Lambda = M(\tw) - \diag(M(\tw)\bZ^*)
	\end{split}
	\end{align}
	satisfies all the KKT conditions for optimization problem \eqref{eq:opt prob 2} with probability at least $1 - \calO(\frac{1}{n})$, where $\tau_0(C_{\min}, \alpha, \sigma, \Sigma, \rho, k, \gamma)$ is a constant independent of $s, d$ and $n$  and thus, the primal variables are a globally optimal solution for \eqref{eq:opt prob 2}. Furthermore, the above solution is also unique.
\end{theorem}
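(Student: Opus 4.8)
The plan is to verify the four KKT conditions in turn for the proposed primal-dual setting, using a primal-dual witness (PDW) strategy: we fix $\bZ = \bZ^*$ (the rank-one matrix built from the true hidden attribute $\bz^*$) and $\tw$ to be the restricted LASSO solution on the true support $S$, then show this choice is feasible, stationary, and complementary-slack with high probability, and finally argue uniqueness. The first step is to check \emph{primal feasibility}: $\diag(\bZ^*) = \mathbf{1}$ is immediate since $(\bz^*_i)^2 = 1$, and $\bZ^* \succeq \mathbf{0}$ because $\bZ^* = [1;\bz^*][1;\bz^*]^\T$ is rank-one PSD. Next, \emph{stationarity in $\bZ$} (Equation \eqref{eq:stationarity Z}) is satisfied \emph{by construction} of $\balpha$ and $\Lambda$: we simply set $\Lambda = M(\tw) + \diag(\balpha)$ with $\balpha = -\diag(M(\tw)\bZ^*)$; one then checks that $\diag(\Lambda) = \diag(M(\tw)) - \diag(M(\tw)\bZ^*)\cdot$(entrywise on the diagonal)$\,$— more precisely, since $\diag(\bZ^*\text{'s action})$ cancels the diagonal, we get the off-diagonal structure needed. \emph{Complementary slackness} $\inner{\Lambda}{\bZ^*} = 0$ should follow from the identity $\Lambda [1;\bz^*] = (M(\tw) + \diag(\balpha))[1;\bz^*] = \mathbf{0}$, which holds exactly when $\balpha$ is chosen as the negative diagonal of $M(\tw)\bZ^*$ — this is the algebraic heart of why the rank-one $\bZ^*$ is a stationary point, and I would verify it by a direct block computation using the definition of $\bM(w)$ in \eqref{eq:M and Z}.

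The two substantive probabilistic steps are \emph{dual feasibility} $\Lambda \succeq \mathbf{0}$ and \emph{stationarity in $w$} (Equation \eqref{eq:stationarity w}) together with the strict sign/support recovery. For dual feasibility, since $\Lambda[1;\bz^*] = \mathbf{0}$, it suffices to show $\Lambda$ restricted to the orthogonal complement of $[1;\bz^*]$ is PSD; this reduces to controlling $M(\tw)$ on that subspace, i.e., showing the residual $\bX_{.S}\tw_S + \gamma\bz^* - \by = \bX_{.S}\Delta_S - \be$ is small in an appropriate sense so that the Schur-complement-type quantity stays nonnegative. Here I would use Lemma \ref{lem:sample positive definite} to control $\eig_{\min}(\bHhat_{SS})$ and standard sub-Gaussian tail bounds (via $\sigma_e = k/\sqrt{\log n}$) to bound $\|\Delta_S\|_2$ and $\|\bX_{.S}^\T\be\|_\infty$. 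For $w$-stationarity, I would run the usual LASSO PDW argument: write the stationarity equation block-wise on $S$ and $S^c$, solve the $S$-block for $\Delta_S$ in terms of $\bHhat_{SS}$, the noise, and the subgradient $\bg_S$, then substitute into the $S^c$-block to get $\bg_{S^c}$ explicitly, and bound $\|\bg_{S^c}\|_\infty < 1$ using Assumption \ref{assum:mutual incoherence condition} (in its finite-sample form, Lemma \ref{lem:sample mutual incoherence condition}), the choice $\lambda_n \geq \frac{128\rho k}{\alpha}\frac{\sqrt{\log d}}{n}$, and concentration of $\bHhat_{S^cS}\bHhat_{SS}^{-1}$ and of the noise terms. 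A minimum-signal-strength-type estimate then upgrades $\|\Delta_S\|_\infty$ control to $\sign(\tw_S) = \sign(w^*_S)$, giving exact support recovery; note the coupling between $w$ and $\bz^*$ enters the residual as the extra term $\gamma(\bz - \bz^*)$, but on the witness we have $\bz = \bz^*$, so it vanishes and the analysis decouples cleanly.

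Finally, \emph{uniqueness}: because the problem is $\eta$-invex (Lemma \ref{lem:invexity}) and the constraints are $\eta$-invex for the same $\eta$, KKT is sufficient for global optimality, so the witness is a global optimum. For uniqueness of $\tw$ I would invoke strict dual feasibility ($\|\bg_{S^c}\|_\infty < 1$ strictly) plus $\eig_{\min}(\bHhat_{SS}) > 0$ in the standard way (any other optimizer must share the zero pattern off $S$ and then agree on $S$ by strict convexity of the restricted objective). For uniqueness of $\bZ^*$ I would argue that strict positivity of the relevant diagonal entries of $\Lambda$ on the complement of $[1;\bz^*]$ forces any optimal $\bZ$ to have its column space inside $\mathrm{span}\{[1;\bz^*]\}$, and combined with $\diag(\bZ) = \mathbf{1}$ this pins down $\bZ = \bZ^*$, which in turn recovers $\bz^*$ exactly. \textbf{The main obstacle} I anticipate is the dual-feasibility step: unlike standard LASSO, here $\Lambda$ is an $(n+1)\times(n+1)$ matrix whose PSD-ness on a codimension-one subspace must be established uniformly, and the Schur-complement bound intertwines the regression residual error $\bX_{.S}\Delta_S$, the bias scale $\gamma$, and the $n$-dependent noise $\sigma_e$; getting the constants to line up so that $n = \Omega(s^3\log d / \tau_0)$ suffices — rather than something worse — is where the careful bookkeeping will be concentrated.
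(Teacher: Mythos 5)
Your proposal is correct and follows essentially the same route as the paper: verify each KKT condition for the witness $(\tw,\bZ^*)$, establish $\Lambda\zeta^*=\mathbf{0}$ by a direct block computation for complementary slackness, run the standard blockwise LASSO primal--dual argument with mutual incoherence to get $\|\bg_{S^c}\|_\infty<1$, prove dual feasibility by showing $\Lambda$ is PSD off the span of $\zeta^*$ (the paper implements your Schur-complement idea via Haynesworth's inertia additivity formula, reducing it to positivity of the diagonal entries $\frac{\gamma^2}{n}-\frac{\gamma}{n}z_i^*(\bX_{i\cdot}^\T\Delta+\be_i)$ controlled by the $\|\Delta_S\|_2$ bound and sub-Gaussian tails), and conclude global optimality from invexity and uniqueness from $\eig_2(\Lambda)>0$ together with $\diag(\bZ)=\mathbf{1}$ and $\eig_{\min}(\bHhat_{SS})>0$. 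No substantive deviation from the paper's proof.
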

\paragraph{Proof Sketch.} The main idea behind our proofs is to verify that the setting of primal and dual variables in Theorem~\ref{thm:primal dual witness construction} satisfies all the KKT conditions described in subsection \ref{subsec:kkt condtions}. We do this by proving multiple lemmas in subsequent subsections. The outline of the proof is as follows:
\begin{itemize}[noitemsep]
	\item It can be trivially verified that the primal feasibility condition \eqref{eq:primal feasibility} holds. Similarly, the second stationarity condition \eqref{eq:stationarity Z} holds by construction of $\Lambda$.
	\item In subsection~\ref{subsec:verifying the stationarity condition w}, we use  Lemmas~\ref{lem:w_s primal dual} and \ref{lem:bound X_se and X_s^ce} to verify that the stationarity condition~\eqref{eq:stationarity w} holds.
	\item In subsection~\ref{subsec:Verifying Complementary Slackness Condition}, we use Lemma~\ref{lem:eigenvalue of Lambda} to verify the complementary slackness condition \eqref{eq:complimentarity}.
	\item In subsection~\ref{subsec:verifying dual feasibility}, we show that the dual feasibility condition~\eqref{eq:dual feasibility} is satisfied using results from Lemmas~\ref{lem:second eigenvalue of Lambda}, \ref{lem:all positive eigenvalues}, \ref{lem:bound on Delta} and \ref{lem:bound on l2 Xse}.
	\item Finally, in subsection~\ref{subsec:uniqueness of the solution}, we show that our proposed solution is also unique.
\end{itemize}

\subsection{Verifying the Stationarity Condition~\eqref{eq:stationarity w}}
\label{subsec:verifying the stationarity condition w}
In this subsection, we will show that the setting of $\tw$ and $\bZ^*$ satisfies the first stationarity condition \eqref{eq:stationarity w} by proving the following lemma.
\begin{lemma}
	\label{lem:w_s primal dual}
	If Assumptions~\ref{assum:postive definite} and \ref{assum:mutual incoherence condition} hold, $\lambda_n \geq \frac{128 \rho k}{\alpha} \frac{\sqrt{\log d}}{n}$ and $n = \Omega( \frac{s^3 \log d}{\tau_1(C_{\min}, \alpha, \sigma, \Sigma, \rho)} )$, then the setting of $w$ and $\bZ$ from equation \eqref{eq:primal dual variable setting} satisfies the stationarity condition \eqref{eq:stationarity w} with probability at least $1 - \calO(\frac{1}{d})$, where $\tau_1(C_{\min}, \alpha, \sigma, \Sigma, \rho)$ is a constant independent of $d, s$ or $n$.  
\end{lemma}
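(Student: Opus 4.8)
The plan is to verify the stationarity condition \eqref{eq:stationarity w} directly for the proposed witness $(\tw, \bZ^*)$, following the classical primal-dual witness strategy for $\ell_1$-regularized regression, but with the twist that the quadratic form $\inner{\bM(w)}{\bZ^*}$ must first be simplified. First I would compute $\diff{\inner{\bM(w)}{\bZ}}{w}$ at $\bZ = \bZ^*$. Since $\bZ^* = \begin{bmatrix} 1 & {\bz^*}^\T \\ {\bz^*} & {\bz^*}{\bz^*}^\T \end{bmatrix}$, plugging the block form of $\bM(w)$ into $\inner{\bM(w)}{\bZ^*}$ collapses the inner product to $l(w) + 2\frac{\gamma}{n}(\bX w - \by)^\T \bz^* + \frac{\gamma^2}{n}\|\bz^*\|_2^2 = \frac{1}{n}\|\bX w + \gamma \bz^* - \by\|_2^2 + \text{const}$. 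Hence the stationarity condition \eqref{eq:stationarity w} reduces exactly to the usual LASSO stationarity condition for the problem that defines $\tw_S$ in \eqref{eq:primal dual variable setting}, namely $\frac{2}{n}\bX^\T(\bX\tw + \gamma\bz^* - \by) + \lambda_n \bg = \bzero$ with $\|\bg\|_\infty \le 1$.

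Next I would split this vector equation into its $S$ and $S^c$ blocks. On $S$, the condition holds by construction of $\tw_S$ as the minimizer of the restricted LASSO (its own KKT stationarity gives $\bg_S \in \partial\|\tw_S\|_1$ automatically). The real work is on $S^c$: I must show that the induced subgradient $\bg_{S^c} = -\frac{2}{n\lambda_n}\bX_{.S^c}^\T(\bX_{.S}\tw_S + \gamma\bz^* - \by)$ satisfies $\|\bg_{S^c}\|_\infty < 1$ strictly (strict inequality is also what later yields uniqueness). Substituting the generative model $\by = \bX_{.S} w^*_S + \gamma\bz^* + \be$, the residual becomes $\bX_{.S}\Delta_S - \be$ where $\Delta_S = \tw_S - w^*_S$, and from the $S$-block stationarity one can solve $\Delta_S$ in terms of $\bHhat_{SS}^{-1}$ acting on $\frac{1}{n}\bX_{.S}^\T\be$ and on $\lambda_n \bg_S$. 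Plugging this back, $\bg_{S^c}$ decomposes into (i) a term $\bHhat_{S^cS}\bHhat_{SS}^{-1}\bg_S$ controlled by the finite-sample mutual incoherence bound from Lemma~\ref{lem:sample mutual incoherence condition} (giving $\le 1 - \alpha/2$), plus (ii) noise terms involving $\frac{1}{n}\bX_{.S^c}^\T\be$ and $\frac{1}{n}\bX_{.S}^\T\be$ multiplied by $\bHhat$-factors, which I would bound by $\calO(\rho k \sqrt{\log d}/n)$ using sub-Gaussian tail bounds on $\bX^\T\be$ (this is exactly where the assumption $\sigma_e = k/\sqrt{\log n}$ and the choice $\lambda_n \ge \frac{128\rho k}{\alpha}\frac{\sqrt{\log d}}{n}$ enter, so that the noise contribution is at most $\alpha/4$ after dividing by $\lambda_n$). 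Combining, $\|\bg_{S^c}\|_\infty \le 1 - \alpha/2 + \alpha/4 < 1$.

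The supporting facts I would invoke are: Lemma~\ref{lem:sample positive definite} to guarantee $\bHhat_{SS}$ is invertible with $\eig_{\min} \ge C_{\min}/2$ (so $\Delta_S$ is well defined and $\|\bHhat_{SS}^{-1}\|$ is controlled); Lemma~\ref{lem:sample mutual incoherence condition} for the incoherence factor; and Lemma~\ref{lem:bound X_se and X_s^ce} (cited in the proof sketch) to bound the quantities $\|\frac{1}{n}\bX_{.S}^\T\be\|_\infty$ and $\|\frac{1}{n}\bX_{.S^c}^\T\be\|_\infty$ with high probability. The sample-size requirement $n = \Omega(s^3\log d / \tau_1)$ and the failure probability $\calO(1/d)$ come from taking a union bound over these events (the $s^3$ comes from Lemma~\ref{lem:sample mutual incoherence condition}, the $\log d$ from the noise concentration over $d - s$ coordinates).

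The main obstacle I anticipate is the careful bookkeeping in the $S^c$-block bound: one must track how the $\ell_\infty\to\ell_\infty$ operator norms of $\bHhat_{S^cS}\bHhat_{SS}^{-1}$ and the deviation $\bHhat_{SS}^{-1} - \bH_{SS}^{-1}$ interact with the $\ell_\infty$ noise bounds, keeping every constant explicit enough that the stated $\lambda_n$ threshold $\frac{128\rho k}{\alpha}\frac{\sqrt{\log d}}{n}$ actually suffices; a secondary subtlety is that, because $\gamma\bz^*$ appears in the residual, one must confirm it cancels exactly (it does, since $\bz^*$ is fixed and matches the generative model) rather than contributing an uncontrolled bias term — this cancellation is the reason clustering "comes for free" and is worth stating explicitly in the proof.
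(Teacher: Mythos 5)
Your proposal is correct and follows essentially the same route as the paper: fix $\bZ=\bZ^*$ so the stationarity condition collapses to the KKT condition of the restricted LASSO, cancel $\gamma\bz^*$ via the generative model, solve for $\Delta_S$ from the $S$-block, and bound $\|\bg_{S^c}\|_\infty$ strictly below $1$ by combining the finite-sample incoherence bound of Lemma~\ref{lem:sample mutual incoherence condition} with the sub-exponential noise bounds of Lemma~\ref{lem:bound X_se and X_s^ce} (the paper's explicit constants yield $\|\bg_{S^c}\|_\infty \le 1-\alpha/4$). The only difference is bookkeeping of constants, which you flagged yourself as the remaining detail.
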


\subsection{Verifying the Complementary Slackness  \eqref{eq:complimentarity}}
\label{subsec:Verifying Complementary Slackness Condition}
Next, we will show that the setting of $\Lambda$ and $\bZ$ in \eqref{eq:primal dual variable setting} satisfies the complementary slackness condition \eqref{eq:complimentarity}. To this end, we will show the following:
\begin{lemma}
	\label{lem:eigenvalue of Lambda}
	Let $\Lambda$ be defined as in equation \eqref{eq:primal dual variable setting}, then $\zeta^* \triangleq \begin{bmatrix} 1 \\ \bz^* \end{bmatrix}$ is an eigenvector of $\Lambda$ corresponding to the eigenvalue $0$. Furthermore, $\inner{\Lambda}{\bZ^*} = 0$.
\end{lemma}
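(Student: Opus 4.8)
The plan is to verify the two claims of Lemma~\ref{lem:eigenvalue of Lambda} directly from the explicit construction $\Lambda = \bM(\tw) - \diag(\bM(\tw)\bZ^*)$ given in Equation~\eqref{eq:primal dual variable setting}. First I would recall the block structure of $\bM(\tw)$ from Equation~\eqref{eq:M and Z}: it has the form $\begin{bmatrix} l(\tw) & \frac{\gamma}{n}(\bX\tw - \by)^\T \\ \frac{\gamma}{n}(\bX\tw - \by) & \frac{\gamma^2}{n}\bI_{n\times n} \end{bmatrix}$. Writing $r \triangleq \frac{1}{n}(\bX\tw - \by) \in \real^n$, so that $l(\tw) = r^\T(\bX\tw - \by)/\ldots$ — more cleanly, set $b \triangleq \bX\tw - \by$ so $l(\tw) = \frac{1}{n}b^\T b$ and the off-diagonal block is $\frac{\gamma}{n}b$. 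Then $\bM(\tw) = \frac{1}{n}\begin{bmatrix} b^\T b & \gamma b^\T \\ \gamma b & \gamma^2 \bI \end{bmatrix} = \frac{1}{n}\begin{bmatrix} b \\ \gamma \mathbf{1}_n \end{bmatrix}\begin{bmatrix} b \\ \gamma\mathbf{1}_n \end{bmatrix}^\T$ — wait, that is not quite a rank-one factorization because the bottom-right block is $\gamma^2 \bI$, not $\gamma^2 \mathbf{1}\mathbf{1}^\T$; I would instead write $\bM(\tw) = \frac{1}{n}\sum_{i=1}^n v_i v_i^\T$ where $v_i = \begin{bmatrix} b_i \\ \gamma \be_i \end{bmatrix} \in \real^{n+1}$ and $\be_i$ is the $i$-th standard basis vector in $\real^n$, so that $\bM(\tw)$ is manifestly positive semidefinite and its $i$-th such rank-one piece acts as $v_i v_i^\T \zeta^* = v_i\,(b_i + \gamma z^*_i)$.

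The key observation is that $\zeta^* = \begin{bmatrix} 1 \\ \bz^* \end{bmatrix}$ satisfies $\bM(\tw)\zeta^* = \frac{1}{n}\sum_i v_i(b_i + \gamma z^*_i)$, and $b_i + \gamma z^*_i = (\bX\tw - \by)_i + \gamma z^*_i = (\bX\tw + \gamma\bz^* - \by)_i$; using $\by = \bX w^* + \gamma\bz^* + \be$ this is $(\bX(\tw - w^*) - \be)_i = (\bX_{\cdot S}\Delta_S - \be)_i$ since $\tw$ and $w^*$ are supported on $S$. Now, $\tw_S$ is defined as the minimizer of the restricted LASSO $\frac{1}{n}\|\bX_{\cdot S}w_S + \gamma\bz^* - \by\|_2^2 + \lambda_n\|w_S\|_1$; its stationarity condition reads $\frac{2}{n}\bX_{\cdot S}^\T(\bX_{\cdot S}\tw_S + \gamma\bz^* - \by) + \lambda_n \bg_S = 0$. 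This does \emph{not} force $\bX_{\cdot S}\tw_S + \gamma\bz^* - \by = 0$, so $\bM(\tw)\zeta^*$ need not vanish outright. I would then compute $\Lambda\zeta^* = \bM(\tw)\zeta^* - \diag(\bM(\tw)\bZ^*)\zeta^*$. Since $\diag(\bM(\tw)\bZ^*)$ is a diagonal matrix, its action on $\zeta^*$ is entrywise multiplication: $(\diag(\bM(\tw)\bZ^*)\zeta^*)_j = (\bM(\tw)\bZ^*)_{jj}\,\zeta^*_j$. The crucial identity to exploit is that $\bZ^* = \zeta^*{\zeta^*}^\T$, so $(\bM(\tw)\bZ^*)_{jj} = (\bM(\tw)\zeta^*{\zeta^*}^\T)_{jj} = (\bM(\tw)\zeta^*)_j\,\zeta^*_j$. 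Hence $(\diag(\bM(\tw)\bZ^*))_{jj}\,\zeta^*_j = (\bM(\tw)\zeta^*)_j\,(\zeta^*_j)^2 = (\bM(\tw)\zeta^*)_j$ because every entry of $\zeta^*$ is $\pm 1$, so $(\zeta^*_j)^2 = 1$. Therefore $\Lambda\zeta^* = \bM(\tw)\zeta^* - \bM(\tw)\zeta^* = \mathbf{0}$, proving $\zeta^*$ is an eigenvector with eigenvalue $0$.

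For the second claim, $\inner{\Lambda}{\bZ^*} = \inner{\Lambda}{\zeta^*{\zeta^*}^\T} = \tr(\Lambda\zeta^*{\zeta^*}^\T) = {\zeta^*}^\T\Lambda\zeta^* = {\zeta^*}^\T\mathbf{0} = 0$, using the first claim and the symmetry of $\Lambda$. I expect the only mild subtlety — not really an obstacle — to be bookkeeping the identity $(\bM(\tw)\bZ^*)_{jj} = (\bM(\tw)\zeta^*)_j\zeta^*_j$ cleanly, i.e., being careful that $\diag(\cdot)$ here denotes forming a diagonal matrix from the diagonal of its argument (consistent with the paper's overloaded notation), and then observing that the $\pm 1$ structure of $\zeta^*$ is exactly what makes the subtracted diagonal term cancel $\bM(\tw)\zeta^*$. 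No probabilistic argument or appeal to Assumptions~\ref{assum:postive definite}–\ref{assum:mutual incoherence condition} is needed here; this lemma is a deterministic algebraic consequence of the construction and of $\bZ^* = \zeta^*{\zeta^*}^\T$ with $\zeta^* \in \{-1,1\}^{n+1}$ (in the first coordinate, $1 \in \{-1,1\}$ as well). The result will then feed into verifying complementary slackness~\eqref{eq:complimentarity} in subsection~\ref{subsec:Verifying Complementary Slackness Condition}.
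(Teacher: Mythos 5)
Your proposal is correct and takes essentially the same route as the paper: both verify directly that $\Lambda \zeta^* = \mathbf{0}_{n+1\times 1}$ and then obtain $\inner{\Lambda}{\bZ^*} = \tr(\Lambda\, \zeta^* {\zeta^*}^\T) = {\zeta^*}^\T \Lambda \zeta^* = 0$ by cyclicity of the trace. The only difference is bookkeeping: the paper writes out the explicit block form of $\Lambda = \bM(\tw) - \diag(\bM(\tw)\bZ^*)$ and multiplies it by $\zeta^*$, whereas you reach the same cancellation abstractly via the identity $\diag\bigl(\bM(\tw)\,\zeta^*{\zeta^*}^\T\bigr)\zeta^* = \bM(\tw)\zeta^*$, valid because every entry of $\zeta^*$ is $\pm 1$.
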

\begin{proof}
	\label{proof:eigenvalue of Lambda}
	We will show that $\Lambda \zeta^* = \bzero_{n+1 \times 1}$. Note that,
	\begin{align*}
	\begin{split}
	\Lambda = M(w) - \diag(M(w) \bZ^*) = \begin{bmatrix} - \frac{\gamma}{n} (Xw - y)^T  \bz^* & \frac{\gamma}{n} (Xw - y)^T \\ \frac{\gamma}{n} (Xw - y) & -\diag(\frac{\gamma}{n} (Xw - y)  {\bz^*}^T) \end{bmatrix}
	\end{split}
	\end{align*}
	Multiplying the above matrix with $\zeta^*$ gives us $\bzero_{n+1 \times 1}$. Now $\inner{\Lambda}{\bZ^*} = \tr(\Lambda^\T \bZ^*) = \tr(\Lambda \zeta^* {\zeta^*}^\T) = 0$ as $\Lambda \zeta^* = \bzero_{n+1 \times 1} $. 
\end{proof}

\subsection{Verifying the Dual Feasibility \eqref{eq:dual feasibility}}
\label{subsec:verifying dual feasibility}

We have already shown that $\Lambda$ has $0$ as one of its eigenvalues. To verify that it satisfies the dual feasibility condition \eqref{eq:dual feasibility}, we show that second minimum eigenvalue of $\Lambda$ is greater than zero with high probability. At this point, it might not be clear why strict positivity is necessary, but this will be argued later in subsection~\ref{subsec:uniqueness of the solution}. Now, note that: 
\begin{align}
\label{eq:prob second eig value}
\begin{split}
&\prob(\eig_2(\Lambda) > 0) \geq \prob(\eig_2(\Lambda) > 0,\, \| \Delta \|_2 \leq h(n)) \geq \prob(\eig_2(\Lambda) > 0 | \| \Delta \|_2 \leq h(n)) \prob( \| \Delta \|_2 \leq h(n) )
\end{split}
\end{align}
where $h(n)$ is a function of $n$. We bound $\prob(\eig_2(\Lambda) > 0)$ in two parts. First, we bound $\prob(\eig_2(\Lambda) > 0)$ given that $\| \Delta \|_2 \leq h(n)$ and then we bound the probability of $\| \Delta \|_2 \leq h(n)$.
\begin{lemma}
	\label{lem:second eigenvalue of Lambda}
	Given that $\| \Delta \|_2 \leq h(n)$, the second minimum eigenvalue of $\Lambda$ as defined in equation \eqref{eq:primal dual variable setting} is strictly greater than $0$ with probability at least $1 -  \exp(-\frac{\gamma^2}{8 (\rho^2 h(n)^2 + \sigma_e^2)} + \log n) $.
\end{lemma}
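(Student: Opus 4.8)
The plan is to exploit the explicit block form of $\Lambda$ from Lemma~\ref{lem:eigenvalue of Lambda}, reduce $\Lambda$ by an orthogonal similarity to a matrix with a transparent quadratic form, show that $\eig_2(\Lambda)>0$ is implied by a single sign condition on $n$ scalars, and finally control that condition with a sub-Gaussian tail bound and a union bound. Concretely, writing $r \triangleq \bX\tw - \by \in \real^n$ and $D \triangleq \diag(\bz^*)$ (so $D=D^\T=D^{-1}$), the proof of Lemma~\ref{lem:eigenvalue of Lambda} gives
\begin{align*}
\Lambda = \begin{bmatrix} -\tfrac{\gamma}{n}\, r^\T \bz^* & \tfrac{\gamma}{n}\, r^\T \\ \tfrac{\gamma}{n}\, r & -\tfrac{\gamma}{n}\,\diag\!\big(r\,{\bz^*}^\T\big) \end{bmatrix}.
\end{align*}
Setting $v \triangleq D r$, i.e.\ $v_i = z^*_i r_i$, and conjugating by the orthogonal symmetric matrix $P \triangleq \begin{bmatrix} 1 & \bzero^\T \\ \bzero & D \end{bmatrix}$, a short computation using $D^2=\bI$ gives
\begin{align*}
\tilde{\Lambda} \;\triangleq\; P\Lambda P \;=\; \tfrac{\gamma}{n}\begin{bmatrix} -\mathbf{1}^\T v & v^\T \\ v & -\diag(v) \end{bmatrix},
\end{align*}
so $\Lambda$ and $\tilde\Lambda$ are orthogonally similar (equal spectra), and $P\zeta^*=\mathbf{1}_{n+1}$ is the kernel direction of $\tilde\Lambda$ supplied by Lemma~\ref{lem:eigenvalue of Lambda}. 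For any $(t,u)\in\real\times\real^n$ one checks directly that
\begin{align*}
\begin{bmatrix} t \\ u \end{bmatrix}^\T \tilde\Lambda \begin{bmatrix} t \\ u \end{bmatrix} \;=\; -\tfrac{\gamma}{n}\sum_{i=1}^{n} v_i\,(t-u_i)^2 .
\end{align*}
Hence, on the event $\{\,v_i<0 \text{ for all } i\in\seq{n}\,\}$ this quadratic form is nonnegative and vanishes only when $u_i=t$ for every $i$, i.e.\ only on $\mathrm{span}(\mathbf{1}_{n+1})$; thus $\tilde\Lambda\succeq\bzero$ with a one-dimensional kernel, equivalently $\eig_2(\Lambda)=\eig_2(\tilde\Lambda)>0$. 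This reduces the lemma to bounding $\prob(\exists\, i:\, v_i\ge 0)$.

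Next I would make the event explicit via the generative model. Since $\by = \bX w^* + \gamma\bz^* + \be$ and $\Delta = \tw - w^*$, using $(z^*_i)^2=1$,
\begin{align*}
v_i \;=\; z^*_i\big(\bX_{i\cdot}\tw - y_i\big) \;=\; z^*_i\,\bX_{i\cdot}\Delta - \gamma - z^*_i e_i ,
\end{align*}
so $v_i<0 \iff z^*_i(\bX_{i\cdot}\Delta - e_i) < \gamma$. Conditioning on $\{\|\Delta\|_2\le h(n)\}$ and treating $\Delta$ as a fixed vector of norm at most $h(n)$, the sub-Gaussian assumption on $X$ (with $\alpha=\Delta$) makes $\bX_{i\cdot}\Delta$ zero-mean sub-Gaussian with parameter at most $\rho\,h(n)$; $e_i$ is zero-mean sub-Gaussian with parameter $\sigma_e$ and independent of $\bX_{i\cdot}$; hence $z^*_i(\bX_{i\cdot}\Delta - e_i)$ is zero-mean sub-Gaussian with parameter at most $\sqrt{\rho^2 h(n)^2 + \sigma_e^2}$. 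A one-sided sub-Gaussian tail bound then gives $\prob(v_i\ge 0)\le \exp\!\big(-\tfrac{\gamma^2}{2(\rho^2 h(n)^2+\sigma_e^2)}\big)\le \exp\!\big(-\tfrac{\gamma^2}{8(\rho^2 h(n)^2+\sigma_e^2)}\big)$, and a union bound over $i\in\seq{n}$ yields $\prob(\exists\, i:\, v_i\ge 0)\le \exp\!\big(-\tfrac{\gamma^2}{8(\rho^2 h(n)^2+\sigma_e^2)}+\log n\big)$, which is exactly the claimed failure probability; on the complement, $\eig_2(\Lambda)>0$ by the reduction above.

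The step I expect to be the main obstacle is the conditioning. Strictly, $\Delta = \tw_S - w^*_S$ is the restricted LASSO estimate and therefore a function of $\bX_{.S}$, including the $i$-th row, so $\bX_{i\cdot}\Delta$ is not genuinely a fixed-direction linear form of $\bX_{i\cdot}$ and one cannot invoke its sub-Gaussian MGF without care. I would handle this by staying on the high-probability event $\{\|\Delta\|_2\le h(n)\}$ furnished by Lemma~\ref{lem:bound on Delta}, and — if a fully rigorous tail is required — decoupling row $i$ from $\tw_S$ (e.g.\ via a leave-one-out estimate) so that the tail of $\bX_{i\cdot}\Delta$ is controlled uniformly over the admissible directions of $\Delta$. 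Everything else — the similarity reduction to $\tilde\Lambda$, the quadratic-form identity, and the union bound — is elementary.
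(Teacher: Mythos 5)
Your proposal is correct, and it reaches the same probabilistic core as the paper but by a different deterministic route. The paper applies Haynsworth's inertia additivity formula to the block form of $\Lambda$: the Schur complement with respect to the diagonal block $-\frac{\gamma}{n}\diag\bigl((\bX\tw-\by){\bz^*}^\T\bigr)$ evaluates to $0$, so the inertia of $\Lambda$ is that of the diagonal block plus one extra zero eigenvalue, and positivity of $\eig_2(\Lambda)$ reduces to all diagonal entries $\frac{\gamma^2}{n}-\frac{\gamma}{n}z_i^*(\bX_{i\cdot}^\T\Delta+\be_i)$ being positive. You instead conjugate by the orthogonal matrix $\diag(1,\diag(\bz^*))$ and verify the identity $[t;u]^\T\tilde\Lambda[t;u]=-\frac{\gamma}{n}\sum_i v_i(t-u_i)^2$, which shows directly that $\tilde\Lambda\succeq\bzero$ with kernel exactly $\mathrm{span}(\mathbf{1})$ on the event $\{v_i<0\ \forall i\}$; your event $v_i<0$ is algebraically the same sign condition as the paper's, since $v_i = z_i^*\bX_{i\cdot}\Delta-\gamma-z_i^*\be_i$. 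Both arguments then finish identically: conditioning on $\|\Delta\|_2\le h(n)$, a one-sided sub-Gaussian tail with parameter $\rho^2h(n)^2+\sigma_e^2$ at level $\gamma$ (the paper conservatively uses $\gamma/2$, hence the $8$ in the exponent), and a union bound over the $n$ coordinates. Your route is more elementary and self-contained (no inertia formula needed) and yields the one-dimensional null space explicitly, which is what the uniqueness argument in the paper later uses; the paper's route gives the full inertia count in one line at the cost of invoking the Haynsworth result. The caveat you raise about $\Delta$ depending on $\bX_{i\cdot}$, so that $\bX_{i\cdot}\Delta$ is not a fixed-direction linear form, applies equally to the paper's own proof, which also conditions on $\{\|\Delta\|_2\le h(n)\}$ and treats $\Delta$ as fixed; flagging it and sketching a leave-one-out fix is a fair strengthening rather than a deviation.
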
 
\begin{proof}
	\label{proof:second eigenvalue of Lambda}
	As the first step, we invoke Haynesworth's inertia additivity formula~\cite{haynsworth1968determination} to prove our claim. Let $R$ be a block matrix of the form $ R = \begin{bmatrix} A & B \\ B^\T & C \end{bmatrix}$, then inertia of matrix $R$, denoted by $\inertia(R)$, is defined as the tuple $(\pi(R), \nu(R), \delta(R) )$ where $\pi(R)$ is the number of positive eigenvalues, $\nu(R)$ is the number of negative eigenvalues and $\delta(R)$ is the number of zero eigenvalues of matrix $R$. Haynesworth's inertia additivity formula is given as:
	\begin{align}
	\label{eq:Haynesworth inertia additivity formula}
	\begin{split}
	\inertia(R) = \inertia(C) + \inertia(A - B^\T C^{-1} B)
	\end{split}
	\end{align}
	Note that,
	\begin{align*}
	\begin{split}
	& \Lambda = \bM(w) - \diag(\bM(w) \bZ^*) =\begin{bmatrix} - \frac{\gamma}{n} (\bX w - \by)^\T  \bz^* & \frac{\gamma}{n} (\bX w - \by)^\T \\ \frac{\gamma}{n} (\bX w - \by) & -\diag(\frac{\gamma}{n} (\bX w - \by)  {\bz^*}^\T) \end{bmatrix}
	\end{split}
	\end{align*}
	Then the following holds true by applying equation \eqref{eq:Haynesworth inertia additivity formula}:
	\begin{align*}
	\begin{split}
	\inertia(\Lambda ) =& \inertia( -\diag(\frac{\gamma}{n} (\bX w - \by)  {\bz^*}^\T)  ) + \inertia(  - \frac{\gamma}{n} (\bX w - \by)^\T  \bz^* - \frac{\gamma}{n} (\bX w - \by)^\T (-\diag(\frac{\gamma}{n} (\bX w - \by)  {\bz^*}^\T)^{-1} \\
	&\frac{\gamma}{n} (\bX w - \by)  )
	\end{split}
	\end{align*}
	Notice that the term $- \frac{\gamma}{n} (\bX w - \by)^\T  \bz^* - \frac{\gamma}{n} (\bX w - \by)^\T (-\diag(\frac{\gamma}{n} (\bX w - \by)  {\bz^*}^\T)^{-1} \frac{\gamma}{n} (\bX w - \by))$ evaluates to $0$. Thus, it has $0$ positive eigenvalue, $0$ negative eigenvalue and $1$ zero eigenvalue. We have also shown in Lemma \ref{lem:eigenvalue of Lambda} that  $\Lambda$ has at least $1$ zero eigenvalue. It follows that
	\begin{align}
	\label{eq:inertia}
	\begin{split}
	&\pi(\Lambda) = \pi(-\diag(\frac{\gamma}{n} (\bX w - \by) {\bz^*}^\T) ), \quad \nu(\Lambda) = \nu(-\diag(\frac{\gamma}{n} (\bX w - \by)  {\bz^*}^\T) ) \\
	&\delta(\Lambda) = \delta(-\diag(\frac{\gamma}{n} (\bX w - \by)  {\bz^*}^\T) ) + 1
	\end{split}
	\end{align}
	
	Next, we will show that $-\diag(\frac{\gamma}{n} (\bX w - \by)  {\bz^*}^\T)$ has all of its eigenvalues being positive.
	\begin{lemma}
		\label{lem:all positive eigenvalues}
		For a given $\| \Delta \|_2 \leq h(n)$, all eigenvalues of $-\diag(\frac{\gamma}{n} (\bX w - \by)  {\bz^*}^\T)$ are strictly greater than $0$ with probability at least $1 - \exp(-\frac{\gamma^2}{8 (\rho^2 h(n)^2 + \sigma_e^2)} + \log n) $. 
	\end{lemma}
	\begin{proof}
		Using equation \eqref{eq:generative process}, we can expand the term $-\diag(\frac{\gamma}{n} (\bX w - \by)  {\bz^*}^\T)$ as $-\diag(\frac{\gamma}{n} (\bX (w - w^*) - \gamma \bz^* - \be)  {\bz^*}^\T)$. Since eigenvalues of a diagonal matrix are its diagonal elements, we focus on the $i$-th diagonal element of $-\diag(\frac{\gamma}{n} (\bX \Delta - \gamma \bz^* - \be)  {\bz^*}^\T)$ which is $\frac{\gamma^2}{n} - \frac{\gamma}{n} z_i^* (\bX_{i\cdot}^\T \Delta + \be_i)$. Note that $(\bX_{i\cdot}^\T \Delta + \be_i)$ is a sub-Gaussian random variable with parameter $\rho^2 \| \Delta \|_2^2 + \sigma_e^2$. Using the tail inequality for sub-Gaussian random variables, for some $t > 0$, we can write:
		\begin{align*}
		\begin{split}
		\prob( (\bX_{i\cdot} \Delta + \be_i) \geq t) \leq \exp(-\frac{t^2}{2 (\rho^2 \| \Delta \|_2^2 + \sigma_e^2)})
		\end{split}
		\end{align*}
		We take union bound across all the diagonal elements and replace $t = \frac{\gamma}{2}$ and $\| \Delta \|_2 \leq h(n)$ to complete the proof, i.e.,
		\begin{align}
		\label{eq:second eig value eq}
		\begin{split}
		\prob(\exists i \in \seq{n} \, , (\bX_{i\cdot} \Delta + \be_i) \geq t) \leq n \exp(-\frac{\gamma^2}{8 (\rho^2 h(n)^2 + \sigma_e^2)}) \, .
		\end{split}
		\end{align} 
	\end{proof}
	The result of Lemma~\ref{lem:second eigenvalue of Lambda} follows directly from Lemma~\ref{lem:all positive eigenvalues} and equation~\eqref{eq:inertia}.
\end{proof}

Now, we are ready to bound $\| \Delta \|_2$. Due to our primal dual construction, $\| \Delta \|_2$ is simply equal to $\| \Delta_S \|_2$. We provide a bound on $\Delta_S$ in the following lemma:
\begin{lemma}
	\label{lem:bound on Delta}
	If Assumptions~\ref{assum:postive definite} and \ref{assum:mutual incoherence condition} hold, $\lambda_n \geq \frac{128 \rho k \sqrt{\log d}}{\alpha n}$ and $n = \Omega( \frac{s^3 \log d}{\tau_2(C_{\min}, \rho, k)})$, then $\| \Delta_S \|_2 \leq \frac{2\lambda_n \sqrt{s}}{C_{\min}}$ with probability at least $1 - \calO(\frac{1}{d})$ where $\tau_2(C_{\min}, \rho, k)$ is a constant independent of $s, d$ or $n$.
\end{lemma}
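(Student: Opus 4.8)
The plan is to read the definition of $\tw_S$ as an ordinary (reduced) LASSO on $\real^s$ and run the standard $\ell_2$-error argument. First I would substitute the generative model $\by = \bX_{.S} w^*_S + \gamma \bz^* + \be$ into the objective defining $\tw_S$; since both $\tw$ and $w^*$ are supported on $S$, the residual $\bX_{.S} w_S + \gamma \bz^* - \by$ collapses to $\bX_{.S}\Delta_S - \be$, so the objective becomes $\frac1n\|\bX_{.S}\Delta_S - \be\|_2^2 + \lambda_n\|w_S\|_1$ up to the additive constant $\frac1n\|\be\|_2^2$. Note $\Delta = (\Delta_S, \bzero_{d-s\times 1})$ by construction, so $\|\Delta\|_2 = \|\Delta_S\|_2$ and it suffices to bound $\|\Delta_S\|_2$.

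Next I would exploit optimality of $\tw_S$: evaluating the objective at $\tw_S$ and at $w^*_S$ and using that $\tw_S$ is a minimizer gives, after expanding the quadratic and cancelling $\frac1n\|\be\|_2^2$, the basic inequality $\Delta_S^\T \bHhat_{SS}\Delta_S \le \frac2n\be^\T\bX_{.S}\Delta_S + \lambda_n(\|w^*_S\|_1 - \|\tw_S\|_1)$. (Equivalently, one may take the inner product of the stationarity condition \eqref{eq:stationarity w} restricted to $S$ with $\Delta_S$, arriving at the same inequality up to constants.)

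Then I would lower-bound the left-hand side and upper-bound the right-hand side. For the left side, Lemma~\ref{lem:sample positive definite} gives $\eig_{\min}(\bHhat_{SS}) \ge \tfrac{C_{\min}}{2}$ on an event of probability $1-\calO(\tfrac1d)$, so $\Delta_S^\T\bHhat_{SS}\Delta_S \ge \tfrac{C_{\min}}{2}\|\Delta_S\|_2^2$. For the right side, $\be^\T\bX_{.S}\Delta_S \le \|\bX_{.S}^\T\be\|_\infty\|\Delta_S\|_1$ and the reverse triangle inequality gives $\|w^*_S\|_1 - \|\tw_S\|_1 \le \|\Delta_S\|_1$; then I would invoke Lemma~\ref{lem:bound X_se and X_s^ce} (alternatively, a direct sub-Gaussian tail bound plus a union bound over the $s$ coordinates of $\bX_{.S}^\T\be$) to control $\frac1n\|\bX_{.S}^\T\be\|_\infty$. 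This is where the decaying-noise assumption $\sigma_e = k/\sqrt{\log n}$ and the choice $\lambda_n \ge \frac{128\rho k}{\alpha}\frac{\sqrt{\log d}}{n}$ enter: they force $\frac1n\|\bX_{.S}^\T\be\|_\infty$ to be a small fraction of $\lambda_n$ with probability $1-\calO(\tfrac1d)$. Using $\|\Delta_S\|_1 \le \sqrt{s}\,\|\Delta_S\|_2$, the right side is at most $c\,\lambda_n\sqrt{s}\,\|\Delta_S\|_2$ for a suitable absolute constant $c$; dividing through by $\|\Delta_S\|_2$ and absorbing constants yields $\|\Delta_S\|_2 \le \frac{2\lambda_n\sqrt{s}}{C_{\min}}$. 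The sample-complexity requirement $n = \Omega\!\big(\frac{s^3\log d}{\tau_2(C_{\min},\rho,k)}\big)$ is precisely what makes the two invoked high-probability events (the eigenvalue bound of Lemma~\ref{lem:sample positive definite} and the noise bound) hold simultaneously, and a union bound keeps the total failure probability $\calO(\tfrac1d)$.

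The arithmetic here is routine; the only real subtlety is the probabilistic bookkeeping, namely making the events of Lemmas~\ref{lem:sample positive definite} and \ref{lem:bound X_se and X_s^ce} hold together at the stated probability, and calibrating the constant in the noise bound against $\lambda_n$ so that the final constant comes out as $2$ rather than merely $\calO(\lambda_n\sqrt{s}/C_{\min})$.
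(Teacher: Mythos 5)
Your argument is sound and reaches the stated bound by a genuinely different route than the paper. The paper does not use the basic inequality at all: it solves the stationarity condition restricted to $S$ (equation \eqref{eq:g support}) in closed form, $\Delta_S = \bHhat_{SS}^{-1}\bigl(\frac{1}{n}\bX_S^\T\be - \frac{\lambda_n}{2}\bg_S\bigr)$, and then bounds $\|\bHhat_{SS}^{-1}\|_2 \le \frac{2}{C_{\min}}$ via Lemma~\ref{lem:sample positive definite}, $\|\bg_S\|_2 \le \sqrt{s}$, and $\|\frac{1}{n}\bX_S^\T\be\|_2 \le \frac{\lambda_n\sqrt{s}}{2}$ via a dedicated $\ell_2$ noise bound (Lemma~\ref{lem:bound on l2 Xse}); this yields the constant $2$ exactly. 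Your route uses the same two probabilistic ingredients (the restricted eigenvalue bound and a coordinate-wise noise bound over $S$) but processes them through the quadratic form $\Delta_S^\T\bHhat_{SS}\Delta_S$, the standard LASSO basic-inequality analysis, which is arguably more robust and generalizes beyond the exact-support construction. One caveat about the constant, which you correctly identify as the only subtlety: with the noise calibration actually available from Lemma~\ref{lem:bound X_se and X_s^ce} (namely $\frac{2}{n}\|\bX_S^\T\be\|_\infty \le \frac{\alpha}{4(2-\alpha)}\lambda_n \le \frac{\lambda_n}{4}$), the basic inequality combined with $\|w_S^*\|_1 - \|\tw_S\|_1 \le \|\Delta_S\|_1$ only gives $\frac{C_{\min}}{2}\|\Delta_S\|_2 \le \frac{5}{4}\lambda_n\sqrt{s}$, i.e. $\|\Delta_S\|_2 \le \frac{5\lambda_n\sqrt{s}}{2C_{\min}}$, slightly worse than the stated $2$; your parenthetical variant, taking the inner product of the stationarity condition on $S$ with $\Delta_S$ so that the penalty contributes $\frac{\lambda_n}{2}\|\Delta_S\|_1$ rather than $\lambda_n\|\Delta_S\|_1$, gives $\|\Delta_S\|_2 \le \frac{5\lambda_n\sqrt{s}}{4C_{\min}} \le \frac{2\lambda_n\sqrt{s}}{C_{\min}}$, so you should run that version (or retune the noise bound) to land on the constant in the lemma; the two are not quite ``the same inequality up to constants'' for this purpose.
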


By taking $h(n) = \frac{2 \lambda_n \sqrt{s} }{C_{\min}} $ in \eqref{eq:prob second eig value}, we get the following: $\prob(\eig_2(\Lambda) > 0) \geq 1 - \calO(\frac{1}{n})$, 
as long as $n = \Omega( \frac{s^3 \log d}{\tau_3(C_{\min}, \rho, k, \alpha, \gamma)})$, where $\tau_3(C_{\min}, \rho, k, \alpha, \gamma)$ is a constant independent of $s, d$ and $n$. The above results combined with the property that optimization problem \eqref{eq:opt prob 2} is invex ensure that the setting of primal and dual variables in Theorem~\ref{thm:primal dual witness construction} is indeed the globally optimal solution to the problem \eqref{eq:opt prob 2}. It remains to show that this solution is also unique.

\subsection{Uniqueness of the Solution}
\label{subsec:uniqueness of the solution}

First, we prove that $\bZ^*$ is a unique solution. Suppose there is another solution $\bar{\bZ}$ which satisfies all KKT conditions and is optimal. Then, $\bar{\bZ} \succeq \bzero_{n+1 \times n+1}$ and $\inner{\Lambda}{\bar{\bZ}} = 0$. Since, $\Lambda \succeq \bzero_{n+1 \times n+1}$ and $\eig_2(\Lambda) > 0$, $\zeta^*$ spans all of its null space. This enforces that $\bar{\bZ}$ is a multiple of $\bZ^*$. But primal feasibility dictates that $\diag(\bar{\bZ}) = \mathbf{1}$. It follows that $\bar{\bZ} = \bZ^*$. To show that $\tw$ is unique, it suffices to show that $\tw_S$ is unique. After substituting $\bZ = \bZ^*$, we observe that the Hessian of optimization problem \eqref{eq:opt prob 2} with respect to $w$ and restricted to rows and columns in $S$, i.e., $\bHhat_{SS}$ is positive definite. This ensures that $\tw$ is a unique solution.

The setting of primal and dual variables in Theorem \ref{thm:primal dual witness construction} not only solves the optimization problem \eqref{eq:opt prob 2} but also gives rise to the following results:
\begin{corollary}
	\label{cor:primal dual vairables}
	If Assumptions~\ref{assum:postive definite} and \ref{assum:mutual incoherence condition} hold, $\lambda_n \geq \frac{128 \rho k}{\alpha} \frac{\sqrt{\log d}}{n}$ and $n = \Omega( \frac{s^3 \log d}{\tau_1(C_{\min}, \alpha, \sigma, \Sigma, \rho)} )$, then the following statements are true with probability at least $1 - \calO(\frac{1}{n})$:
	\begin{enumerate}[noitemsep]
		\item The solution $\bZ$ correctly recovers hidden attribute for each sample, i.e., $\bZ = \bZ^* = \zeta^* {\zeta^*}^\T$.
		\item The support of recovered regression parameter $\tw$ matches exactly with the support of $w^*$.
		\item If $\min_{i \in S} |w_i^*| \geq  \frac{4 \lambda_n \sqrt{s} }{C_{\min}} $ then for all $i \in \seq{d}$, $\tw_i$ and $w_i^*$ match up to their sign. 
	\end{enumerate}
\end{corollary}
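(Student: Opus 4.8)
The plan is to derive all three statements as immediate consequences of the primal--dual witness construction in Theorem~\ref{thm:primal dual witness construction} together with the $\ell_2$-error bound of Lemma~\ref{lem:bound on Delta}; essentially no new machinery is needed, so the work is careful bookkeeping rather than a fresh estimate.

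First I would dispatch statement~1. Theorem~\ref{thm:primal dual witness construction} guarantees that, under the stated hypotheses and with probability at least $1 - \calO(\frac1n)$, the pair $(\tw, \bZ)$ of \eqref{eq:primal dual variable setting} is the \emph{unique} global optimum of \eqref{eq:opt prob 2}; hence solving \eqref{eq:opt prob 2} returns exactly $\bZ = \bZ^* = \zeta^*{\zeta^*}^\T$ with $\zeta^* = \begin{bmatrix} 1 \\ \bz^* \end{bmatrix}$. Since $\bZ^*$ is rank one with leading entry $1$, its first row with the leading entry removed equals ${\bz^*}^\T$, so $\bz^*$ is recovered coordinate-wise from $\bZ^*$; this is statement~1.

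For statements~2 and~3 I would argue coordinate by coordinate. By construction $\tw = (\tw_S, \bzero_{d-s\times 1})$, so at once $\mathrm{supp}(\tw)\subseteq S$ and, for $i\in S^c$, $\tw_i = 0 = w_i^*$ (so their signs trivially agree). On the support, write $\Delta = \tw - w^*$; then $\Delta_{S^c} = \bzero$, so Lemma~\ref{lem:bound on Delta} gives $\|\Delta\|_2 = \|\Delta_S\|_2 \leq \frac{2\lambda_n\sqrt s}{C_{\min}}$ with probability at least $1 - \calO(\frac1d)$, whence $|\Delta_i| \leq \|\Delta_S\|_2 \leq \frac{2\lambda_n\sqrt s}{C_{\min}}$ for every $i\in S$. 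Invoking the hypothesis $\min_{i\in S}|w_i^*| \geq \frac{4\lambda_n\sqrt s}{C_{\min}}$ then yields $|\tw_i - w_i^*| = |\Delta_i| \leq \tfrac12|w_i^*| < |w_i^*|$, and a one-line triangle-inequality argument forces both $\tw_i\neq 0$ and $\sign(\tw_i) = \sign(w_i^*)$ for all $i\in S$. The sign agreement on $S$ and on $S^c$ together give statement~3, and $\tw_i\neq 0$ on all of $S$ upgrades $\mathrm{supp}(\tw)\subseteq S$ to the exact equality $\mathrm{supp}(\tw) = S$ of statement~2.

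The remaining step is probability accounting: intersect the $1 - \calO(\frac1n)$ event of Theorem~\ref{thm:primal dual witness construction} with the $1 - \calO(\frac1d)$ event of Lemma~\ref{lem:bound on Delta} and absorb $\calO(\frac1d)$ into $\calO(\frac1n)$ as is done throughout. I expect the only delicate point --- the closest thing to an obstacle --- to be statement~2: it is an \emph{exact} support-recovery claim and so cannot hold without a minimum-signal-strength assumption, so the write-up must make explicit that it is precisely the bound $\min_{i\in S}|w_i^*|\geq\frac{4\lambda_n\sqrt s}{C_{\min}}$ appearing in statement~3 that eliminates false negatives; once that is granted, the per-coordinate estimate above closes the argument with no further work.
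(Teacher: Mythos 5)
Your proposal is correct and follows essentially the same route as the paper: statement~1 is read off from the uniqueness of the witness $\bZ = \bZ^* = \zeta^*{\zeta^*}^\T$ in Theorem~\ref{thm:primal dual witness construction}, and statements~2--3 follow from $\tw = (\tw_S, \bzero_{d-s\times 1})$ together with $\|\Delta\|_\infty \leq \|\Delta_S\|_2 \leq \frac{2\lambda_n\sqrt{s}}{C_{\min}}$ (Lemma~\ref{lem:bound on Delta}) and the condition $\min_{i\in S}|w_i^*| \geq \frac{4\lambda_n\sqrt{s}}{C_{\min}}$. Your explicit observation that exact support recovery in statement~2 genuinely needs this minimum-signal condition to rule out false negatives is a fair and slightly more careful point than the paper's own proof, which only asserts that the supports match ``through construction'' (the construction alone yields merely $\mathrm{supp}(\tw)\subseteq S$).
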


\section{Experimental Validation}
\label{sec:experimental results}

\paragraph{Synthetic Experiments.} We validate our theoretical result in Theorem \ref{thm:primal dual witness construction} and Corollary~\ref{cor:primal dual vairables} by conducting experiments on synthetic data. We show that for a fixed $s$, we need $n = 10^\beta \log d$ samples for recovering the exact support of $w^*$ and exact hidden attributes $\bZ^*$, where $\beta \equiv \beta(s, C_{\min}, \alpha, \sigma, \Sigma, \rho, \gamma, k)$ is a control parameter which is independent of $d$. We draw $\bX \in \real^{n \times d }$ and $\be \in \real^n$ from Gaussian distributions. We randomly generate $w^* \in \real^d$ with $s = 10$ non-zero entries. Regarding the hidden attribute $\bz^* \in \{-1,1\}^n$, we set $\frac{n}{2}$ entries as $+1$ and the rest as $-1$. The response $\by \in \real^n$ is generated according to \eqref{eq:generative process}. According to Theorem~\ref{thm:primal dual witness construction}, the regularizer $\lambda_n$ is chosen to be equal to $\frac{128 \rho k}{\alpha} \frac{\sqrt{\log d}}{n}$. We solve optimization problem \eqref{eq:opt prob 2} by using an alternate optimization algorithm that converges to the optimal solution (See Appendix \ref{appndx:alternate optimization} for details). Figure~\ref{fig:recnumsample} shows that our method recovers the true support as we increase the number of samples. Similarly, Figure~\ref{fig:labelnumsample} shows that as the number of samples increase, our recovered hidden attributes are 100\% correct. Curves line up perfectly in Figure~\ref{fig:recnumsamplecp} and \ref{fig:labelnumsamplecp} when plotting with respect to the control parameter $\beta = \log \frac{n}{\log d}$. This validates our theoretical results (Details in Appendix \ref{appndx:experimental results}). 

\paragraph{Real World Experiments.} We show applicability of our method by identify groups with bias in the Communities and Crime data set~\cite{redmond2002data} and the Student Performance data set~\cite{cortez2008using}. In both cases, our method is able to recover groups with bias (Details in Appendix \ref{appndx:real world experiments}).

\begin{figure*}[!ht]
	\centering
	\begin{subfigure}{.25\textwidth}
		\centering
		\includegraphics[width=\linewidth]{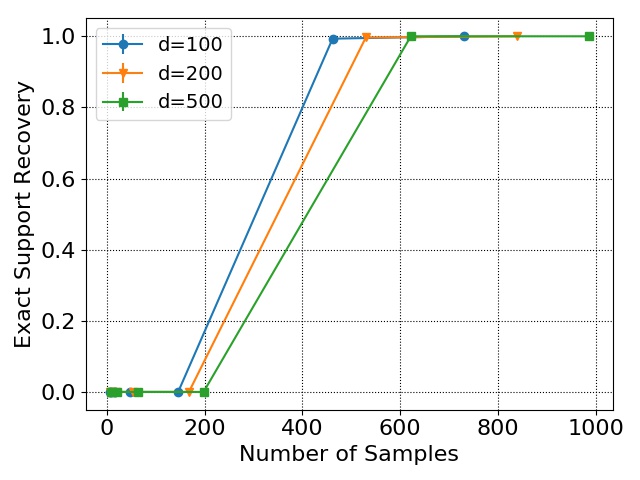}
		\caption{Recovery of $S$ vs $n$}
		\label{fig:recnumsample}
	\end{subfigure}%
	\begin{subfigure}{.25\textwidth}
		\centering
		\includegraphics[width=\linewidth]{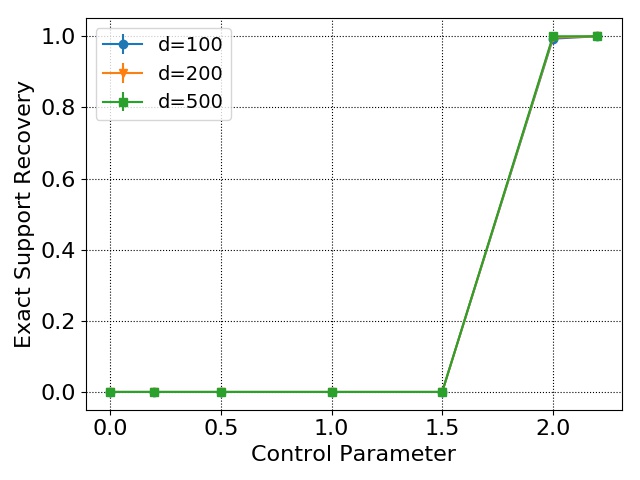}	
		\caption{Recovery of $S$ vs $\beta$}
		\label{fig:recnumsamplecp}
	\end{subfigure}%
	\begin{subfigure}{.25\textwidth}
		\centering
		\includegraphics[width=\linewidth]{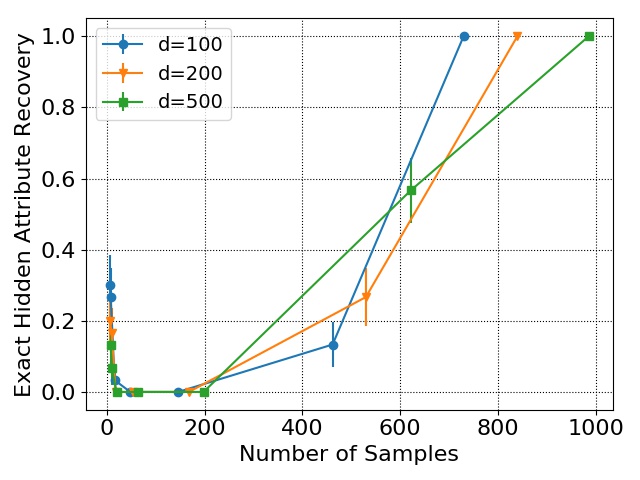}
		\caption{Recovery of $\bZ^*$ vs $n$}
		\label{fig:labelnumsample}
	\end{subfigure}%
	\begin{subfigure}{.25\textwidth}
		\centering
		\includegraphics[width=\linewidth]{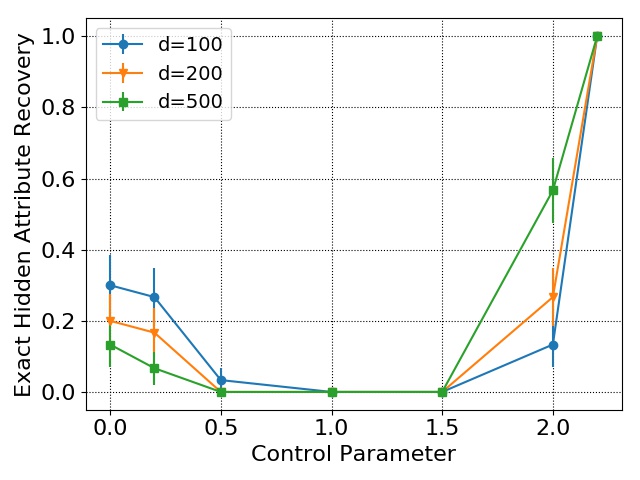}	
		\caption{Recovery of $\bZ^*$ vs $\beta$}
		\label{fig:labelnumsamplecp}
	\end{subfigure}%
	\caption{Left two: Exact support recovery of $w^*$ across $30$ runs. Right two: Exact hidden attribute recovery of $\bZ^*$ across $30$ runs.}
	\label{fig:recovery}
\end{figure*}


\clearpage

\appendix

\noindent\rule{\textwidth}{3pt}
\begin{center}
	{\Large \textbf{Supplementary Material: Fair Sparse Regression with Clustering: An Invex Relaxation for a Combinatorial Problem}}
\end{center}
\noindent\rule{\textwidth}{1pt}

\section{Proof of Lemma \ref{lem:invexity}}
\label{sec:proof of lem:invexity}

\paragraph{Lemma \ref{lem:invexity}}
\emph{For $(w, \bZ) \in C$, the functions $ f(w, \bZ) = \inner{\bM'(w)}{\bZ}$ and $ g(w, \bZ) = \inner{a}{w}$ are $\eta$-invex for $\eta(w, \bar{w}, \bZ, \bar{\bZ}) \triangleq \begin{bmatrix} w - \bar{w} \\ \bM'(\bar{w})^{-1} \bM'(w) (\bZ - \bar{\bZ}) \end{bmatrix}$, where we abuse the vector/matrix notation for clarity of presentation, and avoid the vectorization of matrices.}
\begin{proof}
	We need to prove the following two inequalities. 
	\begin{align}
	\label{eq:invexity 1}
	\begin{split}
	& f(w, \bZ) - f(\bar{w}, \bar{\bZ}) - \inner{ \nabla_{\bar{w}, \bar{\bZ}} f(w, \bZ)}{\eta(w, \bar{w}, \bZ, \bar{\bZ})} \geq 0 \, ,
	\end{split}
	\end{align}
	\begin{align}
	\label{eq:invexity 2}
	\begin{split}
	&g(w, \bZ) - g(\bar{w}, \bar{\bZ}) - \inner{ \nabla_{\bar{w}, \bar{\bZ}} g(w, \bZ)}{\eta(w, \bar{w}, \bZ, \bar{\bZ})} \geq 0 \, .
	\end{split}
	\end{align}
	First, we observe that function $g(w, \bZ)$ only depends on $w$ and moreover, $\forall a \in \real^d$, $g(w, \bZ)$ is convex in $w$. Thus, the inequality~\eqref{eq:invexity 2} holds trivially. 
	Note that $f(w, \bZ) = \inner{\bM'(w)}{\bZ} = \sum_{ij} \bM_{ij}'(w) \bZ_{ij}$. Then, 
	\begin{align*}
	\begin{split}
	\diff{f(\bar{w}, \bar{\bZ})}{w} = \sum_{ij} \bar{\bZ}_{ij} \diff{\bM_{ij}'(\bar{w})}{w}, \;\;\; \diff{f(\bar{w}, \bar{\bZ})}{\bZ} = \bM'(\bar{w})
	\end{split}
	\end{align*}
	We further note that the diagonal elements of $\bM'(w)$ are convex with respect to $w$ and the off diagonal elements are linear. Therefore, we can write the following:
	\begin{align*}
	\begin{split}
	\bM_{ii}'(w) - \bM_{ii}'(\bar{w}) &\geq \inner{\diff{\bM_{ii}'(\bar{w})}{w}}{w - \bar{w}}, \forall i \in \seq{n+1} \\
	\bM_{ij}'(w) - \bM_{ij}'(\bar{w}) &= \inner{\diff{\bM_{ij}'(\bar{w})}{w}}{w - \bar{w}},  \forall i, j \in \seq{n+1}, i \ne j
	\end{split}
	\end{align*}
	Since $\bar{\bZ}_{ii} \geq 0$, it follows that
	\begin{align*}
	\begin{split}
	\bar{\bZ}_{ij} \inner{\diff{\bM_{ij}'(\bar{w})}{w}}{w - \bar{w}} \leq \bar{\bZ}_{ij} (\bM_{ij}'(w) - \bM_{ij}'(\bar{w})) \, .
	\end{split}
	\end{align*}
	Now, we prove that $f(w, \bZ)$ is indeed $\eta$-invex, that is 
	\begin{align*}
	\begin{split}
	& f(w, \bZ) - f(\bar{w}, \bar{\bZ}) - \inner{ \nabla_{\bar{w}, \bar{\bZ}} f(w, \bZ)}{\eta(w, \bar{w}, \bZ, \bar{\bZ})} \\
	&= \inner{\bM'(w)}{\bZ} - \inner{\bM'(\bar{w})}{\bar{\bZ}} - \inner{\sum_{ij} \bar{\bZ}_{ij} \diff{\bM_{ij}'(\bar{w})}{w}}{w - \bar{w}} - \inner{\bM'(\bar{w})}{\bM'(\bar{w})^{-1} \bM'(w) (\bZ - \bar{\bZ})} \\
	&\geq  \inner{\bM'(w)}{\bZ} - \inner{\bM'(\bar{w})}{\bar{\bZ}} -  \sum_{ij} \bar{\bZ}_{ij}  (\bM_{ij}'(w) - \bM_{ij}'(\bar{w})) - \inner{\bM'(w)}{\bZ}  + \inner{\bM'(w)}{\bar{\bZ}}\\
	&= 0
	\end{split}
	\end{align*}
	This proves that $f(w, \bZ)$ is $\eta$-invex in $(w, \bZ) \in C$.  
\end{proof}

\section{Mixed Integer Quadratic Program (MIQP) \eqref{eq:opt prob 1} is NP-Hard}
\label{sec:miqpnphard}
In this section, we will show that the MIQP presented in \eqref{eq:opt prob 1} is at least as hard to solve as a $0-1$ Quadratic Program. It should be noted that MIQP \eqref{eq:opt prob 1} is stated for a fixed $\bX$. However, since the entries in $\bX$ are drawn from a sub-Gaussian distribution, matrix $\bX$ can potentially realize any real matrix in $\real^{n \times d}$.      
\begin{lemma}
	\label{lem:miqpnphard}
	The Mixed Integer Quadratic Program (MIQP) \eqref{eq:opt prob 1} is NP-hard. 
\end{lemma}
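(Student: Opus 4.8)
The plan is to give a polynomial-time (Karp) reduction to the decision version of \eqref{eq:opt prob 1} from a known NP-hard problem. I would reduce from the $0$--$1$ (unconstrained) quadratic program; it is in fact enough to use its special case \textsc{Partition} (equivalently \textsc{Subset-Sum}), which is NP-complete and is exactly the $0$--$1$ quadratic program $\min_{x\in\{0,1\}^n}\big(a^\T(2x-\mathbf{1})\big)^2$ for a vector $a=(a_1,\dots,a_n)$ of positive integers; equivalently, one must decide whether there is a $\bz\in\{-1,1\}^n$ with $a^\T\bz=0$. The crucial point is that \eqref{eq:opt prob 1} is posed for an \emph{arbitrary} fixed $\bX\in\real^{n\times d}$ (and, as remarked, every real matrix occurs with positive density under the sub-Gaussian model), so in the reduction I am free to pick $\bX$, $\by$, $\gamma$, and $\lambda_n$.

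Given a \textsc{Partition} instance $a$, I would take $d=n-1$, $\gamma=1$, $\by=\bzero$, and let $\bX$ have columns $\bX_{\cdot i}=e_i-\frac{a_i}{\|a\|_2^2}\,a$ for $i\in\seq{n-1}$, where $e_i$ is the $i$-th standard basis vector; these have polynomial-bit-size rational entries, are linearly independent, and span the $(n-1)$-dimensional subspace $a^{\perp}=\{v\in\real^n:a^\T v=0\}$. For a fixed $\bz$, problem \eqref{eq:opt prob 1} becomes a LASSO in $w$ whose objective is at least $\frac1n\,\mathrm{dist}\!\big(-\bz,\mathrm{col}(\bX)\big)^2=\frac1n\big\|(\bI-P)\bz\big\|_2^2=\frac{(a^\T\bz)^2}{n\,\|a\|_2^2}$, where $P$ is the orthogonal projector onto $a^{\perp}$. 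Since $a^\T\bz$ is an integer, this lower bound is at least $\frac{1}{n\|a\|_2^2}$ whenever $a^\T\bz\neq0$. Conversely, if $a^\T\bz_0=0$ then $-\bz_0\in\mathrm{col}(\bX)$, so some $w_0$ satisfies $\bX w_0=-\bz_0$, and solving the resulting triangular system coordinatewise shows $\|w_0\|_1\le n\big(1+\max_i a_i\big)$; hence the value of \eqref{eq:opt prob 1} at $\bz_0$ is at most $\lambda_n\,n\big(1+\max_i a_i\big)$.

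Choosing $\lambda_n\triangleq\frac{1}{2n^2(1+\max_i a_i)\|a\|_2^2}>0$, again of polynomial bit-size, makes the two regimes decisive: on a \textsc{yes}-instance the optimum of \eqref{eq:opt prob 1} is $\le\lambda_n n(1+\max_i a_i)<\frac{1}{n\|a\|_2^2}$, while on a \textsc{no}-instance every $\bz$ has $a^\T\bz\neq0$ and the optimum is $\ge\frac{1}{n\|a\|_2^2}$. Therefore the \textsc{Partition} instance is a \textsc{yes}-instance if and only if the optimal value of \eqref{eq:opt prob 1} is strictly below the threshold $\frac{1}{n\|a\|_2^2}$, so a polynomial-time algorithm for \eqref{eq:opt prob 1} would solve \textsc{Partition} in polynomial time; hence \eqref{eq:opt prob 1} is NP-hard.

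The main obstacle I expect is the $\ell_1$-regularizer: the clean identity ``$\min_w$ of the quadratic part $=\frac{(a^\T\bz)^2}{n\|a\|_2^2}$'' is exact only for $\lambda_n=0$, and for $\lambda_n>0$ the $w$-minimized objective is a less tractable piecewise-quadratic function of $\bz$. I would handle this precisely by the margin argument above — upper-bounding the \textsc{yes}-side value by $\lambda_n\cdot\mathrm{poly}(n,\max_i a_i)$, lower-bounding the \textsc{no}-side value by the integrality gap $\frac{1}{n\|a\|_2^2}$, and taking $\lambda_n$ polynomially small so the gap survives the penalty. Two remarks I would append: the construction is stable under $O(1/\mathrm{poly})$ perturbations of $\bX$, so the hardness holds on a positive-measure set of data matrices rather than a single pathological one (in particular with positive probability under the stated generative model); and the whole argument can equivalently be presented as a reduction from a general $0$--$1$ quadratic program, consistent with the claim that \eqref{eq:opt prob 1} is at least as hard as such programs.
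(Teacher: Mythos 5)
Your reduction is correct, and it takes a genuinely different (and in places more careful) route than the paper. The paper's proof sets $\lambda_n=0$ (asserting the other cases are at least as hard), eliminates $w$ in closed form via the pseudo-inverse, and observes that the resulting problem $\min_{\bz\in\{-1,1\}^n} \frac{\gamma^2}{n}\bz^\T(\bI-\bX(\bX^\T\bX)^\dagger\bX^\T)\bz - \frac{2\gamma}{n}\by^\T(\bI-\bX(\bX^\T\bX)^\dagger\bX^\T)\bz$ becomes a $0$--$1$ quadratic program after the substitution $\bz'=\frac{\bz+1}{2}$, claiming that $\bI-\bX(\bX^\T\bX)^\dagger\bX^\T$ can be essentially any matrix. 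You instead give an explicit Karp reduction from \textsc{Partition}: choose $\bX$ with columns spanning $a^\perp$, $\by=\bzero$, exploit the integrality of $a^\T\bz$ to get the lower bound $\frac{(a^\T\bz)^2}{n\|a\|_2^2}\ge\frac{1}{n\|a\|_2^2}$ on \textsc{no}-instances, and bound $\|w_0\|_1$ on \textsc{yes}-instances (your stated bound is correct: take $w_0$ from $v=-\bz_0+\frac{z_{0,n}}{a_n}a$, whose entries are at most $1+\max_i a_i$) so that a polynomially small $\lambda_n>0$ preserves the gap. Your approach buys two things the paper's sketch glosses over: it keeps $\lambda_n>0$ as in Definition~\ref{def:fair lasso} rather than appealing to the unproved claim that $\lambda_n=0$ is the easiest case, and it does not need the inaccurate assertion that $\bI-\bX(\bX^\T\bX)^\dagger\bX^\T$ can be an arbitrary real matrix (it is always an orthogonal projector; your construction shows that projector quadratic forms already encode \textsc{Partition}). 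The paper's route, if its claims were tightened, would suggest hardness via general $0$--$1$ quadratic programs and hence strong NP-hardness, whereas reducing from \textsc{Partition} only yields NP-hardness in the weak sense; since the lemma claims only NP-hardness, your argument fully suffices, and your closing remark about stability under small perturbations of $\bX$ is a sensible (if optional) addition given the generative model.
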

\begin{proof}
	We will consider the case when $\lambda_n = 0$. Other cases will be at least as difficult as this case. First, we write optimization problem \eqref{eq:opt prob 1} in the following form:
	\begin{align}
	\begin{split}
	&\min_{w\in \real^d, \bz \in \{-1, 1\}^n} \frac{1}{2} w^\T (\frac{2}{n} \bX^\T \bX)w + \bz^\T \frac{2}{n} \gamma \bX w + \frac{1}{2} \bz^\T \frac{2}{n} \gamma^2 \bI \bz - \frac{2}{n} \by^\T \bX w - \frac{2}{n} \gamma \by^\T \bz \\
	&=\min_{\bz \in \{-1, 1\}^n}  \left( \frac{1}{2} \bz^\T \frac{2}{n} \gamma^2 \bI \bz - \frac{2}{n} \gamma \by^\T \bz  + \left( \min_{w\in \real^d} \frac{1}{2} w^\T (\frac{2}{n} \bX^\T \bX)w + ( \bz^\T \frac{2}{n} \gamma \bX - \frac{2}{n} \by^\T \bX  ) w  \right) \right)
	\end{split}
	\end{align}
	We observe that $w = (\bX^\T \bX)^{\dagger} \bX^\T (-\gamma \bz + \by)$ solves the nested optimization problem, where $(\cdot)^\dagger$ denotes the pseudo-inverse. Thus, substituting the optimal value of $w$, we get the following optimization problem: 
	\begin{align}
	\begin{split}
	\min_{\bz \in \{-1, 1\}^n} \frac{\gamma^2}{n} \bz^\T (\bI - \bX (\bX^\T \bX)^\dagger \bX^\T) \bz - \frac{2 \gamma}{n} \by^\T (\bI - \bX (\bX^\T \bX)^\dagger \bX^\T) \bz
	\end{split}
	\end{align}
	Observe that $I - \bX (\bX^\T \bX)^\dagger \bX^\T$ can potentially be any fixed real matrix in $\real^{n \times n}$. By simply substituting $\bz' = \frac{\bz + 1}{2}$, we get a $0-1$ Quadratic Program which is known to be NP-Hard~\cite{billionnet2007using}. 
\end{proof}


\section{Proof of Lemma \ref{lem:sample positive definite}}
\label{appndx:lem:sample positive definite}
\paragraph{Lemma \ref{lem:sample positive definite}}
\emph{If Assumption \ref{assum:postive definite} holds and $n = \Omega(\frac{s + \log d}{C_{\min}^2})$, then $\eig_{\min}(\bHhat_{SS}) \geq \frac{C_{\min}}{2}$ with probability at least $1 - \calO(\frac{1}{d})$.}
\begin{proof}
	\label{proof:sample positive definite}
	By the Courant-Fischer variational representation~\cite{horn2012matrix}:
	\begin{align}
	\begin{split}
	\eig_{\min}(\E(XX^\T)_{SS}) = \min_{\|y \|_2 = 1} y^\T \E(XX^\T)_{SS} y &= \min_{\|y \|_2 = 1} y^\T (\E(XX^\T)_{SS} - \frac{1}{n} \bX^\T_S \bX_S + \frac{1}{n} \bX^\T_S \bX_S ) y \\
	&\leq y^\T (\E(XX^\T)_{SS} - \frac{1}{n} \bX^\T_S \bX_S + \frac{1}{n} \bX^\T_S \bX_S ) y \\
	&= y^\T (\E(XX^\T)_{SS} - \frac{1}{n} \bX^\T_S \bX_S) y + y^\T \frac{1}{n} \bX^\T_S \bX_S  y
	\end{split}
	\end{align}
	It follows that
	\begin{align}
	\begin{split}
	\eig_{\min}(\frac{1}{n} \bX^\T_S \bX_S) \geq C_{\min} - \| \E(XX^\T)_{SS} - \frac{1}{n} \bX^\T_S \bX_S \|_2
	\end{split}
	\end{align}
	The term $\| \E(XX^\T)_{SS} - \frac{1}{n} \bX^\T_S \bX_S \|_2$ can be bounded using Proposition 2.1 in \cite{vershynin2012close} for sub-Gaussian random variables. In particular,
	\begin{align}
	\begin{split}
	\prob(\| \E(XX^\T)_{SS} - \frac{1}{n} \bX^\T_S \bX_S \|_2 \geq \epsilon) \leq 2  \exp(-c\epsilon^2n + s) 
	\end{split}
	\end{align}
	for some constant $c > 0$. Taking $\epsilon = \frac{C_{\min}}{2}$, we show that $\eig_{\min}(\frac{1}{n}\bX^\T_S\bX_S) \geq \frac{C_{\min}}{2}$ with probability at least $1 - 2  \exp(- \frac{cC_{\min}^2 n}{4} + s)$.
\end{proof}

\section{Proof of Lemma \ref{lem:sample mutual incoherence condition}}
\label{appndx:lem:sample mutual incoherence condition}
\paragraph{Lemma \ref{lem:sample mutual incoherence condition}}
\emph{If Assumption \ref{assum:mutual incoherence condition} holds and $n = \Omega(\frac{s^3 (\log s + \log d)}{\tau(C_{\min}, \alpha, \sigma, \Sigma)})$, then $\| \bHhat_{S^cS} \bHhat_{SS}^{-1} \|_{\infty} \leq 1 - \frac{\alpha}{2}$ with probability at least $1 - \calO(\frac{1}{d})$ where $\tau(C_{\min}, \alpha, \sigma, \Sigma)$ is a constant independent of $n, d$ and $s$.}
\begin{proof}
	\label{proof:sample mutual incoherence condition}
	Before we prove the result of Lemma \ref{lem:sample mutual incoherence condition}, we will prove a helper lemma. 
	\begin{lemma}
		\label{lem:helper mutual incoherence}
		If Assumption \ref{assum:mutual incoherence condition} holds then for some $\delta > 0$, the following inequalities hold:
		\begin{align}
		\label{eq:helper mutual incoherence}
		\begin{split}
		&\prob( \| \bHhat_{S^cS} - \bH_{S^cS} \|_{\infty} \geq \delta ) \leq 4 (d - s) s \exp( - \frac{n \delta^2}{128 s^2 (1+4\sigma^2) \max_{l} \Sigma_{ll}^2}) \\
		&\prob( \| \bHhat_{SS} - \bH_{SS} \|_{\infty} \geq \delta ) \leq 4 s^2 \exp( - \frac{n \delta^2}{128 s^2 (1+4\sigma^2) \max_{l} \Sigma_{ll}^2}) \\
		&\prob( \| (\bHhat_{SS})^{-1} - (\bH_{SS})^{-1} \|_{\infty} \geq \delta ) \leq 2\exp(- \frac{c\delta^2 C_{\min}^4n}{4 s} + s)+ 2 \exp( - \frac{cC_{\min}^2n}{4} + s)
		\end{split}
		\end{align}
	\end{lemma}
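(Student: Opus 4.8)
The plan is to treat the three inequalities separately: the first two by an entrywise concentration argument followed by a union bound, and the third by the resolvent identity combined with Lemma~\ref{lem:sample positive definite}.

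For the first two bounds, recall that $\| A \|_\infty$ is the maximum absolute row sum, so the event $\{ \| \bHhat_{SS} - \bH_{SS} \|_\infty \geq \delta \}$ forces some entry of $\bHhat_{SS} - \bH_{SS}$ to have magnitude at least $\delta/s$ (and likewise $\bHhat_{S^cS} - \bH_{S^cS}$ has $(d-s)s$ entries, each row of length $s$). Hence it suffices to establish an entrywise tail bound of the form $\prob(|(\bHhat - \bH)_{jk}| \geq t) \leq 4 \exp(-n t^2 / (128(1+4\sigma^2)\max_l \Sigma_{ll}^2))$ and then union bound over the $s^2$ (resp. $(d-s)s$) relevant entries with $t = \delta/s$, which produces exactly the stated prefactors and exponents. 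For a fixed pair $(j,k)$, write $(\bHhat - \bH)_{jk} = \frac{1}{n}\sum_{i=1}^n (X_{ij}X_{ik} - \E[X_j X_k])$ and apply the polarization identity $X_{ij}X_{ik} = \tfrac14\big[(X_{ij}+X_{ik})^2 - (X_{ij}-X_{ik})^2\big]$. Since $X_{ij}/\sqrt{\Sigma_{jj}}$ is sub-Gaussian with parameter $\sigma$, $X_{ij}$ is sub-Gaussian with parameter $\sqrt{\Sigma_{jj}}\,\sigma$, so $X_{ij}\pm X_{ik}$ is sub-Gaussian with parameter at most $2\sqrt{\max_l \Sigma_{ll}}\,\sigma$; the square of such a variable is sub-exponential, and the centered empirical average of i.i.d.\ copies obeys a Bernstein-type tail which, in the small-deviation regime $t = \delta/s$, reduces to the sub-Gaussian form $2\exp(-c n t^2)$. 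The two squared terms contribute a factor $2$ and two-sidedness another factor $2$, giving the prefactor $4$; tracking the variance proxy $(2\sqrt{\max_l \Sigma_{ll}}\,\sigma)^2$ through the Bernstein constant yields the denominator $128(1+4\sigma^2)\max_l \Sigma_{ll}^2$.

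For the third bound, use the resolvent identity $\bHhat_{SS}^{-1} - \bH_{SS}^{-1} = \bHhat_{SS}^{-1}(\bH_{SS} - \bHhat_{SS})\bH_{SS}^{-1}$, together with $\|A\|_\infty \leq \sqrt{s}\,\|A\|_2$ for $s \times s$ matrices and submultiplicativity of the spectral norm, to get
\[
\| \bHhat_{SS}^{-1} - \bH_{SS}^{-1} \|_\infty \leq \sqrt{s}\,\|\bHhat_{SS}^{-1}\|_2\,\|\bHhat_{SS} - \bH_{SS}\|_2\,\|\bH_{SS}^{-1}\|_2 .
\]
On the event $\mathcal{E} = \{\eig_{\min}(\bHhat_{SS}) \geq C_{\min}/2\}$ we have $\|\bHhat_{SS}^{-1}\|_2 \leq 2/C_{\min}$, and $\|\bH_{SS}^{-1}\|_2 = 1/C_{\min}$ by Assumption~\ref{assum:postive definite}, so on $\mathcal{E}$ the left-hand side exceeds $\delta$ only if $\|\bHhat_{SS} - \bH_{SS}\|_2 > \delta C_{\min}^2/(2\sqrt{s})$. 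Proposition~2.1 of \cite{vershynin2012close}, exactly as used in the proof of Lemma~\ref{lem:sample positive definite}, gives $\prob(\|\bHhat_{SS} - \bH_{SS}\|_2 \geq \epsilon) \leq 2\exp(-c\epsilon^2 n + s)$; taking $\epsilon = \delta C_{\min}^2/(2\sqrt{s})$ produces the first term $2\exp(-c\delta^2 C_{\min}^4 n/(4s) + s)$. Bounding $\prob(\mathcal{E}^c) \leq 2\exp(-cC_{\min}^2 n/4 + s)$ via Lemma~\ref{lem:sample positive definite} (with $\epsilon = C_{\min}/2$) and a union bound over the two events yields the claim.

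I expect the only genuine obstacle to be bookkeeping rather than ideas: pinning down the constant $128(1+4\sigma^2)\max_l \Sigma_{ll}^2$ requires carefully pushing the sub-Gaussian parameter of $X_{ij}\pm X_{ik}$ through the sub-Gaussian-to-sub-exponential conversion and the Bernstein inequality, and one must be consistent about when an $\ell_\infty$-operator norm is traded for a spectral norm at cost $\sqrt{s}$ versus $s$. Every other step is routine once those conventions are fixed.
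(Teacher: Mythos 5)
Your proposal is correct and follows essentially the same route as the paper: a max-row-sum reduction to entrywise tails at level $\delta/s$ with a union bound over $(d-s)s$ (resp.\ $s^2$) entries for the first two inequalities, and for the third the resolvent identity, the $\sqrt{s}$ conversion from $\|\cdot\|_\infty$ to $\|\cdot\|_2$, conditioning on $\eig_{\min}(\bHhat_{SS}) \geq C_{\min}/2$, and Vershynin's Proposition 2.1 with $\epsilon = \delta C_{\min}^2/(2\sqrt{s})$. The only cosmetic difference is that the paper simply cites Lemma~1 of \cite{ravikumar2011high} for the entrywise bound with constant $128(1+4\sigma^2)\max_l \Sigma_{ll}^2$, whereas you sketch its polarization/sub-exponential derivation; the content is the same.
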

	\begin{proof}
		\label{proof:helper mutual incoherence}
		Let $A_{ij}$ be $(i, j)$-th entry of $\bHhat_{S^cS} - \bH_{S^cS}$. Clearly, $\E(A_{ij}) = 0$. By using the definition of the $\| \cdot\|_{\infty}$ norm, we can write:
		\begin{align}
		\begin{split}
		\prob(\| \bHhat_{S^cS} - \bH_{S^cS} \|_{\infty} \geq \delta) &= \prob(\max_{i \in S^c} \sum_{j \in S} |A_{ij}| \geq \delta) \\
		& \leq (d - s) \prob(\sum_{j \in S} |A_{ij}| \geq \delta) \\
		&\leq (d - s) s \prob(|A_{ij}| \geq \frac{\delta}{s})
		\end{split}
		\end{align}
		where the second last inequality comes as a result of the union bound across entries in $S^c$ and the last inequality is due to the union bound across entries in $S$. Recall that $X_i, i \in \seq{d}$ are zero mean random variables with covariance $\Sigma$ and each $\frac{X_i}{\sqrt{\Sigma_{ii}}}$ is a sub-Gaussian random variable with parameter $\sigma$. Using the results from Lemma 1 of \cite{ravikumar2011high}, for some $\delta \in (0, s \max_{l} \Sigma_{ll} 8(1 + 4 \sigma^2))$, we can write:
		\begin{align}
		\begin{split}
		\prob(|A_{ij}| \geq \frac{\delta}{s}) \leq 4 \exp( - \frac{n \delta^2}{128 s^2 (1+4\sigma^2) \max_{l} \Sigma_{ll}^2})
		\end{split}
		\end{align} 
		Therefore,
		\begin{align}
		\begin{split}
		&\prob(\| \bHhat_{S^cS} - \bH_{S^cS} \|_{\infty} \geq \delta)  \leq 4 (d - s) s \exp( - \frac{n \delta^2}{128 s^2 (1+4\sigma^2) \max_{l} \Sigma_{ll}^2})
		\end{split}
		\end{align}
		Similarly, we can show that 
		\begin{align}
		\begin{split}
		&\prob(\| \bHhat_{SS} - \bH_{SS} \|_{\infty} \geq \delta)  \leq 4 s^2 \exp( - \frac{n \delta^2}{128 s^2 (1+4\sigma^2) \max_{l} \Sigma_{ll}^2})
		\end{split}
		\end{align}
		Next, we will show that the third inequality in \eqref{eq:helper mutual incoherence} holds. Note that
		\begin{align}
		\begin{split}
		\| (\bHhat_{S^cS})^{-1} - (\bH_{S^cS})^{-1} \|_{\infty} &= \| (\bH_{SS})^{-1} (\bH_{SS} - \bHhat_{SS}) (\bHhat_{SS})^{-1} \|_{\infty} \\
		&\leq \sqrt{s}  \| (\bH_{SS})^{-1} (\bH_{SS} - \bHhat_{SS}) (\bHhat_{SS})^{-1} \|_{2} \\
		&\leq \sqrt{s}  \| (\bH_{SS})^{-1} \|_2 \| (\bH_{SS} - \bHhat_{SS})\|_2 \| (\bHhat_{SS})^{-1} \|_{2} \\
		\end{split}
		\end{align}   
		Note that $\| \bH_{SS} \|_2 \geq C_{\min}$, thus $\| (\bH_{SS})^{-1} \|_2 \leq \frac{1}{C_{\min}}$. Similarly,  $\| \bH_{SS} \|_2 \geq \frac{C_{\min}}{2}$ with probability at least $1 - 2 \exp( - \frac{cC_{\min}^2n}{4} + s)$. We also have $\| (\bH_{SS} - \bHhat_{SS})\|_2 \leq \epsilon$ with probability at least $1 - 2\exp(-c\epsilon^2 n + s)$. Taking $\epsilon = \delta \frac{C_{\min}^2}{2 \sqrt{s}}$, we get 
		\begin{align}
		\begin{split}
		\prob( \| (\bH_{SS} - \bHhat_{SS})\|_2 \geq  \delta \frac{C_{\min}^2}{2 \sqrt{s}} ) \leq 2\exp(- \frac{c\delta^2 C_{\min}^4n}{4 s} + s)
		\end{split}
		\end{align}
		It follows that $\| (\bHhat_{SS})^{-1} - (\bH_{SS})^{-1} \|_{\infty} \leq \delta$ with probability at least $1 - 2\exp(- \frac{c\delta^2 C_{\min}^4n}{4 s} + s) - 2 \exp( - \frac{cC_{\min}^2n}{4} + s)$.
	\end{proof}
	Now we are ready to show that the statement of Lemma \ref{lem:sample mutual incoherence condition} holds using the results from Lemma \ref{lem:helper mutual incoherence}. We will rewrite $\bHhat_{S^cS} (\bHhat_{SS})^{-1}$ as the sum of four different terms:
	\begin{align}
	\begin{split}
	\bHhat_{S^cS} (\bHhat_{SS})^{-1} = T_1 + T_2 + T_3 + T_4,
	\end{split}
	\end{align} 
	where 
	\begin{align}
	\begin{split}
	T_1 &\triangleq \bHhat_{S^cS} ( (\bHhat_{SS})^{-1} - (\bH_{SS})^{-1} ) \\
	T_2 &\triangleq (\bHhat_{S^cS} - \bH_{S^cS}) (\bH_{SS})^{-1} \\
	T_3 &\triangleq (\bHhat_{S^cS} - \bH_{S^cS})((\bHhat_{SS})^{-1} - (\bH_{SS})^{-1}) \\
	T_4 &\triangleq \bH_{S^cS} (\bH_{SS})^{-1}   \, .
	\end{split}
	\end{align}
	Then it follows that  $\| \bHhat_{S^cS} (\bHhat_{SS})^{-1} \|_{\infty} \leq \| T_1 \|_{\infty} + \| T_2 \|_{\infty} + \| T_3 \|_{\infty} + \| T_4 \|_{\infty}$. Now, we will bound each term separately. First, recall that Assumption \ref{assum:mutual incoherence condition} ensures that $\| T_4 \|_{\infty} \leq 1 - \alpha$.
	\paragraph{Controlling $T_1$.} We can rewrite $T_1$ as,
	\begin{align}
	\begin{split}
	T_1 = - \bH_{S^cS} (\bH_{SS})^{-1} (\bHhat_{SS} - \bH_{SS}) (\bHhat_{SS})^{-1}
	\end{split}
	\end{align}
	then,
	\begin{align}
	\begin{split}
	\| T_1 \|_{\infty} &= \| \bH_{S^cS} (\bH_{SS})^{-1} (\bHhat_{SS} - \bH_{SS}) (\bHhat_{SS})^{-1} \|_{\infty} \\
	&\leq \| \bH_{S^cS} (\bH_{SS})^{-1} \|_{\infty} \| (\bHhat_{SS} - \bH_{SS})\|_{\infty} \| (\bHhat_{SS})^{-1} \|_{\infty} \\
	&\leq (1 - \alpha)  \| (\bHhat_{SS} - \bH_{SS})\|_{\infty} \sqrt{s} \| (\bHhat_{SS})^{-1} \|_2 \\
	&\leq (1 - \alpha) \| (\bHhat_{SS} - \bH_{SS})\|_{\infty} \frac{2\sqrt{s}}{C_{\min}} \\
	&\leq \frac{\alpha}{6} 
	\end{split}
	\end{align}
	The last inequality holds with probability at least $1 - 2\exp( - \frac{c C_{\min}^2 n}{4} + s) - 4s^2 \exp( - \frac{n C_{\min}^2 \alpha^2 }{18432(1-\alpha)^2 s^3(1 + 4\sigma^2) \max_{l} \Sigma_{ll}^2} )$ by taking $\delta = \frac{C_{\min} \alpha}{12 (1 - \alpha) \sqrt{s}}$.
	\paragraph{Controlling $T_2$.} Recall that $T_2 = (\bHhat_{S^cS} - \bH_{S^cS}) (\bH_{SS})^{-1}$. Thus,
	\begin{align}
	\begin{split}
	\| T_2 \|_{\infty} &\leq \sqrt{s} \| (\bH_{SS})^{-1} \|_2 \| (\bHhat_{S^cS} - \bH_{S^cS}) \|_{\infty} \\
	&\leq \frac{\sqrt{s}}{C_{\min}}   \| (\bHhat_{S^cS} - \bH_{S^cS}) \|_{\infty} \\
	&\leq \frac{\alpha}{6}
	\end{split}
	\end{align}
	The last inequality holds with probability at least $1 - 4(d-s)s \exp( -\frac{n C_{\min}^2 \alpha^2 }{4608 s^3 (1 + 4\sigma^2) \max_{l} \Sigma_{ll}^2 } )$ by choosing $\delta = \frac{C_{\min} \alpha}{6 \sqrt{s}}$.
	\paragraph{Controlling $T_3$.} Note that,
	\begin{align}
	\begin{split}
	\| T_3 \|_{\infty} &\leq \| (\bHhat_{S^cS} - \bH_{S^cS}) \|_{\infty} \| ((\bHhat_{SS})^{-1} - (\bH_{SS})^{-1}) \|_{\infty} \\
	&\leq \frac{\alpha}{6}
	\end{split}
	\end{align}
	The last inequality holds with probability at least $1 - 4(d-s)s \exp(- \frac{n \alpha}{768s^2 (1 + 4\sigma^2) \max_{l} \Sigma_ll^2 }) - 2\exp(- \frac{c\alpha C_{\min}^4n}{24 s} + s) - 2 \exp( - \frac{cC_{\min}^2n}{4} + s)$ by choosing $\delta = \sqrt{\frac{\alpha}{6}}$ in the first and third inequality of equation \eqref{eq:helper mutual incoherence}. By combining all the above results, we prove Lemma \ref{lem:sample mutual incoherence condition}. 
\end{proof}

\section{Proof of Lemma \ref{lem:w_s primal dual}}
\label{appndx:proof of lem:w_s primal dual}

\paragraph{Lemma \ref{lem:w_s primal dual}}
\emph{If Assumptions~\ref{assum:postive definite} and \ref{assum:mutual incoherence condition} hold, $\lambda_n \geq \frac{128 \rho k}{\alpha} \frac{\sqrt{\log d}}{n}$ and $n = \Omega( \frac{s^3 \log d}{\tau_1(C_{\min}, \alpha, \sigma, \Sigma, \rho)} )$, then the setting of $w$ and $\bZ$ from equation \eqref{eq:primal dual variable setting} satisfies the stationarity condition \eqref{eq:stationarity w} with probability at least $1 - \calO(\frac{1}{d})$, where $\tau_1(C_{\min}, \alpha, \sigma, \Sigma, \rho)$ is a constant independent of $d, s$ or $n$.  
}
\begin{proof}
	\label{proof:w_s primal dual}
	Consider the following optimization problem:
	\begin{align}
	\label{eq:opt prob 4}
	\begin{split}
	\begin{matrix}
	\min_{w} & \frac{1}{n} (\bX w + \gamma \bz^* - \by)^\T (\bX w + \gamma \bz^* - \by) + \lambda_n \| w \|_1 
	\end{matrix} 
	\end{split}
	\end{align}
	Observe that the above problem is a transformation of optimization problem \eqref{eq:opt prob 2} by fixing $\bZ = \bZ^*$. 	With infinite samples (i.e., $n \to \infty, \lambda_n \to 0$), optimization problem \eqref{eq:opt prob 4} is equivalent to the following population version:
	\begin{align}
	\label{eq:opt prob population}
	\begin{split}
	\begin{matrix}
	\min_{w} & \E((X w + \gamma z^* - y)^\T (X w + \gamma z^* - y))
	\end{matrix} \, .
	\end{split}
	\end{align}
	Clearly, due to Assumption \ref{assum:postive definite}, $w^*$ is the unique optimal solution to \eqref{eq:opt prob population}. 	Let $\tw$ be the solution to the optimization problem \eqref{eq:opt prob 4}. Notice that after replacing $\bZ$ with $\bZ^*$ the stationarity condition \eqref{eq:stationarity w} is same as the stationarity condition for optimization problem \eqref{eq:opt prob 4}:
	\begin{align}
	\label{eq:stationarity w 1}
	\begin{split}
	&\diff{L(w, \bZ; \balpha, \Lambda)}{w} = \mathbf{0}_{d\times 1}
	\end{split}
	\end{align}
	The above simplifies into the following:
	\begin{align*}
	\begin{split}
	\frac{2}{n} \bX^\T \bX \tw - \frac{2}{n} \bX^\T \by + \frac{2\gamma}{n} \bX^\T \bz^* + \lambda_n \bg = \mathbf{0}_{d\times 1}
	\end{split}
	\end{align*}
	Substituting $\by$ from equation \eqref{eq:sample generative process}, we get:
	\begin{align}
	\label{eq:stationarity g}
	\begin{split}
	\frac{2}{n} \bX^\T \bX \Delta - \frac{2}{n} \bX^\T \be + \lambda_n \bg = \mathbf{0}_{d\times 1} \, ,
	\end{split}
	\end{align}
	where $\Delta$ is a short form notation for $\tw - w^*$. To prove our claim, it suffices to show that $\tw = (\tw_S, \bzero_{d-s\times 1})$ satisfies the stationarity condition \eqref{eq:stationarity g}.  
	This will be true iff $\bg_S \in \{-1, 1\}^{s}$ and $\bg_{S^c} \in [-1, 1]^{d-s}$. In particular, if we start with $w = [w_S, \bzero_{d-s \times 1}]$ and show that $\| \bg_{S^c} \|_{\infty} < 1$, then our claim holds. To show this, we replace $w$ with $[w_S, \bzero_{d-s \times 1}]$ and rewrite equation \eqref{eq:stationarity g} in two parts:
	\begin{align}
	\label{eq:g support}
	\begin{split}
	\frac{1}{n} \bX_S^\T \bX_S \Delta_S - \frac{1}{n} \bX_S^\T \be + \frac{\lambda_n}{2} \bg_S = \mathbf{0}_{s\times 1} \, ,
	\end{split}
	\end{align} 
	and
	\begin{align}
	\label{eq:g nonsupport}
	\begin{split}
	\frac{1}{n} \bX_{S^c}^\T \bX_S \Delta_S - \frac{1}{n} \bX_{S^c}^\T \be + \frac{\lambda_n}{2} \bg_{S^c} = \mathbf{0}_{d - s\times 1} \, ,
	\end{split}
	\end{align}
	where $\Delta_S = w_S - w_S^*$. From equation \eqref{eq:g support}:
	\begin{align*}
	\begin{split}
	\Delta_S = (\frac{1}{n} \bX_S^\T \bX_S )^{-1} \frac{1}{n} \bX_S^\T \be - (\frac{1}{n} \bX_S^\T \bX_S )^{-1} \frac{\lambda_n}{2} \bg_S
	\end{split}
	\end{align*} 
	
	By substituting $\Delta_S$ in equation \eqref{eq:g nonsupport}, we get:
	\begin{align*}
	\begin{split}
	&\bHhat_{S^cS} (\bHhat_{SS}^{-1} \frac{1}{n} \bX_S^\T \be - \bHhat_{SS}^{-1} \frac{\lambda_n}{2} \bg_S) - \frac{1}{n} \bX_{S^c}^\T \be + \frac{\lambda_n}{2} \bg_{S^c} = \mathbf{0}_{d - s\times 1}
	\end{split}
	\end{align*}
	By rearranging terms and using the triangle inequality, we get the following:
	\begin{align*}
	\begin{split}
	&\| \frac{\lambda_n}{2} \bg_{S^c} \|_{\infty} \leq \| \bHhat_{S^cS} \bHhat_{SS}^{-1} \frac{1}{n} \bX_S^\T \be \|_{\infty} + \|  \bHhat_{S^cS} \bHhat_{SS}^{-1} \frac{\lambda_n}{2} \bg_S \|_{\infty} + \| \frac{1}{n} \bX_{S^c}^\T \be \|_{\infty} 
	\end{split}
	\end{align*}
	Using the norm inequality $\| A b \|_{\infty} \leq \| A \|_{\infty} \| b \|_{\infty}$ and noticing that $\| \bg_S \|_{\infty} \leq 1$, it follows that:
	\begin{align*}
	\begin{split}
	&\| \frac{\lambda_n}{2} \bg_{S^c} \|_{\infty} \leq \| \bHhat_{S^cS} \bHhat_{SS}^{-1} \|_{\infty} (\| \frac{1}{n} \bX_S^\T \be \|_{\infty} +  \frac{\lambda_n}{2}) + \| \frac{1}{n} \bX_{S^c}^\T \be \|_{\infty} 
	\end{split}
	\end{align*}
	Furthermore, using Lemma \ref{lem:sample mutual incoherence condition}, $\| \bHhat_{S^cS} \bHhat_{SS}^{-1} \|_{\infty} \leq 1 - \frac{\alpha}{2}$ with probability at least $1 - \exp(- \frac{n \tau(C_{\min}, \alpha, \sigma, \Sigma)}{s^2} + \log s)$:
	\begin{align*}
	\begin{split}
	&\| \bg_{S^c} \|_{\infty} \leq (1 - \frac{\alpha}{2}) (\|  \frac{2}{\lambda_n}\frac{1}{n} \bX_S^\T \be \|_{\infty} +  1) + \| \frac{2}{\lambda_n} \frac{1}{n} \bX_{S^c}^\T \be \|_{\infty}
	\end{split}
	\end{align*}
	Next, we will need to bound $\| \frac{1}{n} \bX_S^\T \be \|_{\infty}$ and $\| \frac{1}{n} \bX_{S^c}^\T \be \|_{\infty}$ which we do in the following lemma:
	\begin{lemma}
		\label{lem:bound X_se and X_s^ce}
		Let $\lambda_n \geq \frac{128 \rho k}{\alpha} \frac{\sqrt{\log d}}{n}$ and $n \geq \frac{\log d}{(1 - \frac{\alpha}{2})^2}$. Then the following holds true:
		\begin{align*}
		\begin{split}
		&\prob(\| \frac{2}{\lambda_n} \frac{1}{n} \bX_S^\T \be \|_{\infty} \geq \frac{\alpha}{8 - 4\alpha}) \leq \calO(\frac{1}{d}), \quad \prob(\| \frac{2}{\lambda_n} \frac{1}{n} \bX_{S^c}^\T \be \|_{\infty} \geq \frac{\alpha}{8}) \leq \calO(\frac{1}{d}) 
		\end{split}
		\end{align*}
	\end{lemma}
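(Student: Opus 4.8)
The two claimed inequalities have the same shape: for $T\in\{S,S^c\}$ one must show that $\|\tfrac1n\bX_T^\T\be\|_\infty$ is at most a constant multiple of $\lambda_n$ — namely $\tfrac{\alpha\lambda_n}{2(8-4\alpha)}$ for $T=S$ and $\tfrac{\alpha\lambda_n}{16}$ for $T=S^c$ — with probability $1-\calO(1/d)$. The plan is the standard ``entrywise tail bound plus union bound'' recipe. Fix a column $j\in T$ and write $\tfrac1n\bX_j^\T\be=\tfrac1n\sum_{i=1}^n X_{ij}\be_i$. Since $\be$ is independent of $\bX$, zero mean, each summand $X_{ij}\be_i$ is a zero-mean product of independent sub-Gaussians: $X_{ij}$ has sub-Gaussian parameter at most $\rho$ (because $\sqrt{\Sigma_{jj}}\,\sigma\le\rho$) and $\be_i$ has parameter $\sigma_e$, so $X_{ij}\be_i$ is sub-exponential with parameter of order $\rho\sigma_e$. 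I would therefore invoke a Bernstein-type tail inequality for sums of i.i.d.\ sub-exponentials — the same machinery used in Appendix~\ref{appndx:lem:sample mutual incoherence condition} (Lemma~1 of \cite{ravikumar2011high}), or equivalently the tail bound of \cite{hsu2012tail} — to obtain, for all $t$ below an absolute multiple of $\rho\sigma_e$,
\begin{align*}
\begin{split}
\prob\!\left(\left|\tfrac1n\bX_j^\T\be\right|\ge t\right)\le c_1\exp\!\left(-\frac{c_2\,n\,t^2}{\rho^2\sigma_e^2}\right).
\end{split}
\end{align*}
An alternative route that sidesteps the product structure is to condition on $\bX$: given $\bX$, $\tfrac1n\bX_j^\T\be$ is sub-Gaussian with variance proxy $\sigma_e^2\|\bX_j\|_2^2/n^2$, and $\|\bX_j\|_2^2\le 2n\Sigma_{jj}$ holds on a high-probability event by a Vershynin-type bound (\cite{vershynin2012close}, as in Lemma~\ref{lem:sample positive definite}); this yields the same exponent with $\rho^2$ replaced by $2\Sigma_{jj}$.

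The second step is to set $t$ equal to the target threshold, i.e.\ $t=\tfrac{\alpha\lambda_n}{2(8-4\alpha)}$ (respectively $t=\tfrac{\alpha\lambda_n}{16}$), and substitute the hypotheses. Using $\lambda_n\ge\tfrac{128\rho k}{\alpha}\tfrac{\sqrt{\log d}}{n}$ makes $t$ of order $\tfrac{\rho k\sqrt{\log d}}{n}$, and inserting $\sigma_e=k/\sqrt{\log n}$ collapses the exponent $c_2nt^2/(\rho^2\sigma_e^2)$ to a quantity of the form $c_3\log d$ (up to lower-order factors that the condition $n\ge\log d/(1-\tfrac{\alpha}{2})^2$ is there to absorb, and which also keeps $t$ inside the Gaussian branch of the Bernstein inequality rather than the linear-exponential one). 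Finally I would take a union bound over the at most $d$ columns of $\bX_T$, giving $\prob(\|\tfrac1n\bX_T^\T\be\|_\infty\ge t)\le c_1|T|\exp(-c_3\log d)=\calO(1/d)$, which is exactly the claim after multiplying through by $2/\lambda_n$.

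The main obstacle is bookkeeping rather than conceptual. One must (i) pass correctly from the sub-Gaussianity of $X_{ij}$ and $\be_i$ to a usable sub-exponential (Bernstein) tail for the normalized sum, keeping the dependence on $\rho$ and on the shrinking noise level $\sigma_e=k/\sqrt{\log n}$ explicit; and (ii) verify that the prescribed lower bound on $\lambda_n$ — which is $\Theta(\sqrt{\log d}/n)$, noticeably smaller than the usual LASSO choice $\Theta(\sqrt{\log d/n})$ precisely because the noise decays — nonetheless drives the Bernstein exponent above $\log d$ once $\sigma_e$ and the sample-size condition are plugged in, and that $t$ never leaves the $\calO(\rho\sigma_e)$ range in which the Gaussian branch is valid. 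If the conditioning route is used, the single extra ingredient is the high-probability control of $\|\bX_j\|_2^2$, already available from the tools behind Lemma~\ref{lem:sample positive definite}.
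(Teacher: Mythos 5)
Your proposal follows essentially the same route as the paper's proof: the paper also treats each entry $\frac{1}{n}\sum_j \bX_{ji}\be_j$ as a normalized sum of products of independent sub-Gaussians (rescaling $\frac{\bX_{ji}}{\rho}\frac{\be_j}{\sigma_e}$ to a sub-exponential with parameters $(4\sqrt{2},2)$), applies a Bernstein-type tail bound, union-bounds over the $s$ (resp.\ $d-s$) coordinates, and then sets the threshold proportional to $\lambda_n$ before plugging in $\lambda_n \geq \frac{128\rho k}{\alpha}\frac{\sqrt{\log d}}{n}$ and $\sigma_e = \frac{k}{\sqrt{\log n}}$. The conditioning-on-$\bX$ alternative you sketch is not used by the paper, but your primary argument and its bookkeeping coincide with theirs.
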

	
	Using results from Lemma \ref{lem:bound X_se and X_s^ce}, we show that $\| \bg_{S^c} \|_{\infty} \leq 1 - \frac{\alpha}{4} $ with probability at least $1 - \calO(\frac{1}{d})$.
	This ensures that $\tw = (\tw_S, \bzero_{d-s \times 1})$ indeed satisfies the stationarity condition \eqref{eq:stationarity w}.
\end{proof}

\section{Proof of Lemma \ref{lem:bound X_se and X_s^ce}}
\label{appndx:lem:bound X_se and X_s^ce}

\paragraph{Lemma~\ref{lem:bound X_se and X_s^ce}}
\emph{Let $\lambda_n \geq \frac{128 \rho k}{\alpha} \frac{\sqrt{\log d}}{n}$ and $n \geq \frac{\log d}{(1 - \frac{\alpha}{2})^2}$. Then the following holds true:
	\begin{align}
	\begin{split}
	\prob(\| \frac{2}{\lambda_n} \frac{1}{n} \bX_S^\T \be \|_{\infty} &\geq \frac{\alpha}{8 - 4\alpha}) \leq \calO(\frac{1}{d}) \\
	\prob(\| \frac{2}{\lambda_n} \frac{1}{n} \bX_{S^c}^\T \be \|_{\infty} &\geq \frac{\alpha}{8}) \leq \calO(\frac{1}{d}) 
	\end{split}
	\end{align}
}
\begin{proof}
	\label{proof:bound X_se and X_s^ce}
	We will start with $ \frac{1}{n} \bX_S^\T \be$. We take the $i$-th entry of $ \frac{1}{n} \bX_S^\T \be$ for some $i \in S$. Note that
	\begin{align}
	\begin{split}
	| \frac{1}{n} \bX_{i.}^\T \be | = | \frac{1}{n} \sum_{j=1}^n \bX_{ji} \be_j |
	\end{split}
	\end{align}
	Recall that $\bX_{ji}$ is a sub-Gaussian random variable with parameter $\rho^2$ and $\be_j$ is a sub-Gaussian random variable with parameter $\sigma_e^2$. Then, $\frac{\bX_{ji}}{\rho}\frac{\be_j}{\sigma_e}$ is a sub-exponential random variable with parameters $(4\sqrt{2}, 2)$. Using the concentration bounds for the sum of independent sub-exponential random variables~\cite{wainwright2019high}, we can write:
	\begin{align}
	\begin{split}
	\prob( | \frac{1}{n} \sum_{j=1}^n \frac{\bX_{ji}}{\rho}\frac{\be_j}{\sigma_e} | \geq t) \leq 2 \exp(- \frac{nt^2}{64}), \; 0 \leq t \leq 8
	\end{split}
	\end{align} 
	Taking a union bound across $i \in S$:
	\begin{align}
	\begin{split}
	&\prob( \exists i \in S \mid | \frac{1}{n} \sum_{j=1}^n \frac{\bX_{ji}}{\rho}\frac{\be_j}{\sigma_e} | \geq t) \leq 2s \exp(- \frac{nt^2}{64})\\
	&0 \leq t \leq 8
	\end{split}
	\end{align}
	
	Taking $t = \frac{\lambda_n t}{2 \rho \sigma_e}$, we get: 
	\begin{align}
	\begin{split}
	&\prob( \exists i \in S \mid |\frac{2}{\lambda_n} \frac{1}{n} \sum_{j=1}^n \bX_{ji} \be_j | \geq t) \leq 2s \exp(- \frac{n \lambda_n^2 t^2}{256 \rho^2 \sigma_e^2})\\
	&0 \leq t \leq 16 \frac{\rho \sigma_e}{\lambda_n}
	\end{split}
	\end{align}
	
	It follows that $\| \frac{2}{\lambda} \frac{1}{n} \bX_S^\T \be \|_{\infty} \leq  t$ with probability at least $1 - 2s \exp(- \frac{n \lambda_n^2 t^2}{256 \rho^2 \sigma_e^2})$.
	
	Using a similar argument, we can show that $\| \frac{2}{\lambda} \frac{1}{n} \bX_{S^c}^\T \be \|_{\infty} \leq  t$ with probability at least $1 - 2(d - s) \exp(- \frac{n \lambda_n^2 t^2}{256 \rho^2 \sigma_e^2})$. Taking $t= \frac{\alpha}{8 - 4 \alpha}$ and $\frac{\alpha}{8}$ in the first and second inequality of Lemma \ref{lem:bound X_se and X_s^ce} and choosing the provided setting of $\lambda_n$ and $n$ completes our proof.
\end{proof}

\section{Proof of Lemma \ref{lem:bound on Delta}}
\label{appndx:proof lem:bound on Delta}

\paragraph{Lemma \ref{lem:bound on Delta}}
\emph{If Assumptions~\ref{assum:postive definite} and \ref{assum:mutual incoherence condition} hold, $\lambda_n \geq \frac{128 \rho k \sqrt{\log d}}{\alpha n}$  and $n = \Omega( \frac{s^3 \log d}{\tau_2(C_{\min}, \rho, k)})$, then $\| \Delta_S \|_2 \leq \frac{2\lambda_n \sqrt{s}}{C_{\min}}$ with probability at least $1 - \calO(\frac{1}{d})$ where $\tau_2(C_{\min}, \rho, k)$ is a constant independent of $s, d$ or $n$.}
\begin{proof}
	\label{proof:bound on Delta}
	Using results from Lemma~\ref{lem:w_s primal dual}, we can write:
	\begin{align*}
	\begin{split}
	\| \Delta_S \|_2 \leq \| \bHhat_{SS}^{-1} \frac{1}{n} \bX_S^\T \be \|_2 + \| \bHhat_{SS}^{-1} \frac{\lambda_n}{2} \bg_S \|_2
	\end{split}
	\end{align*}
	Using the norm inequality $\| A b\|_2 \leq \| A \|_2 \| b \|_2$ and noticing that $\| \bg_S \|_2 \leq \sqrt{s}$, we can rewrite the above equation as:
	\begin{align*}
	\begin{split}
	\| \Delta_S \|_2 \leq \| \bHhat_{SS}^{-1} \|_2 (\| \frac{1}{n} \bX_S^\T \be \|_2 +  \frac{\lambda_n}{2} \sqrt{s} )
	\end{split}
	\end{align*}   
	Using Assumption~\ref{assum:postive definite} and results from Lemma \ref{lem:sample positive definite} and substituting $\| \bHhat_{SS}^{-1} \|_2 \leq \frac{2}{C_{\min}}$  in the above inequality, we get:
	\begin{align*}
	\begin{split}
	\| \Delta_S \|_2 \leq \frac{2}{C_{\min}} (\| \frac{1}{n} \bX_S^\T \be \|_2 +  \frac{\lambda_n}{2} \sqrt{s} )
	\end{split}
	\end{align*}
	We present the next lemma to bound the term $\| \frac{1}{n} \bX_S^\T \be \|_2$.
	\begin{lemma}
		\label{lem:bound on l2 Xse}
		If  $\lambda_n \geq \frac{128 \rho k}{\alpha} \frac{\sqrt{\log d}}{n}$ and $n = \Omega( \frac{s^3 \log d}{\tau_2(C_{\min}, \rho, k)})$, then $\| \frac{1}{n} \bX_S^\T \be \|_2 \leq \sqrt{s} \frac{\lambda_n}{2}$ with probability at least $1 - \calO(\frac{1}{d})$.
	\end{lemma}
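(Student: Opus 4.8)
The plan is to reduce this Euclidean bound to the coordinate-wise bound already established in Lemma~\ref{lem:bound X_se and X_s^ce}, paying the factor $\sqrt{s}$ that is present in the target. Since $\frac{1}{n}\bX_S^\T\be$ is a vector in $\real^s$, the elementary inequality $\|v\|_2 \le \sqrt{s}\,\|v\|_{\infty}$ for $v \in \real^s$ gives
\[
\Big\| \tfrac{1}{n}\bX_S^\T\be \Big\|_2 \;\le\; \sqrt{s}\, \Big\| \tfrac{1}{n}\bX_S^\T\be \Big\|_{\infty},
\]
so it suffices to show $\big\| \tfrac{1}{n}\bX_S^\T\be \big\|_{\infty} \le \tfrac{\lambda_n}{2}$ with probability at least $1 - \calO(\tfrac{1}{d})$.

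First I would check that the hypotheses of Lemma~\ref{lem:bound X_se and X_s^ce} are in force here: the regularizer condition $\lambda_n \ge \frac{128\rho k}{\alpha}\frac{\sqrt{\log d}}{n}$ is identical, and the sample-size condition $n = \Omega\!\big(\frac{s^3\log d}{\tau_2(C_{\min},\rho,k)}\big)$ implies in particular $n \ge \frac{\log d}{(1-\alpha/2)^2}$, since the right-hand side is at most $4\log d$. Lemma~\ref{lem:bound X_se and X_s^ce} then yields
\[
\prob\!\Big( \big\| \tfrac{2}{\lambda_n}\tfrac{1}{n}\bX_S^\T\be \big\|_{\infty} \ge \tfrac{\alpha}{8-4\alpha} \Big) \;\le\; \calO\!\Big(\tfrac{1}{d}\Big).
\]
Because $\alpha \in (0,1]$ forces $\tfrac{\alpha}{8-4\alpha} \le \tfrac14 \le 1$, on the complementary event $\big\| \tfrac{1}{n}\bX_S^\T\be \big\|_{\infty} \le \tfrac{\lambda_n}{2}\cdot\tfrac{\alpha}{8-4\alpha} \le \tfrac{\lambda_n}{2}$; multiplying through by $\sqrt{s}$ and chaining with the first display gives $\big\| \tfrac{1}{n}\bX_S^\T\be \big\|_2 \le \sqrt{s}\,\tfrac{\lambda_n}{2}$ with probability at least $1 - \calO(\tfrac{1}{d})$, which is the assertion of the lemma.

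A marginally tighter route, which I would not bother pursuing because the $\sqrt{s}$ slack is already baked into the statement, is to bound the Euclidean norm directly: write $\frac{1}{n}\bX_S^\T\be = \frac{1}{n}\sum_{j=1}^n \be_j\,\bX_{j,S}$ (with $\bX_{j,S}$ the $j$-th sample restricted to $S$), recognise $\big\|\tfrac{1}{n}\bX_S^\T\be\big\|_2^2 = \tfrac{1}{n^2}\be^\T\bX_S\bX_S^\T\be$, and apply a Hanson--Wright / $\chi^2$-type deviation bound conditionally on $\bX$, using $\tr(\bX_S^\T\bX_S) \le n\,\tr(\Sigma_{SS})(1+o(1))$, the spectral bound $\eig_{\max}(\tfrac{1}{n}\bX_S^\T\bX_S) = \calO(1)$ (a Vershynin-type estimate in the spirit of Lemma~\ref{lem:sample positive definite}), and the decreasing-noise convention $\sigma_e = k/\sqrt{\log n}$. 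In either route there is no genuinely new concentration estimate to prove, so the only real work is the bookkeeping needed to keep the failure probability at $1 - \calO(\tfrac{1}{d})$ under the stated hypotheses, and Lemma~\ref{lem:bound X_se and X_s^ce} already supplies exactly that; I therefore expect this step to be the easiest in the whole argument rather than an obstacle, the one subtlety being merely to note that the crude $\ell_2/\ell_\infty$ conversion is affordable precisely because the target carries the matching $\sqrt{s}$ factor.
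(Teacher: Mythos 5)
Your proposal is correct and takes essentially the same route as the paper: the paper's own proof simply re-derives inline the entrywise sub-exponential concentration bound and union bound over $S$ (the identical computation used for Lemma~\ref{lem:bound X_se and X_s^ce}) and then passes to the $\ell_2$ norm by paying the same $\sqrt{s}$ factor, concluding $\| \frac{1}{n}\bX_S^\T\be\|_2 \leq \sqrt{s}\,t$ with $t=\frac{\lambda_n}{2}$. Invoking Lemma~\ref{lem:bound X_se and X_s^ce} as a black box, as you do, is just a tidier bookkeeping of the same argument (and in fact yields the slightly stronger entrywise bound $\frac{\lambda_n}{2}\cdot\frac{\alpha}{8-4\alpha}$).
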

	We take $t = \frac{\lambda_n}{2}$ in the above lemma and get $ \| \Delta \|_2 \leq \frac{2 \lambda_n \sqrt{s} }{C_{\min}}  $ with probability at least $1 - \calO(\frac{1}{d})$. 
\end{proof}

\section{Proof of Lemma \ref{lem:bound on l2 Xse}}
\label{appndx:lem:bound on l2 Xse}
\paragraph{Lemma \ref{lem:bound on l2 Xse}}
\emph{If  $\lambda_n \geq \frac{128 \rho k}{\alpha} \frac{\sqrt{\log d}}{n}$ and $n = \Omega( \frac{s^3 \log d}{\tau_2(C_{\min}, \rho, k)})$, then $\| \frac{1}{n} \bX_S^\T \be \|_2 \leq \sqrt{s} \frac{\lambda_n}{2}$ with probability at least $1 - \calO(\frac{1}{d})$.
}
\begin{proof}
	\label{proof:bound on l2 Xse}
	We take the $i$-th entry of $ \frac{1}{n} \bX_S^\T \be$ for some $i \in S$. Note that
	\begin{align}
	\begin{split}
	| \frac{1}{n} \bX_{i.}^\T \be | = | \frac{1}{n} \sum_{j=1}^n \bX_{ji} \be_j |
	\end{split}
	\end{align}
	Recall that $\bX_{ji}$ is a sub-Gaussian random variable with parameter $\rho$ and $\be_j$ is a sub-Gaussian random variable with  parameter $\sigma_e^2)$. Then, $\frac{\bX_{ji}}{\rho}\frac{\be_j}{\sigma_e}$ is a sub-exponential random variable with parameters $(4\sqrt{2}, 2)$. Using the concentration bounds for the sum of independent sub-exponential random variables~\cite{wainwright2019high}, we can write:
	\begin{align}
	\begin{split}
	\prob( | \frac{1}{n} \sum_{j=1}^n \frac{\bX_{ji}}{\rho}\frac{\be_j}{\sigma_e} | \geq t) \leq 2 \exp(- \frac{nt^2}{64}), \; 0 \leq t \leq 8
	\end{split}
	\end{align} 
	Taking a union bound across $i \in S$, we get
	\begin{align}
	\begin{split}
	&\prob( \exists i \in S \mid | \frac{1}{n} \sum_{j=1}^n \frac{\bX_{ji}}{\rho}\frac{\be_j}{\sigma_e} | \geq t) \leq 2s \exp(- \frac{nt^2}{64}),\\
	& 0 \leq t \leq 8
	\end{split}
	\end{align}
	It follows that $\| \frac{1}{n} \bX_S^\T \be \|_2 \leq \sqrt{s} t$ with probability at least $1 - 2s \exp(- \frac{nt^2}{64 \rho^2\sigma_e^2})$ for some $0 \leq t \leq 8 \rho \sigma_e$.
\end{proof}

\section{Proof of Corollary \ref{cor:primal dual vairables}}
\label{appndx:proof of cor:primal dual vairables}

\paragraph{Corollary \ref{cor:primal dual vairables}}
\emph{If Assumptions~\ref{assum:postive definite} and \ref{assum:mutual incoherence condition} hold, $\lambda_n \geq \frac{128 \rho k}{\alpha} \frac{\sqrt{\log d}}{n}$ and $n = \Omega( \frac{s^3 \log d}{\tau_1(C_{\min}, \alpha, \sigma, \Sigma, \rho)} )$, then the following statements are true with probability at least $1 - \calO(\frac{1}{n})$:
	\begin{enumerate}
		\item The solution $\bZ$ correctly recovers hidden attribute for each sample, i.e., $\bZ = \bZ^* = \zeta^* {\zeta^*}^\T$.
		\item The support of recovered regression parameter $\tw$ matches exactly with the support of $w^*$.
		\item If $\min_{i \in S} |w_i^*| \geq  \frac{4 \lambda_n \sqrt{s} }{C_{\min}} $ then for all $i \in \seq{d}$, $\tw_i$ and $w_i^*$ match up to their sign. 
	\end{enumerate}
}
\begin{proof}
	\label{proof:primal dual vairables corollary}
	Since $\bZ = \bZ^*$, the hidden attributes of each sample can be read by simply looking at the first row or column of $\bZ$ and skipping the first entry. The supports of $\tw$ and $w^*$ match exactly through construction (and subsequent proofs). Observe that $\| \Delta \|_{\infty} \leq \| \Delta \|_2 \leq \frac{2 \lambda_n \sqrt{s} }{C_{\min}} $. Thus, it follows that if $\min_{i \in S} |w_i^*| \geq  \frac{4 \lambda_n \sqrt{s} }{C_{\min}} $ then for all $i \in \seq{d}$, $\tw_i$ and $w_i^*$ will have the same sign.
\end{proof}

\section{Quality of Solution with bias parameter $\gamma$}
\label{appndx:recovery with gamma}

\begin{figure*}[!ht]
	\centering
	\begin{subfigure}{.45\textwidth}
		\centering
		\includegraphics[width=\linewidth]{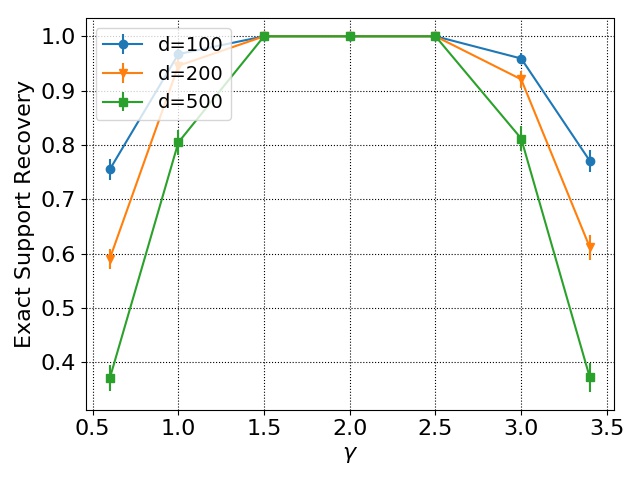}
		\caption{Recovery of $S$ versus $\gamma$}
		\label{fig:gamma_supp}
	\end{subfigure}%
	\begin{subfigure}{.45\textwidth}
		\centering
		\includegraphics[width=\linewidth]{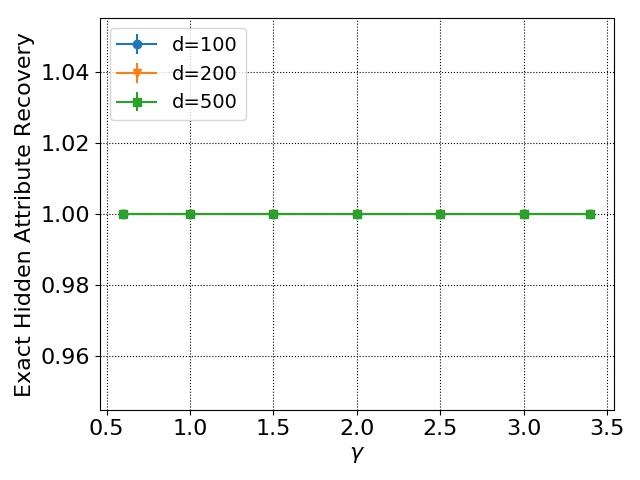}	
		\caption{Recovery of $\bZ^*$ versus $\gamma$}
		\label{fig:gamma_label}
	\end{subfigure}%
	\caption{Left: Exact support recovery of $w^*$ across $30$ runs. Right: Exact hidden attribute recovery of $\bZ*$ across $30$ runs. The true value of $\gamma$ is $2$.}
	\label{fig:gamma}
\end{figure*}

Our method requires a known value of bias parameter $\gamma$ in our analysis. However, in practice, we observe that even a rough estimate (up to $\pm 25\%$) works pretty well. We conducted computational experiments with a range of values of $\gamma$ and the reported results are averaged across $30$ independent runs. The performance measures used here are the same as in Section~\ref{sec:experimental results} (See Appendix~\ref{appndx:experimental results} for details). Figure~\ref{fig:gamma_supp} shows the quality of support recovery for different values of $\gamma$ and Figure~\ref{fig:gamma_label} shows the quality of recovering the hidden attributes for different values of $\gamma$. Note how both the curves show $100\%$ correct recovery for a wide range of $\gamma$. These experiments show that prior knowledge of the exact value of $\gamma$ is not necessary for our method.

\section{Alternate Optimization Algorithm for Solving Optimization Problem \eqref{eq:opt prob 2}}
\label{appndx:alternate optimization}

We use the following alternate optimization algorithm to solve optimization problem~\eqref{eq:opt prob 2} in our computational experiments. 

\begin{algorithm}[!h]
	\begin{algorithmic}
		\STATE \textbf{Input:} Data samples $(\bX, \by)$, amount of bias $\gamma$
		\STATE \textbf{Output:}  $\tw, \bZ$
		\STATE $\bZ_0 \gets \mathbf{I}_{n+1 \times n+1}$
		\STATE $\bz_0 \gets \bZ_0(2:n+1, 1)$ 
		\FOR{$t=1, 2,\cdots$ until $\bZ_{t-1} = \bZ_{t}$}
		\STATE{
			$\tw_t \gets \arg\min_{w} \frac{1}{n} (\bX w + \gamma \bz_{t-1} - \by)^\T (\bX w + \gamma \bz_{t-1} - \by) + \lambda_n \| w \|_1 $ \\
			$\bM(\tw_t) \gets \begin{bmatrix} \frac{1}{n} \| \bX\tw_t - \by \|_2^2 & \frac{\gamma}{n} (\bX \tw_t - \by)^\T \\ \frac{\gamma}{n} (\bX \tw_t - \by)^\T & \frac{\gamma^2}{n} \bI_{n \times n}  \end{bmatrix} $\\ 
			$\bZ_t \gets \arg\min_{\bZ} {\tr(\bM(\tw_t)\bZ)}$, such that $\diag(\bZ) = \mathbf{1}$, $\bZ \succeq \bzero_{n+1\times n+1}$\\
			$\bz_t \gets \bZ_t(2:n+1, 1)$   
		}
		\ENDFOR
		\STATE $\tw \gets \tw_t,\;\; \bZ \gets \bZ_t$
	\end{algorithmic}
	\caption{\label{algo:alternate_opt} Alternate Optimization Algorithm for our problem}
\end{algorithm}%

Recall from equation~\eqref{eq:M and Z} that 
\begin{align}
\begin{split}
\bZ \triangleq \begin{bmatrix} 1 & \bz^\T \\ \bz & \bz \bz^\T  \end{bmatrix}.
\end{split}
\end{align}
Thus, we can read $\bz$ from $\bZ$ by considering its first column and skipping the first entry. We denote this as $\bz = \bZ(2:n+1, 1)$. We use a similar notation in Algorithm~\ref{algo:alternate_opt} to assign values to vectors $\bz_0$ and $\bz_t$ from matrices $\bZ_0$ and $\bZ_t$ respectively.

We will show that if Algorithm~\ref{algo:alternate_opt} converges then it converges to the optimal solution of optimization problem \eqref{eq:opt prob 2}. To do this, consider 
\begin{align}
\begin{split}
f_1(w, \bZ) &= \frac{1}{n} (\bX w + \gamma \bz_{t-1} - \by)^\T (\bX w + \gamma \bz_{t-1} - \by) \\
f_2(w) &= \lambda_n \| w \|_1 \; .
\end{split}
\end{align}  
Note that $f_2(w)$ is not differentiable. Let $g(\bZ) \triangleq - \eig_{\min}(\bZ) $  and $h_i(\bZ) \triangleq \bZ_{ii} - 1, \forall i \in \seq{n+1}$. Observe that $g(\bZ) \leq 0$ and $h_i(\bZ) = 0, \forall i \in \seq{n+1}$ denote the constraints $\bZ \succeq \bzero_{n+1 \times n+1}$ and $\diag(\bZ) = \mathbf{1}$ respectively. We define $\diff{f_2(w)}{w}$ as the sub-differential set for $f_2(w)$ and $f_2'(w) \in \diff{f_2(w)}{w}$ is an element of the sub-differential set $\diff{f_2(w)}{w}$. Observe that $f_1(w, \bZ) + f_2(w)$, $g(\bZ)$ and $h_i(\bZ)$ are convex with respect to $w$ and $\bZ$ separately but they are not jointly convex. Consider the following optimization problem:
\begin{align}
\label{eq:opt prob alternate}
\begin{split}
\tw, \bZ^* = \begin{matrix}
\arg\min_{w, \bZ} &  f_1(w, \bZ) + f_2(w) & \\
\text{such that} & g(\bZ) \leq 0 &  \\
& h_i(\bZ) = 0 & \forall i \in \seq{n+1} 
\end{matrix}
\end{split}
\end{align}

We have already shown that the solution $\tw, \bZ^*$ is the unique solution to \eqref{eq:opt prob alternate}. We propose the following alternate optimization algorithm to solve this problem:  

\begin{algorithm}[!h]
	\begin{algorithmic}
		\STATE \textbf{Output:}  $w, \bZ$
		\STATE $\bZ_0 \gets \mathbf{I}_{n+1 \times n+1}$
		\FOR{$t=1, 2\cdots$ until $\bZ_{t-1} = \bZ_{t}$}
		\STATE{
			\begin{align}
			\label{eq:yt} 
			w_t \gets \arg\min_{w} f_1(w, \bZ_{t-1}) + f_2(w)
			\end{align} 
			\begin{align}
			\label{eq:xt}
			\bZ_t \gets \begin{matrix}
			\arg\min_{\bZ} &  f_1(w_t, \bZ) & \\
			\text{such that} & g(\bZ) \leq 0 & \\
			& h_i(\bZ) = 0 & \forall i \in \seq{n+1} 
			\end{matrix}
			\end{align}   
		}
		\ENDFOR
		\STATE $w \gets w_t,\;\; \bZ \gets \bZ_t$
	\end{algorithmic}
	\caption{\label{algo:alternate_opt_gen} Alternate Optimization Algorithm}
\end{algorithm}

We will prove the following proposition:
\begin{proposition}
	\label{prop:alternate opt}
	If Algorithm \ref{algo:alternate_opt_gen} converges, then $w = \tw$ and $\bZ = \bZ^*$. 
\end{proposition}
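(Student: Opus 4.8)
The plan is to show that any fixed point of Algorithm~\ref{algo:alternate_opt_gen} satisfies the \emph{full} KKT system of the joint problem~\eqref{eq:opt prob alternate}, and then to close the gap to global optimality using invexity and uniqueness. Suppose the algorithm converges, i.e.\ $\bZ_{t-1} = \bZ_t =: \bZ$ for some $t$; by update~\eqref{eq:yt} this also forces $w_{t-1} = w_t =: w$, so $(w, \bZ)$ is a genuine fixed point of the pair of updates~\eqref{eq:yt}--\eqref{eq:xt}.

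First I would write down the KKT conditions for~\eqref{eq:opt prob alternate}, exactly as in subsection~\ref{subsec:kkt condtions}: introducing a dual matrix $\Lambda \succeq \bzero_{n+1 \times n+1}$ for $\bZ \succeq \bzero_{n+1 \times n+1}$ and a dual vector $\balpha$ for $\diag(\bZ) = \mathbf{1}$, the conditions are (i) $\nabla_w f_1(w,\bZ) + f_2'(w) = \bzero_{d \times 1}$ for some $f_2'(w) \in \diff{f_2(w)}{w}$; (ii) $\nabla_{\bZ} f_1(w,\bZ) + \diag(\balpha) - \Lambda = \bzero_{n+1 \times n+1}$; (iii) $\inner{\Lambda}{\bZ} = 0$; (iv) $\Lambda \succeq \bzero_{n+1 \times n+1}$; and (v) $\diag(\bZ) = \mathbf{1}$, $\bZ \succeq \bzero_{n+1 \times n+1}$. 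The structural point that makes everything work is that $w$ appears in no constraint, so stationarity in $w$ carries no dual term.

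Next I would verify each of (i)--(v) at the fixed point. Since $w$ solves the \emph{unconstrained} convex problem $\min_w f_1(w,\bZ) + f_2(w)$ (update~\eqref{eq:yt} with $\bZ_{t-1} = \bZ$), its first-order optimality condition is precisely $\bzero_{d \times 1} \in \nabla_w f_1(w,\bZ) + \diff{f_2(w)}{w}$, which is (i). Since $\bZ$ solves the convex semidefinite subproblem~\eqref{eq:xt} — convex objective $f_1(w,\cdot)$, affine equalities $h_i$, and the PSD-cone constraint $g(\bZ) \le 0$, for which a constraint qualification holds (e.g.\ $\mathbf{I}_{n+1}$ is strictly feasible, so Slater's condition holds) — the KKT conditions of that subproblem are necessary, and they supply dual variables $\balpha, \Lambda$ satisfying (ii), (iii), (iv), along with primal feasibility (v). Hence $(w, \bZ)$ together with these very dual variables satisfies the entire joint KKT system of~\eqref{eq:opt prob alternate}.

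Finally, since~\eqref{eq:opt prob alternate} is the same program shown to be invex via Lemma~\ref{lem:invexity}, the KKT conditions are sufficient for global optimality~\cite{hanson1981sufficiency}, so $(w,\bZ)$ is a global minimizer of~\eqref{eq:opt prob alternate}; by the uniqueness established in subsection~\ref{subsec:uniqueness of the solution} (and Theorem~\ref{thm:primal dual witness construction}), $(w,\bZ) = (\tw, \bZ^*)$. I expect the main point needing care to be the gluing step: the two \emph{partial} optimality systems produced by the alternating updates combine into the \emph{complete} joint KKT system — this is exactly where separate convexity in $w$ and $\bZ$, and the fact that the constraints touch only $\bZ$, are used — together with confirming the constraint qualification for the $\bZ$-subproblem so its KKT conditions genuinely hold; passing from KKT to global optimality is then immediate from invexity.
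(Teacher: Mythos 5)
Your proposal is correct and follows essentially the same route as the paper: at a fixed point, the KKT conditions of the two convex subproblems in $w$ and $\bZ$ are glued into the full KKT system of \eqref{eq:opt prob alternate} (possible because the constraints involve only $\bZ$), and then invexity plus the previously established uniqueness of $(\tw,\bZ^*)$ yields the conclusion. The only differences are cosmetic — you phrase the $\bZ$-constraints via a dual matrix $\Lambda$ and explicitly check Slater's condition, whereas the paper uses the scalar multiplier formulation $g(\bZ)=-\eig_{\min}(\bZ)\le 0$ without remarking on a constraint qualification.
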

\begin{proof}
	We start by writing the KKT conditions for optimization problem \eqref{eq:opt prob alternate}.
	\begin{enumerate}
		\item Stationarity conditions: $\diff{f_1(\tw, \bZ^*)}{w} + f_2'(\tw)  = 0$ and $\diff{f_1(\tw, \bZ^*)}{\bZ} + r \diff{g(\bZ^*)}{\bZ} +  \sum_{i=1}^{n+1} s_i \diff{h_i(\bZ^*)}{\bZ}  = 0$.
		\item Complementary slackness condition: $r g(\bZ^*) = 0$.
		\item Primal feasibility condition: $g(\bZ^*) \leq 0$ and $h_i(\bZ^*) = 0, \forall i \in \seq{n+1}$.
		\item Dual feasibility condition: $r \geq 0$.   
	\end{enumerate}
	Any optimal solution to optimization problem \eqref{eq:opt prob alternate} must satisfy the above KKT conditions. Next, we write the KKT conditions for \eqref{eq:yt} at convergence, i.e., at $\bZ_{t} = \bZ_{t-1}$:
	\begin{enumerate}
		\item Stationarity condition: $\diff{f(w_t, \bZ_t)}{w} + f_2'(w_t) = 0$
	\end{enumerate}
	Similarly, we write the KKT conditions for \eqref{eq:xt} at convergence, i.e., at $\bZ_{t} = \bZ_{t-1}$:
	\begin{enumerate}
		\item Stationarity conditions: $\diff{f_1(w_t, \bZ_t)}{\bZ} +  t \diff{g(\bZ_t)}{\bZ} +  \sum_{i=1}^{n+1} u_i \diff{h_i(\bZ_t)}{\bZ}  = 0$.
		\item Complementary slackness condition: $t g(\bZ_t) = 0$.
		\item Primal feasibility condition: $g(\bZ_t) \leq 0$ and $h_i(\bZ_t) = 0, \forall i \in \seq{n+1}$.
		\item Dual feasibility condition: $t \geq 0$.   
	\end{enumerate}
	Combining the KKT conditions at $w_t, \bZ_t$ for \eqref{eq:yt} and \eqref{eq:xt} and taking $r = t$ and $s_i = u_i, \forall i \in \seq{n+1}$, we see that all KKT conditions of \eqref{eq:opt prob alternate} are satisfied by $w_t, \bZ_t$. Since the solution to \eqref{eq:opt prob alternate} is unique, it follows that $w  = \tw$ and $\bZ = \bZ^*$.
\end{proof}

\section{Our Assumptions Hold for Finite Samples}
\label{appndx:assumptions hold in sample setting}

\begin{figure*}[!ht]
	\centering
	\begin{subfigure}{.45\textwidth}
		\centering
		\includegraphics[width=\linewidth]{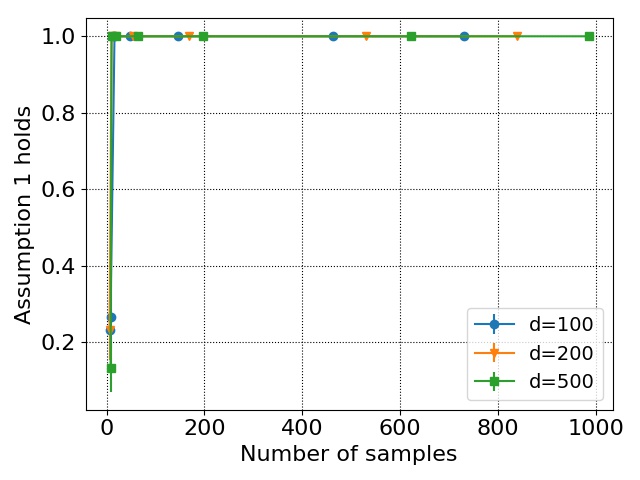}
		\caption{Positive Definiteness against number of samples}
		\label{fig:assum1}
	\end{subfigure}%
	\begin{subfigure}{.45\textwidth}
		\centering
		\includegraphics[width=\linewidth]{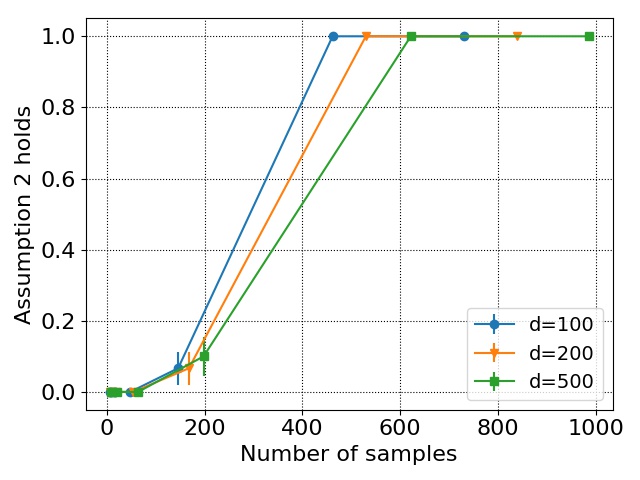}	
		\caption{Mutual Incoherence against number of samples}
		\label{fig:assum2}
	\end{subfigure}%
	\caption{Left: Positive Definiteness Assumption~\ref{assum:postive definite} with varying number of samples for $d=100, 200$ and $500$, Right: Mutual Incoherence Assumption~\ref{assum:mutual incoherence condition} with varying number of samples for $d=100, 200$ and $d=500$.}
	\label{fig:assum}
\end{figure*}  

Figure~\ref{fig:assum} shows how our assumptions hold (averaged across 30 independent runs) in the finite sample regime with varying number of samples  when $X$ is drawn from a standard normal distribution.  We notice that for a fixed $s$, Assumption~\ref{assum:postive definite} is easier to hold (i.e., $n = \Omega(s + \log d)$) than Assumption~\ref{assum:mutual incoherence condition} (i.e., $n = \Omega(s^3 \log d)$). Eventually, both assumptions hold as the number of samples increases.

\section{Details of Experimental Validation}
\label{appndx:experimental results}

In this section, we validate our theoretical results by conducting computational experiments on synthetic data. We will show that for a fixed $s$, we need $n = 10^\beta \log d$ samples for recovering the exact support of $w^*$ and exact hidden attributes $\bZ^*$, where $\beta \equiv \beta(s, C_{\min}, \alpha, \sigma, \Sigma, \rho, \gamma, k)$ is a control parameter which is independent of $d$.
\paragraph{Data Generation.} For $d=100, 200$ and $500$, we draw $\bX \in \real^{n \times d}$ from a standard Gaussian distribution by varying $n$ as $10^\beta \log d$ for a control parameter $\beta$. The $s=10$ non-zero entries of true parameter $w^* \in \real^d$ are chosen uniformly at random between $[-1, 1]$. Every non-zero entry in $w^*$ is changed to at least $0.75$ to make sure that it is not too close to $0$. The independent noise $\be \in \real^n$ is drawn from a zero mean Gaussian distribution with standard deviation $\frac{k}{\sqrt{\log n}}$ for $k = 0.15$. The estimate of the bias $\gamma \in \real_{> 0}$ is kept at $2$. Regarding the hidden attribute $\bz^* \in \{-1,1\}^n$, we set $\frac{n}{2}$ entries as $+1$ and the rest as $-1$. The response $\by \in \real^n$ is generated according to \eqref{eq:generative process}. This process is repeated $30$ times and the reported results are averaged across these $30$ independent runs. 

\paragraph{Choice of Regularizer and Solution.}  According to Theorem~\ref{thm:primal dual witness construction}, the regularizer $\lambda_n$ is chosen to be equal to $\frac{128 \rho k}{\alpha} \frac{\sqrt{\log d}}{n}$. We solve optimization problem \eqref{eq:opt prob 2} by using an alternate optimization algorithm that converges to the optimal solution (See Appendix \ref{appndx:alternate optimization} for details).

\paragraph{Measure of Performance.} The performance is measured by comparing the recovered solutions $\tw$ and $\bZ$ with the true parameters $w^*$ and $\bZ^*$. The quality of $\tw$ is measured by comparing its support to the support $S$ of the true parameter $w^*$ by computing the Jaccard index $J(S, \hat{S})$, where $\hat{S}$ is the support of $\tw$, i.e., $\hat{S} = \{ i | \tw_i \ne 0, i \in \seq{d} \}$. The average of $J(S, \hat{S})$ across 30 independent runs is plotted against the number of samples $n$ (See Figure~\ref{fig:recnumsample}, \ref{fig:recnumsamplecp}). Similarly, the quality of $\bZ$ is measured by the indicator variable $I(\bZ, \bZ^*)$. The average of $I(\bZ, \bZ^*)$ across 30 independent runs is plotted against the number of samples $n$ (See Figure~\ref{fig:labelnumsample}, \ref{fig:labelnumsamplecp}). The Jaccard index $J(S, \hat{S})$ and indicator variable $I(\bZ, \bZ^*)$ are defined as follows:
\begin{align*}
\begin{split}
J(S, \hat{S}) \triangleq \frac{|S \cap \hat{S}|}{|S \cup \hat{S}|}, \;\;\; 	I(\bZ, \bZ^*) \triangleq \begin{cases}
0, \; \rm{if}\; \bZ \ne \bZ^* \\
1 \; \rm{if}\; \bZ = \bZ^*
\end{cases}
\end{split}
\end{align*} 

\paragraph{Observation.} Figure~\ref{fig:recnumsample} shows the Jaccard index of support recovery with varying number of samples. We see that our method recovers the true support for all three values of $d$ as we increase number of samples. Also, notice how all three curves line up perfectly in Figure~\ref{fig:recnumsamplecp} when we plot the support recovery with respect to the control parameter $\beta = \log \frac{n}{\log d}$. This validates our theoretical results. Similarly, Figure~\ref{fig:labelnumsample} shows exact recovery of the hidden attribute with varying number of samples. We again see that as the number of samples increase, our recovered hidden attributes are 100\% correct. Again, the three different curves for different values of $d$ line up nicely when plotted against $\beta$. Interestingly, a small percentage of our experiments recover the hidden attributes exactly for small number of samples ($<20$). We believe that this can be ascribed to $\bZ^*$ having small dimensions and thus becoming relatively easier to recover. On a more practical point of view, once hidden attributes are identified for each sample point, the associated bias (for and against) can be duly removed from the model.   

\section{Optimization Problem~\eqref{eq:opt prob 2} is Non-Convex}
\label{appndx:opt non-convex}

Before we begin the proof of non-convexity of \eqref{eq:opt prob 2}, we note that optimization \eqref{eq:opt prob 2} is stated for a fixed $\bX$. However, since the entries in $\bX$ are drawn from a sub-Gaussian distribution, matrix $\bX$ can potentially realize any real matrix in $\real^{n \times d}$. In particular, we are interested in a problem where $\exists i, k \in \seq{d}$ such that $\sum_{l=1}^n \bX_{li}^2  - \bX_{ki}$ is non-zero. Since $\bX$ can be any real matrix in $\real^{n \times d}$, this is not a strong assumption. With this assumption in mind, we present the following lemma. 

\begin{lemma}
	\label{lem:non-convex opt}
	The optimization problem \eqref{eq:opt prob 2} defined on a convex set $C$, is non-convex.
\end{lemma}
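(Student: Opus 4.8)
\textbf{Proof proposal.} The plan is to exhibit a one-dimensional affine segment contained in $C$ along which the objective of \eqref{eq:opt prob 2} fails to be convex; since the restriction of a convex function to a line segment inside its (convex) domain is convex, this suffices, and it also pinpoints that the obstruction is exactly the bilinear coupling between $w$ and $\bZ$ inside $\inner{\bM(w)}{\bZ}$ (even though, as already noted, the objective is convex in each of $w$ and $\bZ$ separately).

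Concretely, fix indices $i \in \seq{d}$ and $k \in \seq{n}$ with $\sum_{l=1}^{n}\bX_{li}^2 - \bX_{ki} \neq 0$ (such a pair exists by assumption). For $\theta \in [-2\gamma,\,2\gamma]$ define the path $(w(\theta),\bZ(\theta))$ by $w(\theta) = \theta\,\be_i$ (the $i$-th coordinate equal to $\theta$, all others $0$) and $\bZ(\theta)$ equal to the $(n+1)\times(n+1)$ identity matrix except that the $(1,k+1)$ and $(k+1,1)$ entries are set to $-\theta/(2\gamma)$. Then $\diag(\bZ(\theta)) = \mathbf{1}$, and the only nontrivial principal-minor condition for $\bZ(\theta)\succeq\mathbf{0}$ is $1-(\theta/(2\gamma))^2 \geq 0$, which holds on this interval; hence $(w(\theta),\bZ(\theta))\in C$. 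Using $\bX w(\theta)=\theta\,\bX_{\cdot i}$ and the definitions in \eqref{eq:M and Z}, only the $(1,1)$ diagonal term, the $n$ remaining diagonal terms, and the symmetric pair of $(1,k+1)$ off-diagonal terms of $\inner{\bM(w(\theta))}{\bZ(\theta)}$ are nonzero, which gives
\begin{align*}
\inner{\bM(w(\theta))}{\bZ(\theta)} = \tfrac{1}{n}\|\theta\,\bX_{\cdot i} - \by\|_2^2 + \gamma^2 - \tfrac{\theta}{n}\bigl(\theta\,\bX_{ki} - \by_k\bigr),
\end{align*}
so, adding $\lambda_n\|w(\theta)\|_1 = \lambda_n|\theta|$ and collecting terms,
\begin{align*}
F(\theta) \triangleq \frac{\theta^2}{n}\Bigl(\textstyle\sum_{l=1}^{n}\bX_{li}^2 - \bX_{ki}\Bigr) + c_1\theta + c_0 + \lambda_n|\theta|,
\end{align*}
for constants $c_0,c_1$ depending only on $\bX,\by,\gamma$.

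Restricted to $\theta\in(0,2\gamma)$ this is a smooth quadratic with leading coefficient $\tfrac{1}{n}\bigl(\sum_l\bX_{li}^2 - \bX_{ki}\bigr)$: when this coefficient is negative, $F$ is strictly concave on $(0,2\gamma)$, contradicting convexity of the objective on $C$. When the coefficient is instead positive, I would run the same computation on the two-parameter slice obtained by letting $w=\theta\,\be_i$ and the $(1,k+1)$ entry of $\bZ$ vary freely; for $\theta>0$ the Hessian of the (smooth) objective in $(\theta,\bZ_{1,k+1})$ equals $\tfrac{2}{n}\left[\begin{smallmatrix}\sum_l\bX_{li}^2 & \gamma\bX_{ki}\\ \gamma\bX_{ki} & 0\end{smallmatrix}\right]$, with determinant $-4\gamma^2\bX_{ki}^2/n^2<0$ whenever $\bX_{ki}\neq 0$ — and the stated assumption forces such an $(i,k)$ to exist (if $\bX_{ki}=0$ then $\sum_l\bX_{li}^2\neq 0$, so some $\bX_{k'i}\neq 0$). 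Either way the restricted problem is non-convex, hence so is \eqref{eq:opt prob 2} on $C$.

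The main obstacle is bookkeeping rather than depth: one must choose the perturbation direction so that it simultaneously (i) keeps $\bZ\succeq\mathbf{0}$ and $\diag(\bZ)=\mathbf{1}$, (ii) activates the $w$–$\bZ$ cross term of $\inner{\bM(w)}{\bZ}$, and (iii) stays away from the kink of $\lambda_n\|w\|_1$ at $w_i=0$, so that the piecewise-linear (hence convex) regularizer cannot mask the negative curvature. Once the path above is fixed, the rest is the routine expansion displayed above.
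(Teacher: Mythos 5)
Your proof is correct, but it verifies non-convexity by a different mechanism than the paper. The paper picks two specific points, $(w,\bZ)=(\beta\, e_i,\mathbf{I})$ and $(\bar w,\bar\bZ)=(-\beta\, e_i,\mathbf{I}+\text{the symmetric }(1,k+1)\text{ perturbation})$, computes the first-order convexity gap for $f(w,\bZ)=\inner{\bM(w)}{\bZ}$ alone, finds it equals $\beta\bigl(\tfrac{4}{n}\sum_l \bX_{li}^2-\tfrac{4}{n}\bX_{ki}\bigr)$, and chooses the sign of $\beta$ to violate the gradient inequality (the $\ell_1$ term is dismissed with ``it suffices to show $f$ is non-convex''). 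You instead restrict the full objective to affine slices of $C$ built from the same coupling (coordinate $i$ of $w$ and the $(1,k+1)$ off-diagonal of $\bZ$) and argue via curvature: a strictly concave restriction along a feasible segment when $\sum_l \bX_{li}^2-\bX_{ki}<0$, and an indefinite Hessian on a two-dimensional feasible slice (determinant $-4\gamma^2\bX_{ki}^2/n^2<0$, with the fallback to some $k'$ with $\bX_{k'i}\neq 0$) otherwise; your case analysis does exhaust the paper's assumption. What your route buys is an explicit and careful treatment of the non-smooth regularizer (you stay on $\theta>0$, where $\lambda_n\|w(\theta)\|_1$ is linear and cannot mask negative curvature), which the paper's ``it suffices'' step glosses over; what it costs is the case split and the second slice computation, whereas the paper's argument is a single first-order computation with a free sign parameter. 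Incidentally, your second (indefinite-Hessian) argument already covers the first case as well, since $\sum_l\bX_{li}^2-\bX_{ki}<0$ forces $\bX_{ki}\neq 0$, so the proof could be compressed to the 2D-slice argument alone.
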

\begin{proof}
	As defined in \eqref{eq:opt prob 2}, we define the domain for optimization problem on a convex set $C = \{ (w, \bZ) \mid w \in \real^d, \diag(\bZ) = \mathbf{1}, \bZ \succeq \mathbf{0}_{n+1 \times n+1} \}$. It should be noted that $C$ is a convex set and we will show that the non-convexity of the problem comes from the objective function. We are solving the following optimization problem:
	\begin{align}
	\begin{split}
	\begin{matrix}
	\min_{(w, \bZ) \in C } & \inner{\bM(w)}{\bZ} + \lambda_n \| w \|_1  \\
	\end{matrix} \, ,
	\end{split}
	\end{align}
	
	It suffices to show that $f(w, \bZ) = \inner{\bM(w)}{\bZ}$ is non-convex function. To that end, we will construct a setting of $(w, \bZ) \in C$ and $(\bar{w}, \bar{\bZ}) \in C$ such that the first order condition for convexity fails to hold, i.e, 
	\begin{align}
	\label{eq:non-convexity}
	\begin{split}
	f(w, \bZ) - f(\bar{w}, \bar{\bZ}) < \inner{\diff{f(\bar{w}, \bar{\bZ})}{w}}{(w - \bar{w})} + \inner{ \diff{f(\bar{w}, \bar{\bZ})}{\bZ}}{ \bZ - \bar{\bZ}} . 
	\end{split}
	\end{align}
	
	First notice that,
	\begin{align*}
	\begin{split}
	\diff{f(\bar{w}, \bar{\bZ})}{w} = \sum_{ij} \bar{\bZ}_{ij} \diff{\bM_{ij}(\bar{w})}{w}, \;\;\; \diff{f(\bar{w}, \bar{\bZ})}{\bZ} = \bM(\bar{w})
	\end{split}
	\end{align*}
	
	Recall from equation~\eqref{eq:M and Z} that, 
	\begin{align}
	\begin{split}
	&l(w) \triangleq \frac{1}{n} (\bX w - \by)^\T (\bX w - \by),\;\; \bM(w) \triangleq \begin{bmatrix} l(w) & \frac{\gamma}{n} (\bX w - \by)^\T \\ \frac{\gamma}{n} (\bX w - \by) & \frac{\gamma^2}{n} \bI_{n \times n}  \end{bmatrix},
	\end{split}
	\end{align} 
	
	
	Then $\diff{f(\bar{w}, \bar{\bZ})}{w}$ can be simplified as:
	\begin{align}
	\begin{split}
	\diff{f(\bar{w}, \bar{\bZ})}{w} = \frac{2}{n}(\bX^\T \bX \bar{w} - \bX^\T \by + \bX^\T \bar{\bz}),
	\end{split}
	\end{align}
	
	where $\bar{\bz} \in \real^n$ denotes the first column of $\bar{\bZ}$ after skipping the first entry.
	
	We provide the following construction for $(w, \bZ) \in C$ and $(\bar{w}, \bar{\bZ}) \in C$. We take $w \in \{0, \beta \}^d$ such that $w_k = 0, \forall k \ne i$ and $w_i = \beta$ where $\beta \in \real$. Similarly,  $\bar{w} \in \{0, \beta \}^d$ such that $\bar{w}_k = 0, \forall k \ne i$ and $\bar{w}_i = - \beta$. Since $w \in \real^d$, such a setting exists for a non-zero $\beta$. Furthermore, we take $\bZ = \mathbf{I}_{n+1 \times n+1}$ and $\bar{\bZ} \in \{0, 1\}^{n+1 \times n+1}$ such that $\bar{\bZ}_{ii} = 1, \forall i \in \seq{n+1}$ and $\bar{\bZ}_{1 (k+1)} = 1, \bar{\bZ}_{(k+1) 1} = 1$. Now, we can compute the following quantities:
	\begin{align}
	\begin{split}
	\inner{\bM(w)}{\bZ} &= l(w) + \gamma^2 = \frac{1}{n} \sum_{l=1}^n (\bX_{li} w_i - y_l)^2 + \gamma^2 \\ 
	\inner{\bM(\bar{w})}{\bar{\bZ}} &= l(\bar{w}) + \gamma^2 - \frac{2\gamma}{n} (\bX_{ki} \bar{w}_i - y_k) = \frac{1}{n} \sum_{l=1}^n (\bX_{li} \bar{w}_i - y_l)^2 + \gamma^2 + \frac{2\gamma}{n} (\bX_{ki} \bar{w}_i - y_k) \\ 
	\inner{\bM(\bar{w})}{\bZ - \bar{\bZ}} &= - \frac{2\gamma}{n} (\bX_{ki} \bar{w}_i - y_k) \\
	\inner{\diff{f(\bar{w}, \bar{\bZ})}{w}}{w - \bar{w}} &= \frac{2}{n} ((w_i \bar{w}_i - \bar{w}_i^2) \sum_{l=1}^n \bX_{li}^2 + (-w_i + \bar{w}_i) \sum_{l=1}^n \bX_{li} \by_l + (w_i - \bar{w}_i) \bX_{ki}) 
	\end{split}
	\end{align}
	Substituting $w_i = \beta$ and $\bar{w}_i = -\beta$, we get
	\begin{align}
	\begin{split}
	l(w) - l(\bar{w}) &= - \frac{4 \beta}{n} \sum_{l=1}^n \bX_{li} \by_l \\
	\inner{\diff{f(\bar{w}, \bar{\bZ})}{w}}{w - \bar{w}} &= - \frac{4\beta}{n} \sum_{l=1}^n \bX_{li}^2 - \frac{4\beta}{n} \sum_{l=1}^n \bX_{li} \by_l + \frac{4\beta}{n} \bX_{ki} 
	\end{split}
	\end{align}
	
	Clearly, 
	\begin{align}
	\begin{split}
	\inner{\bM(w)}{\bZ} - \inner{\bM(\bar{w})}{\bar{\bZ}} &= - \frac{4 \beta}{n} \sum_{l=1}^n \bX_{li} \by_l - \frac{2\gamma}{n} (\bX_{ki} \bar{w}_i - y_k) \\
	\inner{\bM(\bar{w})}{\bZ - \bar{\bZ}} + \inner{\diff{f(\bar{w}, \bar{\bZ})}{w}}{w - \bar{w}} &= - \frac{2\gamma}{n} (\bX_{ki} \bar{w}_i - y_k) - \frac{4\beta}{n} \sum_{l=1}^n \bX_{li}^2 - \frac{4\beta}{n} \sum_{l=1}^n \bX_{li} \by_l + \frac{4\beta}{n} \bX_{ki} 
	\end{split}
	\end{align}
	It follows that
	\begin{align}
	\label{eq:nonconvex beta}
	\begin{split}
	\inner{\bM(w)}{\bZ} - \inner{\bM(\bar{w})}{\bar{\bZ}} - \inner{\bM(\bar{w})}{\bZ - \bar{\bZ}} -\inner{\diff{f(\bar{w}, \bar{\bZ})}{w}}{w - \bar{w}} = \beta (  \frac{4}{n} \sum_{l=1}^n \bX_{li}^2  - \frac{4}{n} \bX_{ki} )
	\end{split}
	\end{align}
	As $\sum_{l=1}^n \bX_{li}^2  -  \bX_{ki}$ is assumed to be non-zero, it is easy to see that LHS of equation~\eqref{eq:nonconvex beta}can be made greater than or less than $0$ by simply choosing appropriate $\beta \in \real$. Thus, optimization problem \eqref{eq:opt prob 2} is non-convex.
	%
	%
\end{proof}

\section{Real World Experiment}
\label{appndx:real world experiments}

We show applicability of our method by conducting experiments on Communities and Crime Data Set~\cite{redmond2002data} and Student Performance Data Set~\cite{cortez2008using}. 

\subsection{Communities and Crime Data Set}
\label{appndx:communities and crime}
This data set contains $1994$ samples with $122$ predictors which might have plausible connection to crime, and the attribute to be predicted (Per Capita Violent Crimes). In the preprocessing step, any predictors with missing values are removed and all the predictors and the attribute to be predicted are standardized to have zero mean and unit standard deviation. The preprocessed dataset contains $d=100$ predictors and $n=1994$ samples.

The optimization problem \eqref{eq:opt prob 2} is solved for $\lambda_n = 0.15$ and $\gamma$ is chosen to be $\frac{\max(\by) - \min(\by)}{2}$. As the problem is invex, any algorithm which converges to a stationary point can be used to solve the problem. We used an alternate optimization algorithm (See Appendix \ref{appndx:alternate optimization}) which converges to an optimal solution.

\paragraph{Main results.} Based on the support (non-zero entries) in the recovered $w$, we found that the following are the most important predictors of Per Capita Violent Crimes:
\begin{enumerate}
	\item PctHousNoPhone: percentage of occupied housing units without phone
	\item PctNotHSGrad: percentage of people 25 and over that are not high school graduates
	\item PctLess9thGrade: percentage of people 25 and over with less than a 9th grade education
	\item RentLowQ: rental housing - lower quartile rent  
\end{enumerate} 

We also recovered the hidden sensitive attribute with $816$ instances of positive bias ($z = +1$) with mean crime rate $0.8002$ and $1178$ instances of negative bias ($z = -1$) with mean crime rate $-0.5543$. By plotting data with two of the most important predictors (PctHousNoPhone, PctNotHSGrad), we clearly see the existence of two groups (Figure \ref{fig:realworld}). Our Mean Squared Error (MSE) is $0.0265$. \cite{chzhen2020fair} can be checked for comparison with other state-of-the-art methods ($12$ methods of $3$ different types) where only the Kernel Regularized Least Square method (MSE=$0.024 \pm 0.003$) and the Random Forests method (MSE=$0.020 \pm 0.002$) perform better than our method in terms of MSE but suffer heavily in terms of fairness. Other methods incur MSE in the range between $0.028 \pm 0.003$ to $0.041 \pm 0.004$.

\begin{figure*}[!ht]
	\centering
	\includegraphics[width=0.8\linewidth]{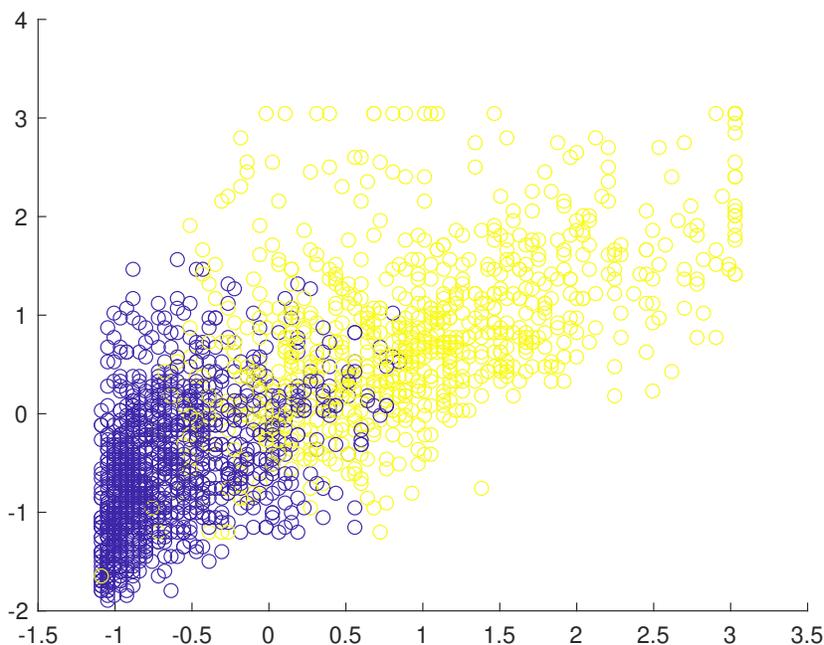}
	\caption{Clusters in Communities and Crime Dataset}
	\label{fig:realworld}
\end{figure*}  

\subsection{Student Performance Data Set}
\label{appndx:student performance data set}
This data set contains $649$ samples with $33$ demographic, social and school predictors and the attribute to be predicted (grade in the Portuguese Language course). The data set contains some categorical variables which are converted to numerical variables using dummy encoding (thus increasing the number of predictors). Two columns containing partial grades were removed from the data set. In the preprocessing step, all the predictors and the attribute to be predicted are standardized to have zero mean and unit standard deviation. The preprocessed dataset contains $d=39$ predictors and $n=649$ samples.

Similar to subsection~\ref{appndx:communities and crime}, the optimization problem \eqref{eq:opt prob 2} is solved for $\lambda_n = 0.15$ and $\gamma = \frac{\max(\by) - \min(\by)}{2}$.

\paragraph{Main results.} The following are the most important predictors of grades in the Portuguese Language course:
\begin{enumerate}
	\item school: student's school
	\item failures: number of past class failures
	\item higher: wants to take higher education
\end{enumerate} 

We also recovered the hidden sensitive attribute with $420$ instances of positive bias ($z = +1$) with mean grade $0.2305$ and $229$ instances of negative bias ($z = -1$) with mean grade $-0.4227$. Our Mean Squared Error (MSE) is $0.0494$. \cite{chzhen2020fair} can be checked for comparison with other state-of-the-art methods ($12$ methods of $3$ different types) where none of the methods performs better than our method in terms of MSE (range between $3.59 \pm 0.39$ to $5.62 \pm 0.52$).

\subsection{Discussion.} While our analysis identifies two groups with bias in both data sets, it cannot only be attributed to the most important recovered  predictors. Recall the ``red-lining'' effect~\cite{calders2010three} where there might be other correlated predictors which can facilitate indirect discrimination. For example: in the Communities and Crime data set, annual income could be correlated with PctHousNoPhone and similarly in the Student Performance data set, parents' educational qualification could be correlated with student's willingness to go for higher education. Our analysis does not ignore such factors. In fact, even after taking the red-lining effect into the consideration, our method is able to identify two groups with bias.


\begin{thebibliography}{10}
	\providecommand{\natexlab}[1]{#1}
	\providecommand{\url}[1]{\texttt{#1}}
	\expandafter\ifx\csname urlstyle\endcsname\relax
	\providecommand{\doi}[1]{doi: #1}\else
	\providecommand{\doi}{doi: \begingroup \urlstyle{rm}\Url}\fi
	
	\bibitem[Agarwal et~al.(2018)]{agarwal2018reductions}
	Agarwal, A., Beygelzimer, A., Dud{\'\i}k, M., Langford, J., and Wallach, H.
	\newblock A reductions approach to fair classification.
	\newblock \emph{International Conference on Machine Learning}, 2018.
	
	\bibitem[Agarwal et~al.(2019)]{agarwal2019fair}
	Agarwal, A., Dud{\'\i}k, M., and Wu, Z.~S.
	\newblock Fair regression: Quantitative definitions and reduction-based
	algorithms.
	\newblock \emph{International Conference on Machine Learning}, 2019.
	
	\bibitem[Backurs et~al.(2019)]{backurs2019scalable}
	Backurs, A., Indyk, P., Onak, K., Schieber, B., Vakilian, A., and Wagner, T.
	\newblock Scalable fair clustering.
	\newblock In \emph{International Conference on Machine Learning}, pp.\
	405--413. PMLR, 2019.
	
	\bibitem[Ben-Israel \& Mond(1986)]{ben1986invexity}
	Ben-Israel, A. and Mond, B.
	\newblock What is invexity?
	\newblock \emph{The ANZIAM Journal}, 28\penalty0 (1):\penalty0 1--9, 1986.
	
	\bibitem[Bera et~al.(2019)]{bera2019fair}
	Bera, S.~K., Chakrabarty, D., Flores, N.~J., and Negahbani, M.
	\newblock Fair algorithms for clustering.
	\newblock \emph{Neural Information Processing Systems}, 2019.
	
	\bibitem[Berk et~al.(2017)]{berk2017convex}
	Berk, R., Heidari, H., Jabbari, S., Joseph, M., Kearns, M., Morgenstern, J.,
	Neel, S., and Roth, A.
	\newblock A convex framework for fair regression.
	\newblock \emph{ACM International Conference on Knowledge Discovery and Data
		Mining, Workshop on Fairness, Accountability, and Transparency in Machine
		Learning}, 2017.
	
	\bibitem[Brennan et~al.(2009)]{brennan2009evaluating}
	Brennan, T., Dieterich, W., and Ehret, B.
	\newblock Evaluating the predictive validity of the compas risk and needs
	assessment system.
	\newblock \emph{Criminal Justice and Behavior}, 36\penalty0 (1):\penalty0
	21--40, 2009.
	
	\bibitem[Calders et~al.(2013)]{calders2013controlling}
	Calders, T., Karim, A., Kamiran, F., Ali, W., and Zhang, X.
	\newblock Controlling attribute effect in linear regression.
	\newblock \emph{2013 IEEE 13th international conference on data mining}, pp.\
	71--80, 2013.
	
	\bibitem[Chen et~al.(2019)]{chen2019proportionally}
	Chen, X., Fain, B., Lyu, L., and Munagala, K.
	\newblock Proportionally fair clustering.
	\newblock In \emph{International Conference on Machine Learning}, pp.\
	1032--1041. PMLR, 2019.
	
	\bibitem[Chierichetti et~al.(2017)]{chierichetti2017fair}
	Chierichetti, F., Kumar, R., Lattanzi, S., and Vassilvitskii, S.
	\newblock Fair clustering through fairlets.
	\newblock \emph{Neural Information Processing Systems}, 2017.
	
	\bibitem[Chzhen et~al.(2020)]{chzhen2020fair}
	Chzhen, E., Denis, C., Hebiri, M., Oneto, L., and Pontil, M.
	\newblock Fair regression with wasserstein barycenters.
	\newblock \emph{Neural Information Processing Systems}, 2020.
	
	\bibitem[Corbett-Davies et~al.(2017)]{corbett2017algorithmic}
	Corbett-Davies, S., Pierson, E., Feller, A., Goel, S., and Huq, A.
	\newblock Algorithmic decision making and the cost of fairness.
	\newblock In \emph{Proceedings of the 23rd acm sigkdd international conference
		on knowledge discovery and data mining}, pp.\  797--806, 2017.
	
	\bibitem[Daneshmand et~al.(2014)]{daneshmand2014estimating}
	Daneshmand, H., Gomez-Rodriguez, M., Song, L., and Schoelkopf, B.
	\newblock {Estimating Diffusion Network Structures: Recovery Conditions, Sample
		Complexity \& Soft-Thresholding Algorithm}.
	\newblock In \emph{International Conference on Machine Learning}, pp.\
	793--801, 2014.
	
	\bibitem[Donini et~al.(2018)]{donini2018empirical}
	Donini, M., Oneto, L., Ben-David, S., Shawe-Taylor, J.~S., and Pontil, M.
	\newblock Empirical risk minimization under fairness constraints.
	\newblock pp.\  2791--2801, 2018.
	
	\bibitem[Dwork et~al.(2012)]{dwork2012fairness}
	Dwork, C., Hardt, M., Pitassi, T., Reingold, O., and Zemel, R.
	\newblock Fairness through awareness.
	\newblock In \emph{Proceedings of the 3rd innovations in theoretical computer
		science conference}, pp.\  214--226, 2012.
	
	\bibitem[Feldman et~al.(2015)]{feldman2015certifying}
	Feldman, M., Friedler, S.~A., Moeller, J., Scheidegger, C., and
	Venkatasubramanian, S.
	\newblock Certifying and removing disparate impact.
	\newblock In \emph{proceedings of the 21th ACM SIGKDD international conference
		on knowledge discovery and data mining}, pp.\  259--268, 2015.
	
	\bibitem[Hanson(1981)]{hanson1981sufficiency}
	Hanson, M.~A.
	\newblock On sufficiency of the kuhn-tucker conditions.
	\newblock \emph{Journal of Mathematical Analysis and Applications}, 80\penalty0
	(2):\penalty0 545--550, 1981.
	
	\bibitem[Hardt et~al.(2016)]{hardt2016equality}
	Hardt, M., Price, E., and Srebro, N.
	\newblock Equality of opportunity in supervised learning.
	\newblock In \emph{Advances in neural information processing systems}, pp.\
	3315--3323, 2016.
	
	\bibitem[Haynsworth(1968)]{haynsworth1968determination}
	Haynsworth, E.~V.
	\newblock Determination of the inertia of a partitioned hermitian matrix.
	\newblock \emph{Linear algebra and its applications}, 1\penalty0 (1):\penalty0
	73--81, 1968.
	
	\bibitem[Hoffman et~al.(2018)]{hoffman2018discretion}
	Hoffman, M., Kahn, L.~B., and Li, D.
	\newblock Discretion in hiring.
	\newblock \emph{The Quarterly Journal of Economics}, 133\penalty0 (2):\penalty0
	765--800, 2018.
	
	\bibitem[Horn \& Johnson(2012)]{horn2012matrix}
	Horn, R.~A. and Johnson, C.~R.
	\newblock \emph{Matrix analysis}.
	\newblock Cambridge university press, 2012.
	
	\bibitem[Hsu et~al.(2012)]{hsu2012tail}
	Hsu, D., Kakade, S., Zhang, T., et~al.
	\newblock A tail inequality for quadratic forms of subgaussian random vectors.
	\newblock \emph{Electronic Communications in Probability}, 17, 2012.
	
	\bibitem[Huang \& Vishnoi(2019)]{huang2019stable}
	Huang, L. and Vishnoi, N.~K.
	\newblock Stable and fair classification.
	\newblock \emph{International Conference on Machine Learning}, 2019.
	
	\bibitem[Huang et~al.(2019)]{huang2019coresets}
	Huang, L., Jiang, S. H.-C., and Vishnoi, N.~K.
	\newblock Coresets for clustering with fairness constraints.
	\newblock \emph{Neural Information Processing Systems}, 2019.
	
	\bibitem[Kleinberg et~al.(2017)]{kleinberg2017inherent}
	Kleinberg, J., Mullainathan, S., and Raghavan, M.
	\newblock Inherent trade-offs in the fair determination of risk scores.
	\newblock \emph{Innovations in Theoretical Computer Science}, 2017.
	
	\bibitem[Kleinberg et~al.(2018)]{kleinberg2018human}
	Kleinberg, J., Lakkaraju, H., Leskovec, J., Ludwig, J., and Mullainathan, S.
	\newblock Human decisions and machine predictions.
	\newblock \emph{The quarterly journal of economics}, 133\penalty0 (1):\penalty0
	237--293, 2018.
	
	\bibitem[Knutson \& Tao(2001)]{knutson2001honeycombs}
	Knutson, A. and Tao, T.
	\newblock Honeycombs and sums of hermitian matrices.
	\newblock \emph{Notices Amer. Math. Soc}, 48\penalty0 (2), 2001.
	
	\bibitem[Pedreshi et~al.(2008)]{pedreshi2008discrimination}
	Pedreshi, D., Ruggieri, S., and Turini, F.
	\newblock Discrimination-aware data mining.
	\newblock In \emph{Proceedings of the 14th ACM SIGKDD international conference
		on Knowledge discovery and data mining}, pp.\  560--568, 2008.
	
	\bibitem[Pleiss et~al.(2017)]{pleiss2017fairness}
	Pleiss, G., Raghavan, M., Wu, F., Kleinberg, J., and Weinberger, K.~Q.
	\newblock On fairness and calibration.
	\newblock \emph{Neural Information Processing Systems}, 2017.
	
	\bibitem[Ravikumar et~al.(2007)]{ravikumar2007spam}
	Ravikumar, P., Liu, H., Lafferty, J., and Wasserman, L.
	\newblock {Spam: Sparse Additive Models}.
	\newblock In \emph{Proceedings of the 20th International Conference on Neural
		Information Processing Systems}, pp.\  1201--1208. Curran Associates Inc.,
	2007.
	
	\bibitem[Ravikumar et~al.(2010)]{ravikumar2010high}
	Ravikumar, P., Wainwright, M.~J., Lafferty, J.~D., et~al.
	\newblock High-dimensional ising model selection using l1-regularized
	logistic regression.
	\newblock \emph{The Annals of Statistics}, 38\penalty0 (3):\penalty0
	1287--1319, 2010.
	
	\bibitem[Ravikumar et~al.(2011)]{ravikumar2011high}
	Ravikumar, P., Wainwright, M.~J., Raskutti, G., Yu, B., et~al.
	\newblock High-dimensional covariance estimation by minimizing l1-penalized
	log-determinant divergence.
	\newblock \emph{Electronic Journal of Statistics}, 5:\penalty0 935--980, 2011.
	
	\bibitem[Subrahmanian \& Kumar(2017)]{subrahmanian2017predicting}
	Subrahmanian, V. and Kumar, S.
	\newblock Predicting human behavior: The next frontiers.
	\newblock \emph{Science}, 355\penalty0 (6324):\penalty0 489--489, 2017.
	
	\bibitem[Vershynin(2012)]{vershynin2012close}
	Vershynin, R.
	\newblock How close is the sample covariance matrix to the actual covariance
	matrix?
	\newblock \emph{Journal of Theoretical Probability}, 25\penalty0 (3):\penalty0
	655--686, 2012.
	
	\bibitem[Wainwright(2009)]{wainwright2009sharp}
	Wainwright, M.~J.
	\newblock Sharp thresholds for high-dimensional and noisy sparsity recovery
	using l1-constrained quadratic programming (lasso).
	\newblock \emph{IEEE transactions on information theory}, 55\penalty0
	(5):\penalty0 2183--2202, 2009.
	
	\bibitem[Wainwright(2019)]{wainwright2019high}
	Wainwright, M.~J.
	\newblock \emph{High-dimensional statistics: A non-asymptotic viewpoint},
	volume~48.
	\newblock Cambridge University Press, 2019.
	
	\bibitem[Zafar et~al.(2019)]{zafar2019fairness}
	Zafar, M.~B., Valera, I., Gomez-Rodriguez, M., and Gummadi, K.~P.
	\newblock Fairness constraints: A flexible approach for fair classification.
	\newblock \emph{Journal of Machine Learning Research}, 20\penalty0
	(75):\penalty0 1--42, 2019.
	
	\bibitem[Zemel et~al.(2013)]{zemel2013learning}
	Zemel, R., Wu, Y., Swersky, K., Pitassi, T., and Dwork, C.
	\newblock Learning fair representations.
	\newblock In \emph{International Conference on Machine Learning}, pp.\
	325--333, 2013.
	
	\bibitem[Zhao \& Gordon(2019)]{zhao2019inherent}
	Zhao, H. and Gordon, G.~J.
	\newblock Inherent tradeoffs in learning fair representations.
	\newblock \emph{Neural Information Processing Systems}, 2019.
	
	\bibitem[Zliobaite(2015)]{zliobaite2015relation}
	Zliobaite, I.
	\newblock On the relation between accuracy and fairness in binary
	classification.
	\newblock \emph{Interna- tional Conference on Machine Learning, Workshop on
		Fairness, Accountability, and Transparency in Machine Learning}, 2015.
	
	
	\bibitem[Calders(2013)]{calders2013controlling}
	Calders, Toon and Karim, Asim and Kamiran, Faisal and Ali, Wasif and Zhang, Xiangliang
	\newblock Controlling attribute effect in linear regression.
	\newblock \emph{IEEE 13th international conference on data mining}, 2013.
	
	\bibitem[Berk(2017)]{berk2017convex}
	Berk, Richard and Heidari, Hoda and Jabbari, Shahin and Joseph, Matthew and Kearns, Michael and Morgenstern, Jamie and Neel, Seth and Roth, Aaron
	\newblock A convex framework for fair regression.
	\newblock \emph{Fairness, Accountability, and Transparency in Machine Learning}, 2017.
	
	\bibitem[Agarwal(2019)]{agarwal2019fair}
	Agarwal, Alekh and Dudik, Miroslav and Wu, Zhiwei Steven
	\newblock Fair regression: Quantitative definitions and reduction-based algorithms.
	\newblock \emph{International Conference on Machine Learning}, 2019.
	
	\bibitem[Fitzsimons(2019)]{fitzsimons2019general}
	Fitzsimons, Jack and Al Ali, AbdulRahman and Osborne, Michael and Roberts, Stephen
	\newblock A general framework for fair regression.
	\newblock \emph{Entropy}, 2019.
	
	\bibitem[Calders(2010)]{calders2010three}
	Calders, Toon and Verwer, Sicco
	\newblock Three naive bayes approaches for discrimination-free classification.
	\newblock \emph{Data Mining and Knowledge Discovery}, 2010.
	
	\bibitem[Billionnet(2010)]{billionnet2007using}
	Billionnet, Alain and Elloumi, Sourour
	\newblock Using a mixed integer quadratic programming solver for the unconstrained quadratic 0-1 problem.
	\newblock \emph{Mathematical Programming}, 2007.
	
	\bibitem[Redmond(2002)]{redmond2002data}
	Redmond, Michael and Baveja, Alok
	\newblock A data-driven software tool for enabling cooperative information sharing among police departments.
	\newblock \emph{European Journal of Operational Research}, 2002.
	
	\bibitem[Cortez(2008)]{cortez2008using}
	Cortez, Paulo and Silva, Alice Maria Gon{\c{c}}alves
	\newblock Using data mining to predict secondary school student performance.
	\newblock \emph{EUROSIS-ETI}, 2008.
	
	
\end{thebibliography}
\end{document}